\DeclareMathOperator{\E}{\mathbb{E}}
\DeclareMathOperator*{\argmax}{arg\,max}
\newcommand{\kl}{\mathrm{kl}}
\newcommand{\cE}{\mathcal{E}}
\newcommand{\cT}{{\mathcal{T}}}
\newcommand{\EE}{\mathbb{E}}
\newcommand{\NN}{\mathbb{N}}
\newcommand{\PP}{\mathbb{P}}
\newcommand{\blambda}{\bm{\lambda}}
\newcommand{\bmu}{\bm{\mu}}
\DeclareMathOperator*{\ind}{\mathds{1}}
\begin{document}

\title{Best Arm Identification with Minimal Regret}

\author{\name Junwen Yang \email junwen\_yang@u.nus.edu\\
       \addr Institute of Operations Research and Analytics\\
       National University of Singapore\\
       117602, Singapore 
       \AND
       \name Vincent Y. F. Tan \email vtan@nus.edu.sg \\
       \addr Department of Mathematics\\
        Department of Electrical and Computer Engineering\\
        Institute of Operations Research and Analytics\\
       National University of Singapore\\
       119076, Singapore 
        \AND
       \name Tianyuan Jin{\textcolor{blue}{*}}
       \email tianyuan@u.nus.edu \\
       \addr  Department of Mathematics\\
       National University of Singapore\\
       119076, Singapore 
       }
       
\editor{Vianney Perchet}

\maketitle

\begingroup
\renewcommand{\thefootnote}{\fnsymbol{footnote}}
\footnotetext[1]{Corresponding author.}
\endgroup

\begin{abstract} %
  Motivated by real-world applications that necessitate responsible experimentation, we introduce the problem of \emph{best arm identification (BAI) with minimal regret}. This variant of the multi-armed bandit problem elegantly amalgamates two of its  most ubiquitous objectives: regret minimization and BAI. More precisely, the agent's goal is to identify the best arm with a prescribed confidence level $\delta$, while minimizing the cumulative regret up to the stopping time. Focusing on single-parameter exponential families of distributions, we leverage information-theoretic techniques to establish an instance-dependent lower bound on the expected cumulative regret. Moreover, we present an   impossibility result that underscores the tension between cumulative regret and sample complexity in fixed-confidence BAI. Complementarily, we design and analyze the Double KL-UCB algorithm, which achieves asymptotic optimality as the confidence level tends to zero. Notably, this algorithm employs two distinct confidence bounds to guide arm selection in a randomized manner. Our findings elucidate a fresh perspective on the inherent connections between regret minimization and BAI.
\end{abstract}

\begin{keywords}
multi-armed bandits, best arm identification, regret minimization
\end{keywords}

\section{Introduction}
\label{section_intro}
The multi-armed bandit model, originally introduced by \citet{thompson1933likelihood}, offers a straightforward yet powerful online learning framework that has undergone extensive investigation in the online decision making literature. 
In the domain of stochastic multi-armed bandits, the agent grapples with the intricate challenge of optimizing arm-pulling decisions to attain specific objectives.
Notably, the \emph{regret minimization} problem aims at maximizing the cumulative rewards (or equivalently, minimizing the cumulative regret) by delicately balancing the trade-off between exploration and exploitation \citep{agrawal1995sample, auer2002finite, bubeck2012regret}. On the other hand, the \emph{best arm identification} (BAI) problem focuses on achieving efficient  exploration of the optimal arm \citep{even2006action, audibert2010best, karnin2013almost, garivier2016optimal}. 

Nevertheless, despite being arguably two of the most prominent problems in the studies of multi-armed bandits, the performance metrics for BAI and regret minimization are distinct and merit rather different considerations in the design of algorithms and their analyses. In the context of fixed-confidence BAI, the agent seeks to identify the best arm with a given (high) confidence level. Here, the conventional emphasis lies solely on the {\em sample complexity}. In other words, the agent strives to utilize as few samples as possible to achieve the goal of identifying the best arm, without consideration of the cumulative regret. While widely recognized to be important, this problem formulation may not be applicable to all practical settings, as it assumes that the cost of pulling each arm is the same. In real-world scenarios, cognizant of the fact that the cost of pulling an arm is characterized by the suboptimality gap of that arm (which generally differs across arms), the agent might desire to prioritize the minimization of the {\em cumulative regret} during the interaction with the environment. This approach reflects a commitment to a more responsible experimental process.


Consider, for example, the field of clinical trials, where researchers aim to discover the most effective medical treatment. However, focusing solely on the quantity of trials conducted, which corresponds to sample complexity, overlooks a crucial aspect---the different impact of various treatments on patients. Consequently, the overall cumulative regret of the study emerges as a more ethically grounded and patient-centric performance metric.

In light of this disparity between existing theoretical formulations and practical considerations, we investigate the problem of \emph{best arm identification with minimal regret}, where the expected cumulative regret up to the stopping time serves as the performance metric of BAI. Through a comprehensive theoretical examination of this crucial problem, our work not only addresses a significant gap in existing bandit studies but also offers a fresh, novel, and somewhat surprising perspective on the underlying connection between regret minimization and BAI.

\paragraph{Main contributions.} Our main results and contributions are summarized as follows:
\begin{enumerate}[label = (\roman*)] 

\item In Section~\ref{section_lower}, we derive fundamental information-theoretic limits for the problem of BAI with minimal regret, as rigorously formulated in Section~\ref{section_setup}.  Specifically, utilizing the change-of-measure technique, we establish an instance-dependent lower bound on the expected cumulative regret for all feasible online BAI algorithms. Furthermore, we derive an impossibility result, as outlined in Theorem~\ref{theorem_impossible}. This result underscores the inevitability that achieving minimal cumulative regret in BAI necessitates a higher-order sample complexity.

\item In Sections~\ref{section_algo} and \ref{section_theory}, we propose and analyze the Double KL-UCB (or \textsc{DKL-UCB}) algorithm. Specifically, at each time step, \textsc{DKL-UCB} selects two candidate arms based on two meticulously designed Kullback--Leibler upper confidence bounds, and then employs an appropriate amount of randomization to pull one of the candidate arms.
We establish that \textsc{DKL-UCB} achieves asymptotic optimality, wherein its expected cumulative regret attains the lower bound as the error probability $\delta$ tends to zero. Additionally, we  demonstrate that its expected sample complexity is nearly optimal, up to a small multiplicative factor that scales as the square of   $\log\log(1/\delta)$.

\item In Section~\ref{section_compare_other},  we conduct a comparative analysis between our examined problem and two other classical problems: regret minimization and BAI with minimal samples, with a focus on their respective asymptotic performances.  
We find that the hardness parameter for regret minimization aligns seamlessly with that of our specific problem, warranting further investigation. Besides, we illustrate the significant distinctions that arise in the context of BAI when employing different performance metrics.

\end{enumerate}

\paragraph{Related work.} Both the problems of cumulative regret minimization and best arm identification have attracted considerable attention in the literature. The asymptotic lower bound for regret minimization was firstly established by \citet{lai1985asymptotically}, and further generalized by \citet{burnetas1996optimal}. Subsequently, a diverse range of policies, such as KL-UCB \citep{garivier2011kl, cappe2013kullback} and Thompson Sampling \citep{agrawal2012analysis, korda2013thompson}, have been meticulously developed to achieve asymptotic optimality across various scenarios. For an extensive survey of regret minimization algorithms, we refer to \citet{lattimore2020bandit}.

In this work, we focus on the fixed-confidence BAI problem \citep{even2002pac}. A considerable body of research has been dedicated to establishing upper and lower bounds on the sample complexity of fixed-confidence BAI \citep{kalyanakrishnan2012pac,gabillon2012best,jamieson2014lil,kaufmann2016complexity}, and the gap was ultimately closed in \citet{garivier2016optimal}. Specifically, this insightful work provides a complete characterization of the expected sample complexity of BAI {as the confidence level goes to zero}, and achieves the minimal sample complexity through a tracking strategy named Track-and-Stop.

While this paper focuses on the fixed-confidence setting, it is worth noting that BAI can also be explored under different paradigms: (1) the fixed-budget setting \citep{audibert2010best, karnin2013almost, carpentier2016tight, degenne2023existence}, which consists in minimizing error probability with a predetermined exploration budget; and (2)  minimization of the simple regret \citep{bubeck2009pure, lattimore2016causal, zhao2023revisiting}, which consists in minimizing the expected regret of the selected arm after exploration. These frameworks are generally non-interchangeable and  lead to different theoretical insights and performance guarantees.

Consistent with our intrinsic motivation, several works interpolate the objectives of regret minimization and BAI.
Among these, the work most closely related to ours is  \citet{degenne2019bridging}. In particular, a dual-objective algorithm $\text{UCB}_{\alpha}$ was introduced and analyzed, where the parameter $\alpha$ serves to externally balance between regret and sample complexity. However, we remark that  \citet{degenne2019bridging} only established achievability results, with no lower bounds provided. In contrast, our work focuses on the minimization of expected cumulative regret for BAI, and presents an algorithm that asymptotically attains the fundamental lower bound; hence, tightness and optimality of the results are ensured. 
Subsequently, akin to \citet{degenne2019bridging}, \citet{zhong2023achieving} quantified the trade-off between regret minimization and BAI in the fixed-budget setting. Furthermore, \citet{zhang2023fast} studied whether the asymptotically optimal algorithms for the problem of regret minimization can commit to the best arm quickly. Their perspective is motivated by the practical preference for quick commitment over continuous exploration, and their results are not directly comparable to ours. More recently, \citet{qin2024optimizing} considered a model where the agent can choose to stop experimenting adaptively before reaching the total time horizon. In this model, the objective is to  optimize concurrently both within-experiment and post-experiment cost functions. \citet{kanarios2024cost} studied cost-aware BAI, where each arm is associated with a cost distribution with support in $[\ell,1]$ where $\ell>0$, and the agent aims to minimize the cumulative cost up to the stopping time. A crucial difference of our framework is that the cost associated with the optimal arm is exactly zero in the context of cumulative regret, leading to fundamentally different theoretical results.

\section{Problem Setup and Preliminaries}
\label{section_setup}

\paragraph{Multi-armed bandits. }  We consider a conventional stochastic multi-armed bandit model with finitely many arms. Specifically, the arm set is denoted as $[K]:=\{1,2, \ldots, K\}$ and each arm $i$ is associated with a reward distribution $\nu_i$ with mean $\mu_i \in \mathbb R$. We adopt the assumption that the reward distributions $\nu_i$ for all $i\in [K]$ belong to a known canonical single-parameter exponential family. A formal introduction to this family will follow shortly, with a crucial takeaway being that the bandit instance can be fully characterized by the means of its arms, expressed as the vector $\bmu = (\mu_1, \mu_2, \ldots, \mu_K) \in \mathbb R ^K$. 
For convenience, we assume that arm $1$ is the unique\footnote{The uniqueness assumption is commonly applied in fixed-confidence BAI, as it is impossible to distinguish between two arms with identical means.} optimal arm, i.e., $1 = \argmax _{i\in [K]} \mu_i $. For each arm $i\in[K]$, we define $\Delta_i := \mu_1 - \mu_i$ as its \emph{suboptimality gap}. Furthermore, let $\mathcal{M}$ denote the collection of all bandit instances defined above.

At each time step $t\in\mathbb N$, the agent chooses an arm $A_t$ from the given arm set $[K]$. Subsequently, the agent observes a random reward $X_t$, which is drawn from its associated distribution $\nu_{A_t}$ and independent of the rewards obtained from previous time steps. Notably, subsequent arm selections, in general, depend on both prior arm choices and preceding rewards.

\paragraph{Best arm identification with minimal regret. } In the fixed-confidence setting where a confidence level $\delta \in(0,1)$ is given, the agent aims to identify the best arm with a probability of at least $1-\delta$ and minimal expected cumulative regret. This is achieved through sequential and adaptive arm pulling.

More formally, the agent employs an \emph{online algorithm} $\pi$ to decide the arm $A_t$ to pull at each time step $t$, to select a time $\tau_\delta$ to stop pulling arms, and to ultimately recommend $i_{\mathrm{out}}$ as the identified best arm to output. Let $\mathcal F_t :=\sigma(A_1,X_1,\ldots,A_t,X_t)$ denote the $\sigma$-field generated by the interaction history up to and including time $t$. Thus, the online algorithm $\pi$ consists of three integral components:
\begin{itemize}[itemsep=0pt,topsep=0pt,parsep=1pt]
    \item The \emph{sampling rule} selects arm $A_t$, which is $\mathcal F_{t-1}$-measurable;
    \item The \emph{stopping rule} determines a stopping time $\tau_\delta$, which is adapted to the filtration~$(\mathcal F_t)_{t=1}^{\infty}$;
    \item The \emph{recommendation rule} produces a candidate best arm $i_{\mathrm{out}}$, which is $\mathcal F_{\tau_\delta}$-measurable.
\end{itemize}

\begin{definition}
For a prescribed confidence level $\delta\in (0,1)$, an online best arm identification algorithm $\pi$ is said to be {\em $\delta$-PAC (probably approximately correct)} if, for all bandit instances $\bmu \in \mathcal{M}$, it terminates within a finite time almost surely and the probability of error is no more than $\delta$, i.e., $ \mathbb P _{\bmu} ( i_{\mathrm{out}} \neq 1 ) \le \delta$.
\end{definition}

Let $R(t) := \sum_{s=1}^{t} (\mu_1 - X_s)$ represent the cumulative regret up to any time step $t\in\mathbb N$. Then our overarching goal is to design and analyze a $\delta$-PAC online BAI algorithm  while minimizing its expected cumulative regret $\mathbb E_{\bmu}[R(\tau_\delta)]$. Essentially, we seek to address the following problem:
\begin{equation}
\label{equation_setup}
\begin{aligned}
    \min \quad  &\mathbb E_{\bmu}[R(\tau_\delta)] \\
    \text{s.t.}  \quad &\mathbb P _{\bmu}(\tau_\delta<\infty) = 1 \text{ and } \mathbb P _{\bmu} ( i_{\mathrm{out}} \neq 1 ) \le \delta
\end{aligned}
\end{equation}
where $R(\tau_\delta) = \sum_{t=1}^{\tau_\delta} (\mu_1 - X_t)$ and the minimization is over all online algorithms as defined above.

\begin{remark}
The problem under investigation is closely related to two classical challenges in the multi-armed bandit literature: cumulative regret minimization and (fixed-confidence) BAI with minimal samples. Although these problems are typically analyzed within the same bandit framework using exponential family distributions, their fundamentally different goals lead to distinct methodological approaches. A detailed comparative discussion can be found in Section~\ref{section_compare_other}.
\end{remark}

\paragraph{Exponential families. } The canonical single-parameter exponential families of distributions, which encompass a wide range of common distributions such as Gaussian (with fixed variance), Bernoulli, exponential, and Gamma (with fixed shape parameter), have been widely embraced in the bandit literature \citep{agrawal1995sample, cappe2013kullback, korda2013thompson, garivier2016optimal}. 
The distribution $\nu_{\theta}$ of one such family is absolutely continuous with respect to some reference measure (e.g., the Lebesgue measure) $\rho$ on $\mathbb{R}$, with the density function
\begin{equation*}
\frac{\mathrm{d} \nu_\theta}{\mathrm{d} \rho}(x)=\exp (\theta x-b(\theta)),
\end{equation*}
where {the log-partition function $b(\theta)=\log \left(\int_{\mathbb{R}} \exp(\theta x)\, \mathrm{d} \rho(x)\right)$ is infinitely differentiable}, and the parameter $\theta \in \varTheta:=\{\theta \in \mathbb{R}: b(\theta)<\infty\}$. Then it can be verified that the mean and variance of $\nu_{\theta}$ are equal to $b'(\theta)$ and $b''(\theta) > 0$, respectively. Therefore, the mapping between the parameter $\theta$ and the mean value of  $\nu_{\theta}$ is one-to-one. In other words, a distribution $\nu_{\theta}$ of the family can be uniquely characterized by its mean. Let $I \subseteq \mathbb{R}$
represent the collection of means of $\nu_{\theta}$ for all $\theta \in \varTheta$. Furthermore, we assume that the variances in the exponential family are bounded by a constant $V > 0$.\footnote{{This assumption can be relaxed by assuming that $V$ is an upper bound on the variance of reward distributions with means in the interval $[\min_{i \in [K]} \mu_i, \max_{i \in [K]} \mu_i]$. This is achieved by replacing the maximal inequality in Lemma \ref{lem:maximal-inequality} with the adaptive version provided in Lemma B.5 of \cite{jin2024optimal}. For clarity of exposition, we assume bounded variances within the exponential family.}} In this work, the concept of Kullback--Leibler (KL) divergence is employed extensively. For two distributions $\nu_{\theta}$ and $\nu_{\theta'}$ with means $\mu$ and $\mu'$, their KL divergence is denoted as $\mathrm{kl}\left(\mu, \mu' \right)$ and is explicitly given by:
\begin{equation*}
\mathrm{kl}\left(\mu, \mu' \right)=b (\theta' )-b(\theta)-b'(\theta) (\theta'-\theta ).
\end{equation*}
Several straightforward yet crucial properties include that $\mathrm{kl}(\mu, \mu') = 0$ if and only if $\mu = \mu'$, and with $\mu$ fixed, $\mathrm{kl}(\mu, \mu')$ is monotonically increasing  in $\mu'$ for all   $\mu' \geq \mu$.
Additional useful properties of the KL divergence between exponential family distributions are detailed in Appendix~\ref{appendix_auxiliary}. For a comprehensive introduction to exponential families, we refer to \citet{lehmann2006theory}.

\paragraph{Other notations.} Let $\mathcal P _{K} := \{x\in [0, 1]^K: \|x\|_1=1 \}$ denote the probability simplex in $\mathbb R ^ K$. For any $\mu,\mu'\in(0,1)$, the KL divergence between two Bernoulli distributions with means $\mu$ and $\mu'$ is denoted as $\mathrm{kl}_{\mathcal B}(\mu, \mu') :=\mu \log(\mu/\mu') + (1-\mu) \log((1-\mu)/(1-\mu'))$. 
For each arm $i \in [K]$, let $N_i(t) := \sum_{s=1}^t \mathbbm 1 \{ A_{s} = i \}$ and $\hat \mu_i (t):= \sum_{s=1}^t X_{s}\mathbbm 1 \{ A_{s} =i \} / N_i(t)$ denote its total number of  pulls and empirical estimate of the mean up to time $t$, respectively. 
Additionally, throughout this paper, we adopt standard asymptotic notations, including little o, big O, and little omega, always with respect to the confidence level $\delta$ (tending to zero). Specifically, $f(\delta) = \omega(g(\delta))$ means that $f$ grows strictly faster than $g$ as $\delta \to 0^+$, i.e., $\liminf_{\delta \to 0^+} |f(\delta) / g(\delta)| = \infty$.




\section{Lower Bound}
\label{section_lower}
In this section, we explore the fundamental limits for the problem of best arm identification (BAI) with minimal regret. Specifically, we establish an instance-dependent information-theoretic lower bound on the expected cumulative regret $\mathbb E_{\bmu}[R(\tau_\delta)]$ for all $\delta$-PAC BAI algorithms. Furthermore, we derive an impossibility result regarding the expected sample complexity $\mathbb{E}_{\bmu}[\tau_{\delta}]$ of any \emph{asymptotically optimal} BAI algorithm. These findings provide crucial insights for the
design of our algorithm, which will be introduced in Section~\ref{section_algo}.

We first state the lower bound on the expected cumulative regret $\mathbb E_{\bmu}[R(\tau_\delta)]$ in the following. 

\begin{theorem}[Information-theoretic lower bound]
\label{theorem_lowerbound}
For a fixed confidence level $\delta\in (0,1)$ and instance $\bmu \in \mathcal{M}$, any $\delta$-PAC BAI algorithm satisfies that
\begin{equation*}
\E_{\bmu}[ R(\tau_\delta)] \geq \mathrm I^{*}(\bmu) \mathrm{kl}_{\mathcal B}(\delta, 1-\delta)
\end{equation*}
where
\begin{equation}
\label{equation_Istar}
\mathrm I^{*}(\bmu) := \sum_{i=2}^{K} \frac {\Delta_i} {\mathrm{kl} \left(\mu_{i}, \mu_1 \right)}.
\end{equation}
Furthermore, 
\begin{equation}
\label{equation_asymptotic_lowerbound}
\liminf_{\delta\to 0} \frac{\E_{\bmu}[ R(\tau_\delta)]} {\log (1/\delta)} \ge \mathrm I^{*}(\bmu).
\end{equation}
\end{theorem}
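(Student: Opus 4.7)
The plan is to use the classical change-of-measure technique in the spirit of \citet{lai1985asymptotically} and \citet{kaufmann2016complexity}, coupled with the elementary decomposition of cumulative regret as a weighted sum of arm pulls. The key observation is that once we have Wald-type per-arm lower bounds on $\E_{\bmu}[N_i(\tau_\delta)]$, the factor $\Delta_i$ appearing in the regret decomposition will combine with $\mathrm{kl}(\mu_i,\mu_1)^{-1}$ to precisely produce the $\mathrm I^*(\bmu)$ term.

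First I would write the regret identity
\begin{equation*}
\E_{\bmu}[R(\tau_\delta)] = \sum_{i=1}^{K} \Delta_i\, \E_{\bmu}[N_i(\tau_\delta)] = \sum_{i=2}^{K} \Delta_i\, \E_{\bmu}[N_i(\tau_\delta)],
\end{equation*}
which follows from Wald's identity and $\Delta_1=0$. It then suffices to lower bound $\E_{\bmu}[N_i(\tau_\delta)]$ for every suboptimal arm $i\ge 2$. For each such $i$, I would introduce an alternative instance $\bmu^{(i,\epsilon)}\in\mathcal M$ that differs from $\bmu$ only in the $i$-th coordinate, with $\mu_i$ replaced by $\mu_1+\epsilon$ for small $\epsilon>0$. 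In $\bmu^{(i,\epsilon)}$ arm $i$ is the unique optimal arm, so any $\delta$-PAC algorithm must separate $\bmu$ from $\bmu^{(i,\epsilon)}$ in the sense that the probability of outputting arm $1$ under $\bmu^{(i,\epsilon)}$ is at most $\delta$ while that under $\bmu$ is at least $1-\delta$.

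Next I would invoke the transportation inequality (a direct consequence of the data-processing inequality for KL divergence on stopped $\sigma$-algebras, as in Lemma~1 of \citet{kaufmann2016complexity}): for any two instances $\bmu,\bmu'\in\mathcal M$ and any event $\cE\in\mathcal F_{\tau_\delta}$,
\begin{equation*}
\sum_{j=1}^{K}\E_{\bmu}[N_j(\tau_\delta)]\,\mathrm{kl}(\mu_j,\mu_j') \;\ge\; \mathrm{kl}_{\mathcal B}\bigl(\PP_{\bmu}(\cE),\PP_{\bmu'}(\cE)\bigr).
\end{equation*}
Taking $\bmu'=\bmu^{(i,\epsilon)}$ and $\cE=\{i_{\mathrm{out}}=1\}$, the right-hand side is at least $\mathrm{kl}_{\mathcal B}(1-\delta,\delta)=\mathrm{kl}_{\mathcal B}(\delta,1-\delta)$ by monotonicity of $\mathrm{kl}_{\mathcal B}$ in its arguments, while the left-hand side collapses to $\E_{\bmu}[N_i(\tau_\delta)]\,\mathrm{kl}(\mu_i,\mu_1+\epsilon)$ because $\bmu$ and $\bmu^{(i,\epsilon)}$ agree on all other coordinates. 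Sending $\epsilon\to 0^+$ and using continuity of $\mu'\mapsto\mathrm{kl}(\mu_i,\mu')$ for exponential families yields
\begin{equation*}
\E_{\bmu}[N_i(\tau_\delta)]\;\ge\;\frac{\mathrm{kl}_{\mathcal B}(\delta,1-\delta)}{\mathrm{kl}(\mu_i,\mu_1)}.
\end{equation*}
Plugging this back into the regret decomposition and summing over $i\ge 2$ produces the non-asymptotic bound. For the asymptotic statement in \eqref{equation_asymptotic_lowerbound}, I would simply note that $\mathrm{kl}_{\mathcal B}(\delta,1-\delta)=(1-2\delta)\log\frac{1-\delta}{\delta}\sim\log(1/\delta)$ as $\delta\to 0^+$, so dividing by $\log(1/\delta)$ and taking $\liminf$ recovers $\mathrm I^*(\bmu)$.

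There is no deep obstacle here: this argument is by now standard. The only subtleties are (i) verifying that the alternatives $\bmu^{(i,\epsilon)}$ genuinely lie in $\mathcal M$ (which is automatic because exponential-family means live in an open interval $I$ and $\mu_1+\epsilon\in I$ for small enough $\epsilon$), (ii) justifying the $\epsilon\to 0^+$ limit via continuity of the KL divergence, and (iii) using the monotonicity of $\mathrm{kl}_{\mathcal B}(p,q)$ in $q$ for $q\ge p$ to reach the clean $\mathrm{kl}_{\mathcal B}(\delta,1-\delta)$ factor rather than some less symmetric quantity. I would relegate any exponential-family continuity facts to the auxiliary appendix already cited as Appendix~\ref{appendix_auxiliary}.
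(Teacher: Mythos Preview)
Your proof is correct but takes a different, more elementary route than the paper. The paper applies the transportation inequality once to a general alternative $\blambda\in\mathrm{Alt}(\bmu)$ with $\lambda_1=\mu_1$, normalizes by the regret to turn $(\Delta_i\E_{\bmu}[N_i(\tau_\delta)]/\E_{\bmu}[R(\tau_\delta)])_{i\ge2}$ into a probability vector, and then solves the resulting sup--inf problem
\[
\sup_{w\in\mathcal P_K,\,w_1=0}\ \inf_{\blambda\in\mathrm{Alt}(\bmu),\,\lambda_1=\mu_1}\sum_{i=2}^K\frac{w_i\,\mathrm{kl}(\mu_i,\lambda_i)}{\Delta_i}=\mathrm I^*(\bmu)^{-1}
\]
explicitly. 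You instead apply the transportation inequality $K-1$ times, once per suboptimal arm with a single-coordinate perturbation $\bmu^{(i,\epsilon)}$, obtain per-arm bounds $\E_{\bmu}[N_i(\tau_\delta)]\ge \mathrm{kl}_{\mathcal B}(\delta,1-\delta)/\mathrm{kl}(\mu_i,\mu_1)$, and sum. Your argument is shorter and avoids the optimization machinery entirely; the paper's formulation is more elaborate but produces labeled intermediate inequalities that it later reuses verbatim in the proof of Theorem~\ref{theorem_impossible} to pin down the asymptotic allocation of every suboptimal arm under an asymptotically optimal algorithm. Your per-arm bounds contain the same information (tight lower bounds on each $N_i$ summing to the tight regret bound force each to be asymptotically tight), so nothing is lost---the difference is organizational rather than substantive.
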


The proof of Theorem~\ref{theorem_lowerbound} is deferred to Appendix~\ref{appendix_proof_theorem_lowerbound}, which employs the ubiquitous change-of-measure argument, dating back to \citet{chernoff1959sequential}, along with some simplifications using optimization techniques.

We refer to $\mathrm I^{*}(\bmu)$ as the {\em hardness parameter} for the problem of BAI with minimal regret. This quantity is a function of the suboptimality gaps and the KL divergences between arm $i$ and the best arm for all suboptimal arms $i>1$.
In the asymptotic scenario where the confidence level $\delta$ tends to zero, the expected cumulative regret $\mathbb E_{\bmu}[R(\tau_\delta)]$ is lower bounded by $\mathrm I^{*}(\bmu) {\log (1/\delta)}$. It is worth noting that this result is tight in view of the theoretical performance of our algorithm {\sc DKL-UCB} presented in Section~\ref{section_algo}. To formalize this, we introduce the concept of asymptotic optimality as follows.

\begin{definition}[Asymptotic optimality] 
\label{definition_optimal}
For the problem of BAI with minimal regret, a $\delta$-PAC algorithm is said to be asymptotically optimal if, for all bandit instances $\bmu \in \mathcal{M}$, its expected cumulative regret matches the lower bound asymptotically, i.e.,  
\begin{equation*}
\limsup_{\delta\to 0} \frac{\E_{\bmu}[ R(\tau_\delta)]} {\log (1/\delta)} \le \mathrm I^{*}(\bmu).
\end{equation*}
\end{definition}

Apparently, the cumulative regret of any asymptotically optimal BAI algorithm is on the order of $\Theta({\log (1/\delta)})$. One may wonder whether there exist other common properties shared by all asymptotically optimal algorithms. In particular, is it possible for the sample complexity (stopping time) of an asymptotically optimal algorithm to attain the same order of $\Theta({\log (1/\delta)})$? We answer this question in the negative through the subsequent impossibility result. 

\begin{theorem}[Impossibility result]
\label{theorem_impossible}
For the problem of BAI with minimal regret, any asymptotically optimal $\delta$-PAC algorithm satisfies  
\begin{equation*}
\mathbb{E}_{\bmu}[\tau_{\delta}] = \omega({\log (1/\delta)})
\end{equation*}
for all bandit instances $\bmu \in \mathcal{M}$.
\end{theorem}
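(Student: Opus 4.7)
The plan is to exploit asymptotic optimality to pin down the pull counts of all suboptimal arms at their minimal values, and then to apply a second, joint change-of-measure inequality that forces $\mathbb{E}_\bmu[N_1(\tau_\delta)]$ to grow strictly faster than $\log(1/\delta)$. The output follows at once from $\tau_\delta \geq N_1(\tau_\delta)$.

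First I would show that, for every $i\geq 2$ and every asymptotically optimal $\delta$-PAC algorithm,
\begin{equation*}
\mathbb{E}_\bmu[N_i(\tau_\delta)] \;=\; \frac{\log(1/\delta)}{\kl(\mu_i,\mu_1)} + o(\log(1/\delta)).
\end{equation*}
The lower direction is obtained by applying the change-of-measure argument used in the proof of Theorem~\ref{theorem_lowerbound} to the alternative that raises $\mu_i$ slightly above $\mu_1$ (and sending that perturbation to zero), which yields $\mathbb{E}_\bmu[N_i(\tau_\delta)] \geq \kl_{\mathcal B}(\delta,1-\delta)/\kl(\mu_i,\mu_1) = (1+o(1))\log(1/\delta)/\kl(\mu_i,\mu_1)$. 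The matching upper direction is pressed by asymptotic optimality: writing $\mathbb{E}_\bmu[R(\tau_\delta)] = \sum_{i\geq 2}\Delta_i\,\mathbb{E}_\bmu[N_i(\tau_\delta)]$ and combining the individual lower bounds with $\mathbb{E}_\bmu[R(\tau_\delta)]\leq(1+o(1))\mathrm{I}^*(\bmu)\log(1/\delta)$ shows that each nonnegative slack $\Delta_i(\mathbb{E}_\bmu[N_i(\tau_\delta)]-\log(1/\delta)/\kl(\mu_i,\mu_1))$ is at most the total slack, which is $o(\log(1/\delta))$.

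Next I would perform a second change-of-measure step. Let $\Delta_{\min}=\min_{j\neq 1}\Delta_j$, fix any suboptimal arm $i$ and any $\eta\in(0,\Delta_{\min})$, and set $z=\mu_1-\eta$. Consider the alternative instance obtained by replacing $(\mu_1,\mu_i)$ with $(z-\epsilon,\,z+\epsilon)$ while keeping all other means fixed. For all small $\epsilon>0$, arm $i$ is the unique best arm under this alternative, so the change-of-measure transportation inequality that underlies Theorem~\ref{theorem_lowerbound} gives
\begin{equation*}
\mathbb{E}_\bmu[N_1(\tau_\delta)]\,\kl(\mu_1,z-\epsilon) + \mathbb{E}_\bmu[N_i(\tau_\delta)]\,\kl(\mu_i,z+\epsilon) \;\geq\; \kl_{\mathcal B}(\delta,1-\delta).
\end{equation*}
Letting $\epsilon\downarrow 0$ by continuity of $\kl$ in its arguments, substituting the Step~1 estimate for $\mathbb{E}_\bmu[N_i(\tau_\delta)]$, and rearranging, I obtain, for every fixed $\eta\in(0,\Delta_{\min})$,
\begin{equation*}
\liminf_{\delta\to 0}\frac{\mathbb{E}_\bmu[N_1(\tau_\delta)]}{\log(1/\delta)} \;\geq\; \frac{1 - \kl(\mu_i,z)/\kl(\mu_i,\mu_1)}{\kl(\mu_1,z)}.
\end{equation*}

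Finally I would Taylor-expand around $z=\mu_1$. For exponential families, $\partial_z\kl(\mu,z)\vert_{z=\mu}=0$ and $\partial_z^2\kl(\mu,z)\vert_{z=\mu}$ is the (strictly positive) Fisher information at $\mu$, while $\partial_z\kl(\mu_i,z)\vert_{z=\mu_1}$ is strictly positive since $\mu_i<\mu_1$. Hence $\kl(\mu_1,z)$ is quadratic and $1-\kl(\mu_i,z)/\kl(\mu_i,\mu_1)$ is linear in $\eta=\mu_1-z$ near $\eta=0$, so the right-hand side above diverges like a positive constant times $\eta^{-1}$ as $\eta\downarrow 0$. Sending $\eta\downarrow 0$ after the $\delta$-limit therefore yields $\liminf_{\delta\to 0}\mathbb{E}_\bmu[N_1(\tau_\delta)]/\log(1/\delta)=\infty$, whence $\mathbb{E}_\bmu[\tau_\delta]=\omega(\log(1/\delta))$. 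The chief obstacle is orchestrating this double limit: the $o(\log(1/\delta))$ slack produced in Step~1 must be dissipated by taking $\delta\to 0$ for fixed $\eta$ before the $\eta^{-1}$ blow-up is activated, which is why it is essential that Step~1 delivers sharpness to additive order $o(\log(1/\delta))$ rather than merely multiplicative order $(1+o(1))$.
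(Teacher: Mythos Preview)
Your proposal is correct and follows the same two-stage skeleton as the paper: first pin down $\mathbb{E}_{\bmu}[N_i(\tau_\delta)]/\log(1/\delta)\to 1/\kl(\mu_i,\mu_1)$ for every suboptimal arm $i$, then use a joint change-of-measure on arms $1$ and $i$ to force $\mathbb{E}_{\bmu}[N_1(\tau_\delta)]=\omega(\log(1/\delta))$. Your Step~1 is essentially identical to the paper's derivation of~\eqref{eqn:N_i_conv}.

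The execution of the second stage differs. The paper argues by contradiction through the function $\Phi(x,y)=\inf_{\lambda}\big(x\,\kl(\mu_1,\lambda)+y\,\kl(\mu_j,\lambda)\big)$ and invokes its known properties (continuity, strict monotonicity, and $\lim_{x\to\infty}\Phi(x,y)=y\,\kl(\mu_j,\mu_1)$), concluding that a finite bound on $\liminf N_1/\log(1/\delta)$ would make the transportation inequality fail in the limit. You instead pick a concrete alternative $(\mu_1,\mu_i)\mapsto(z-\epsilon,z+\epsilon)$ with $z=\mu_1-\eta$, pass to the limit $\epsilon\downarrow 0$, and then Taylor-expand to exhibit an explicit lower bound that blows up like $\eta^{-1}$. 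Your route is more elementary and self-contained (no appeal to external lemmas about $\Phi$), at the price of carrying an extra limiting parameter $\eta$; the paper's route is cleaner once the properties of $\Phi$ are granted. Both are valid and ultimately encode the same phenomenon: near $\lambda=\mu_1$, the cost $\kl(\mu_1,\lambda)$ is quadratic while $\kl(\mu_i,\lambda)$ drops linearly below $\kl(\mu_i,\mu_1)$, so no finite multiple of $\log(1/\delta)$ pulls of arm~$1$ can sustain the transportation constraint.

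One minor remark: your closing sentence distinguishing ``additive order $o(\log(1/\delta))$'' from ``multiplicative order $(1+o(1))$'' is moot, since the leading term is $\Theta(\log(1/\delta))$ and the two formulations coincide. The genuine point you are making---that the $\delta\to 0$ limit must precede the $\eta\downarrow 0$ limit---is correct and is handled properly in your argument.
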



Theorem~\ref{theorem_impossible} demonstrates that the pursuit of identifying the optimal arm while minimizing the expected cumulative regret necessarily incurs a sample complexity on the order of $\omega (\log(1/\delta))$. This phenomenon can be understood through the inherent tension between learning the best arm and regret minimization in sequential decision-making.

On one hand, to reliably identify the best arm, the agent must gather sufficient statistical evidence to distinguish it from suboptimal alternatives. This fundamentally requires sampling all arms, including suboptimal ones, a number of times proportional to  $\log(1/\delta)$, in order to accumulate enough information to rule out incorrect arms. On the other hand, minimizing cumulative regret incentivizes the agent to pull suboptimal arms as infrequently as possible, and instead exploit the arm that appears to be the best.

Since pulling the optimal arm incurs zero regret, it is appealing for the agent to select it as frequently as possible to refine its estimate of the optimal mean. However, this creates an inefficiency from the perspective of BAI: repeatedly sampling the best arm yields diminishing returns in distinguishing it from competing arms, as the critical information lies in {\em comparisons} with suboptimal arms. As a result, even though the number of pulls of suboptimal arms is kept at the minimal logarithmic order, the agent must compensate by sampling the best arm significantly more often to satisfy the confidence requirement. This leads to a total sample complexity that is strictly larger than $\Theta(\log(1/\delta))$.

Importantly, the agent does not know the identity of the optimal arm a priori and must learn it adaptively from data. Thus, it is forced to operate under uncertainty, balancing exploration and exploitation without oracle knowledge. Theorem~\ref{theorem_impossible} formalizes the consequence of this tradeoff: any algorithm that achieves asymptotically optimal regret must necessarily incur a super-logarithmic number of total samples.

We now provide a slightly more technical proof sketch of Theorem \ref{theorem_impossible}, deferring the details to  Appendix~\ref{appendix_proof_theorem_impossible}. 

\paragraph{Proof sketch of Theorem~\ref{theorem_impossible}.}
Assume there exists an asymptotically optimal $\delta$-PAC algorithm with 
$\mathbb{E}_{\bmu}[\tau_\delta] = O(\log(1/\delta))$. Then, by  asymptotic optimality (Definition \ref{definition_optimal}),
\[
\mathbb{E}_{\bmu}[R(\tau_\delta)] = \sum_{i>1} \Delta_i \mathbb{E}_{\bmu}[N_i(\tau_\delta)]
\sim \mathrm{I}^*(\bmu)\log(1/\delta),
\]
which implies
\[
\mathbb{E}_{\bmu}[N_i(\tau_\delta)] \sim \frac{\log(1/\delta)}{\kl(\mu_i,\mu_1)}, 
\quad \forall\, i>1.
\]
Thus, suboptimal arms are sampled $\Theta(\log(1/\delta))$ times. 

By the  change-of-measure argument \citep[Lemma 1]{kaufmann2016complexity},
\begin{align}
    \inf_{\blambda \in \mathrm{Alt}(\bmu)} \sum_{i=1}^K    \mathbb{E}_{\bmu}[N_i(\tau_\delta)]\cdot\kl(\mu_i,\lambda_i) 
\geq \kl_{\mathcal{B}}(\delta,1-\delta),\label{eqn:change_of_meas}
\end{align}
where $\mathrm{Alt}(\bmu)$ is the set of alternative instances where arm $1$
 is not the best arm. Fixing $j>1$ and restricting the infimum to alternative instances where only arms $1$ and $j$ are modified, we define the function 
$
\Phi(x,y) := \inf_{\lambda \in I} \big(x\,\kl(\mu_1,\lambda) + y\,\kl(\mu_j,\lambda)\big)
$.
Through the simplification of the infimum in~\eqref{eqn:change_of_meas}, we obtain
\[
\Phi\bigg(
\underbrace{\frac{\E_\mu[N_1(\tau_\delta)]}{\log(1/\delta)}}_{=: \ x},\,
\underbrace{\frac{1}{\kl(\mu_j\mu_1)}}_{=:\ y}
\bigg) \ge 1.
\]
However, for any finite $x$, we have $\Phi(x,y) < y \kl(\mu_j,\mu_1) = 1$, which forces $x \to +\infty$.  Hence 
$\E_\mu[N_1(\tau_\delta)] = \omega(\log(1/\delta))$, implying 
$\E_\mu[\tau_\delta] = \omega(\log(1/\delta))$.


\section{The Double KL-UCB Algorithm}
\label{section_algo}
In this section, we introduce a conceptually simple algorithm, namely Double KL-UCB (or \textsc{DKL-UCB}), to identify the best arm with minimal regret. The pseudocode for \textsc{DKL-UCB} is presented in Algorithm~\ref{algo1} and further elucidated in the subsequent discussion.

\begin{algorithm}[t]
	\caption{Double KL-UCB (or \textsc{DKL-UCB})}
	\label{algo1}
	\hspace*{0.02in} {\bf Input:} Arm set $[K]$ and confidence level $\delta \in (0, 1).$
	\begin{algorithmic}[1]
	\STATE Sample each arm once, and set $t= K$.
        \REPEAT
        \STATE Update $\hat \mu_i (t)$ and $N_i(t)$ for all $ i \in [K]$, and increment $t\leftarrow t+1$.
        \STATE For each arm $i\in[K]$, compute the quantities
        \begin{equation}
        \label{equation_ucbf}
        U^f_i(t)=\sup \left\{\mu \in I: {\mathrm{kl}\left(\hat{\mu}_i(t-1), \mu\right)}  \leq \frac{f(t)}{N_i(t-1)}\right\}
        \end{equation}
        and 
        \begin{equation}
        \label{equation_ucbg}
        U^g_i(t)=\sup \left\{\mu \in I: {\mathrm{kl}\left(\hat{\mu}_i(t-1), \mu\right)}  \leq \frac{g(\delta,t)}{N_i(t-1)}\right\}.
        \end{equation}
        \STATE Let 
        \begin{equation}
        A_{t}^{f}=\argmax_{i\in[K]} U^f_{i}(t)\quad \mbox{and} \quad A_{t}^{g}=\argmax_{i\in [K]\setminus \{A_t^f\}} U^g_{i}(t).\label{eqn:defAs} 
        \end{equation}
        \STATE Flip a coin with bias (probability of heads)  $\beta(\delta)$, as defined in~\eqref{equation_beta}. 
        \STATE If the outcome is heads, then sample $A_t = A_t^f$; otherwise, sample $A_t = A_t^g$.
        \UNTIL{$L^g_{A_{t}^f}(t)> U^g_{A_t^g}(t)$} \hfill \COMMENT{See the definition of $L_i^g(t)$ in~\eqref{equation_lcbg}}
	\end{algorithmic}
	\hspace*{0.02in} {\bf Output:}  The candidate best arm $i_{\mathrm{out}} = A_t^f$.
\end{algorithm}

As its name suggests, our algorithm \textsc{DKL-UCB} leverages two types of Kullback--Leibler upper confidence bounds (UCBs) to guide arm sampling decisions. During the initialization phase, each arm is sampled exactly once. Subsequently, at each time $t$, we compute two UCBs, $U^f_i(t)$ and $U^g_i(t)$, for each arm $i\in[K]$. These UCBs, defined in Equations~\eqref{equation_ucbf} and~\eqref{equation_ucbg}, are  indexed by exploration functions:
\begin{equation*}
f(t) = 3 \log t \quad \text{and} \quad g(\delta, t) = \log\left(\frac{2Kt^2}{\delta}\right).
\end{equation*}
For ease of presentation, we refer to $U^f_i(t)$ and $U^g_i(t)$ as $f$-UCB and $g$-UCB, respectively. 
In our algorithm as well as its analysis, two candidate arms are of particular significance: let $A_{t}^{f}$ denote the arm with the highest $f$-UCB, and $A_{t}^{g}$ represent the arm with the highest $g$-UCB excluding $A_{t}^{f}$; see Equation~\eqref{eqn:defAs}. We then proceed to sample either $A_{t}^{f}$ or $A_{t}^{g}$ in a randomized fashion. Specifically, we toss a biased coin, with the probability of landing heads given by
\begin{equation}
\label{equation_beta}
\beta(\delta) = 1 - \min\left\{ \frac{1}{\log \log(1/\delta)}, \frac{1}{2} \right\}.
\end{equation}
If the outcome is heads, we pull the arm $A_t = A_t^f$; otherwise, we pull $A_t = A_t^g$.

Due to its objective of best arm identification in the fixed-confidence setting, our algorithm needs to actively stop when the confidence level $\delta$ is reached. For the stopping rule, we utilize a variant of the lower confidence bound, based on the exploration function $g(\delta, t)$. In particular, for each arm $i\in[K]$, we define its $g$-LCB as follows:
\begin{equation}
\label{equation_lcbg}
L^g_i(t)=\inf \left\{\mu \in I: {\mathrm{kl}\left(\hat{\mu}_i(t-1), \mu\right)}  \leq \frac{g(\delta,t)}{N_i(t-1)}\right\}.
\end{equation}
Ultimately, our algorithm terminates when the $g$-LCB of $A_{t}^f$ exceeds the $g$-UCB of $A_{t}^g$. In other words, it stops at the stopping time
\begin{equation}
\label{equation_stoppingrule}
    \tau_\delta = \inf\left\{t \in \mathbb{N} : L^g_{A_{t}^f}(t) > U^g_{A_t^g}(t)\right\},
\end{equation}
and recommends $A_{\tau_\delta}^f$ as the identified best arm $i_{\mathrm{out}}$.

\begin{remark} \label{rmk:dkl_ucb}
It is worth discussing some similarities between our algorithm \textsc{DKL-UCB} and other algorithms in the multi-armed bandit literature. If our algorithm consistently opts to pull the arm $A_{t}^{f}$, then its sampling rule  mirrors that of  \textsc{KL-UCB}, a celebrated algorithm for cumulative regret minimization \citep{cappe2013kullback}, up to the choice of the exploration function $f(t)$. 

Moreover, concerning the randomized dynamics within the sampling rule, our algorithm shares commonalities with Top-Two Thompson Sampling (\textsc{TTTS}), which is designed for fixed-confidence best arm identification \citep{russo2020simple}. Specifically, both algorithms incorporate randomness in selecting one of two candidate arms. However, apart from how the two candidates (called leader and challenger in the \textsc{TTTS} literature)  are determined, another significant distinction lies in the selection probability. In \textsc{TTTS}, each candidate is pulled with a fixed probability, which is independent of the confidence level $\delta$. In contrast, in our algorithm, as $\delta$ approaches zero, the coin bias $\beta(\delta)$ tends toward one. Consequently, the arm $A_{t}^{f}$ dominates the proportion of pulls. In this respect, the alternative candidate $A_{t}^{g}$ serves as a minor yet critical supplement to the arm sampling rule.
\end{remark}

\section{Theoretical Analysis of {\sc DKL-UCB}}
\label{section_theory}

In this section, we theoretically analyze the performance of our algorithm \textsc{DKL-UCB} from multiple perspectives. Our main results are presented in Theorem~\ref{theorem_upper} below. The complete proof of Theorem~\ref{theorem_upper} are deferred to Appendix~\ref{appendix_proof_theorem_upper}. In Section~\ref{section_theory_discussion}, we discuss the primary technical challenges and provide a proof sketch of Theorem~\ref{theorem_upper}.

\subsection{Main Results}
\begin{theorem}
\label{theorem_upper}
For every confidence level $\delta \in (0, 1)$,  the \textsc{DKL-UCB} algorithm (Algorithm~\ref{algo1}) has the following properties. For all bandit instances $\bmu \in \mathcal{M}$, it guarantees that
\begin{align*}
    \PP(i_{\rm{out}}\neq 1)  &\leq \delta.  \qquad \tag{\textsc{DKL-UCB} is $\delta$-PAC} 
    \end{align*}
 Furthermore, when $\delta\to0$,   the regret of  \textsc{DKL-UCB} satisfies
 \begin{align*}
\limsup_{\delta\to 0} \frac{\E_{\bmu}[ R(\tau_\delta)]} {\log (1/\delta)}& \le \mathrm I^{*}(\bmu),  \qquad \qquad \tag{Regret Bound}
\end{align*}
and
\begin{align*}
\lim_{\delta\to 0}  \frac {\E_{\bmu}[\tau_\delta]} {\log (1/\delta)\cdot (\log \log(1/\delta))^2 } = 0. \tag{Sample Complexity}
\end{align*}
\end{theorem}

Theorem~\ref{theorem_upper} first guarantees the correctness of our algorithm: the probability of recommending a suboptimal arm is no more than $\delta$. 
As for the expected cumulative regret, by comparing the instance-dependent upper bound in Theorem~\ref{theorem_upper} with the corresponding lower bound in Theorem~\ref{theorem_lowerbound}, it is evident that our algorithm \textsc{DKL-UCB} is asymptotically optimal for the problem of BAI with minimal regret. Particularly, for all bandit instances $\bmu \in \mathcal{M}$, the expected regret of our algorithm satisfies the following limiting behaviour:
\begin{equation*}
\lim_{\delta\to 0} \frac{\E_{\bmu}[ R(\tau_\delta)]} {\log (1/\delta)} = \mathrm I^{*}(\bmu).
\end{equation*}

Furthermore, regarding the expected sample complexity,  in conjunction with the impossibility result in Theorem~\ref{theorem_impossible}, Theorem~\ref{theorem_upper} shows that our algorithm satisfies 
\begin{equation}
    \mathbb{E}_{\bmu}[\tau_{\delta}] = \omega({\log (1/\delta)}) \cap o \left(\log (1/\delta)\cdot (\log \log(1/\delta))^2\right).\label{eqn:samp_comp}
\end{equation}
Even though the principal performance metric for our problem is not the expected sample complexity, the result  in~\eqref{eqn:samp_comp} demonstrates that the sample complexity of our algorithm {\sc DKL-UCB} is close to the fundamental barrier in Theorem~\ref{theorem_impossible}, up to a small multiplicative factor of the order of $(\log \log(1/\delta))^2$. Consequently, our algorithm is nearly optimal in terms of sample complexity while achieving optimality in the cumulative regret.



\subsection{Technical Challenges and Proof Outline}
\label{section_theory_discussion}

{In this section, we discuss the key technical challenges and outline the proof of Theorem \ref{theorem_upper}. Readers primarily interested in the theoretical and experimental results may choose to skip this subsection on their first reading and proceed directly to Section \ref{section_compare_other}.}

Our instance-dependent lower bound in Theorem~\ref{theorem_lowerbound} suggests that the optimal regret upper bound could be $\mathrm{I}^*(\bm{\mu})\log(1/\delta)+o(\log(1/\delta))$ as the confidence level $\delta$ tends to zero. In contrast, the optimal regret bound for cumulative regret minimization is $\mathrm{I}^*(\bm{\mu})\log(T)+o(\log T)$ as the time horizon $T$ tends to infinity, which indicates that an optimal regret minimization algorithm pulls each suboptimal arm $O(\log T)$ times over time $T$ (see \eqref{equation_asymptotic_lowerbound_regretmin} in Section~\ref{section_compare_other}). Thus, one natural strategy for our problem is to run the \textsc{KL-UCB} algorithm and check whether the best arm can be reliably identified over time. However, the sample complexity of such an algorithm is easily seen to be $\Theta(1/\delta)$, which is significantly greater than $\Theta(\log(1/\delta))$. 

To address this problem, we introduce {\sc DKL-UCB}, which utilizes {\em two} UCB indices, $f$-UCB and $g$-UCB, as defined in  \eqref{equation_ucbf} and \eqref{equation_ucbg}, respectively. In our approach, $f$-UCB, with $f(t) = 3\log t$, is primarily employed for exploring the best arm. This choice differs from the standard \textsc{KL-UCB} algorithm~\citep{cappe2013kullback} where the exploration function $f(t)$ is approximately $\log t$. 
While one might suspect that $f(t) = 3\log t$ could result in suboptimal asymptotic regret for the regret minimization task due to the potential non-optimality of the constant $3$, the inflated choice of $f(t)$ does not lead to suboptimal regret for our purpose.
Roughly speaking, this is because the stopping time $\tau_{\delta}$ satisfies $\mathbb{E}_{\bmu}[\tau_\delta] = O(\log^2 (1/\delta))$, and hence the regret of $f$-UCB (when $A_t = A_t^f$) within $O(\log^2 (1/\delta))$ time steps is of order $o(\log (1/\delta))$.
Therefore, the inflated choice of $f(t)$ does not hinder the asymptotic optimality properties of  {\sc DKL-UCB} for the unique problem under consideration.


\begin{figure}[h]
\begin{picture}(450,180)
\put(430,110){$t$}
\put(0,100){\vector(1,0){430}}
\put(5,95){\line(0,1){10}}
\put(5,87){$0$}
\put(100,95){\line(0,1){10}}
\put(100,87){$\psi$}
\put(200,95){\line(0,1){10}}
\put(200,85){$T_0$}
\put(300,95){\line(0,1){10}}
\put(300,87){$t_0$}
\put(320,95){\line(0,1){10}}
\put(320,87){$t_1$}
\put(350,95){\line(0,1){10}}
\put(350,87){$t_2$}
\put(362,87){$\cdots$}
\put(400,95){\line(0,1){10}}

\multiput(201,95)(5,0){19}{\line(1,1){10}} 

\put(100,150){\vector(0,-1){40}}
\put(80,172){$\forall\, t\ge\psi$}
\put(80,157){$U_1^f\ge\mu_1-\epsilon$}

\put(200,150){\vector(0,-1){40}}
\put(162,172){$\forall\, t\ge T_0$}
\put(162,157){$L_1^g\ge\mu_1-\epsilon$}

\put(400,150){\vector(0,-1){40}}
\put(400,87){$t_r$}
\put(355,172){\textsc{DKL-UCB} stops here}
\put(355,157){w.p.\ $\ge 1-t_r^{-2}$}

\put(255,150){\vector(0,-1){47}}
\put(235,172){With high prob.}
\put(235,157){\textsc{DKL-UCB} stops here}

\put(30,30){\vector(1,1){60}}
\put(0,18){$\mathbb{E}[\psi]=o\big(\log(1/\delta)\big)$}

\put(130,30){\vector(1,1){60}}
\put(110,18){$T_0\approx\log(1/\delta))\big(\log\log(1/\delta)\big)^{3/4}$}
\put(110,3){$|\{t:A_t=A_t^g\}|= o\big(\log(1/\delta)\big)$}
\end{picture}
\caption{{Illustration of the proof.}}
\label{fig:proof_illus}
\end{figure}

{In our analytical framework, we partition the time horizon into intervals of exponentially growing lengths; see Figure~\ref{fig:proof_illus} for an illustration.} Let $\gamma = \log\log (1/\delta)$, $\epsilon=\gamma^{-1/4}$ and $h(\delta)=\log (1/\delta)\cdot \gamma/\epsilon= \log (1/\delta)\cdot (\log \log(1/\delta))^{5/4}$. Then the partition is defined by the time points  $t_r = 2^{r}h(\delta)$, with each subinterval taking the form $(t_r, t_{r+1}]$, and an additional initial interval $(0, t_0]$.  Since the best arm is pulled for a sufficient number of times through $f$-UCB, we can show that $\PP(L_{1}^{g}(t_r)\geq \mu_1-\epsilon)\geq 1-1/t_{r}^2$. In view of this, to check that the stopping rule in~\eqref{equation_stoppingrule} is fulfilled, it suffices to additionally check that, with high probability, 
$U_{i}^{g}(t_r) < \mu_1 - \epsilon$ for all suboptimal arms $i\in[K]\setminus \{1\}$.

To speed up the exploration of suboptimal arms, we incorporate supplementary randomized exploration using $g$-UCB, which is specifically tailored for the suboptimal arms. Notably, the exploration function $g(\delta, t)$ associated with $g$-UCB {depends on} the confidence level $\delta$. In our algorithm, we exclude $A_t^f$, the arm with the highest $f$-UCB, when determining $A_t^g$, and pull the arm $A_t^g$ with a probability of $1- \beta(\delta)=\min\{1/\gamma,1/2\}$. At time $t_r$, the expected number of times that we pull the arm $A_{t}^{g}$ is $t_r (1-\beta(\delta))=\omega(2^{r}\log (1/\delta))$, which   ensures that  $\max_{i>1} U_{i}^{g}(t_r)< \mu_1-\epsilon$ with probability at least $1-1/t_{r}^2$. Furthermore, given that $L_{1}^{g}(t_r) \ge \mu_1 - \epsilon$, the probability that the algorithm terminates after $t_r$ is at most $1/t_r^2$. 
This proposition is formally established in Lemma~\ref{lem:I1-5}. In its proof detailed in Appendix~\ref{subsection_important_lemma}, we introduce four events that are {\em a priori} dependent, but we   meticulously decouple their complex interdependencies.
Consequently, the regret incurred from pulling arms beyond time $t_0$ can be bounded such that optimality is ensured.

{Here, we provide further insights into the role of the parameter $\beta(\delta)$. This parameter is crucial in accelerating the learning process and thereby reducing the sample complexity. As $\delta$ approaches 0, $\beta(\delta)$ approaches 1, indicating that the algorithm \textsc{DKL-UCB} increasingly favors $A_t^f$ over $A_t^g$. This aligns with the lower bound intuition, where: (1) to achieve asymptotic optimality, the optimal arm must be selected $\omega(\log(1/\delta))$ times; and (2) for large values of $t$, when $A_t = A_t^f$, the algorithm primarily chooses the optimal arm. Conversely, with probability $1 - \beta(\delta)$, the algorithm selects $A_t = A_t^g$, which is likely a suboptimal choice for large $t$. Therefore, if $t$ exceeds $\log(1/\delta)\cdot(\log \log(1/\delta))^2$, \textsc{DKL-UCB} will, with high probability, select suboptimal arms at least $\log(1/\delta)\cdot(\log \log(1/\delta))^2 \cdot (1 - \beta(\delta)) = \omega(\log(1/\delta))$ times. This guarantees sufficient exploration of suboptimal arms before reaching the time $ \log(1/\delta)\cdot(\log \log(1/\delta))^2$.}

To  bound the total expected cumulative regret, a key theoretical challenge arises from the regret incurred before $t_0$, i.e., $\E_{\bmu}[ R(t_0)]$.  Since $t_0 = \omega(\log (1/\delta))$, the regret bound of $\E_{\bmu}[ R(t_0)]$ could be suboptimal if not appropriately managed. Let $\psi$ represent the smallest time step $t$ such that $U_{1}^{f}(t)\geq \mu_1-\epsilon$, i.e., $\psi:=\inf\{ t: U_{1}^{f}(t)\geq \mu_1-\epsilon\}.$  
By leveraging a novel concentration inequality for exponential families (Lemma \ref{lem:Rt0-1} in Appendix \ref{appendix_proof_theorem_upper}), we can establish that $\EE_{\bmu}[\psi]=O(1/\epsilon^2) = o(\log (1/\delta))$.

Our analysis relies on a carefully chosen time step $T_0$, ensuring that, with high probability, the optimal arm is well-estimated at time $T_0$ (i.e., $L_{1}^{g}(T_0) \ge \mu_1 - \epsilon$) and the regret from suboptimal arms remains within an acceptable range. Specifically, we define $T_0$ as follows:
\begin{align}
T_0 = \min \Bigg\{ \psi + \underbrace{\sum_{i=2}^K \sum_{t=\psi+1}^{t_0} \mathbf{1} \left\{ A_t = i, \mathrm{kl}(\hat{\mu}_i(t-1), \mu_1 - \epsilon) \leq \frac{f(t_0)}{N_i(t-1)} \right\}}_{Q} \, + \, \gamma \epsilon \log \frac{1}{\delta}, \,  t_0 \Bigg\}. \nonumber
\end{align}
Here we consider the nontrivial case where $T_0 < t_0$. Since $T_0$ exceeds $\psi$, we have $U_{1}^f(T_0) > \mu_1 - \epsilon$. Besides, according to the dynamics of our algorithm, the term $Q$ represents an upper bound on the number of pulls of the suboptimal arms through $f$-UCB. That is, $Q$ bounds the number of times $A_t =A_t^f \in [K] \setminus \{1\}$ for $t$ between $\psi+1$ and $t_0$. Therefore, the $\gamma \epsilon \log (1/\delta)$ time steps within the definition of $T_0$ encompass either pulls of the optimal arm or the time steps when $A_t = A_{t}^{g}$. 
Utilizing a concentration bound, we can deduce that the number of times when $A_t = A_{t}^{g}$ within the $\gamma \epsilon \log (1/\delta)$ time steps is at most $ 2\epsilon \log (1/\delta)$, with high probability.
As a result, by time $T_0$, the algorithm will have pulled the optimal arm a sufficient number of times such that $L_{1}^{g}(T_0) \ge \mu_1 - \epsilon$ with  probability   at least $1-1/t_0$.

Finally, the number of pulls of the suboptimal arms before time $T_0$ is of order $O(\psi+Q+\epsilon\log(1/\delta)) = o(\log(1/\delta))$, which does not result in suboptimal regret. Additionally, Lemma~\ref{lem:Rt0-4} demonstrates that from $T_0+1$ to $t_0$, the number of pulls of any suboptimal arm $i \in [K]\setminus\{1\}$ does not exceed 
$g(\delta, t_0)/\mathrm{kl}(\mu_i + \epsilon, \mu_1 - \epsilon)$
with a probability of at least $1-1/t_0$. Altogether, we can bound the regret incurred before $t_0$ optimally.

\vspace{3pt}
\section{Discussion: Comparisons to Related Problems}
\label{section_compare_other}
In this following, we compare our investigated problem---best arm identification with minimal regret---with two other extensively studied problems in the bandit literature, focusing on their respective asymptotic performances.

\paragraph{Cumulative regret minimization. } The objective of this problem is to maximize the expected cumulative rewards up to the time horizon $T \in \mathbb N$, which might be either known or unknown to the agent a priori. Equivalently, the agent aims at minimizing the expected cumulative regret $\mathbb E_{\bmu}[R(T)]$.

In the seminal work by \citet{lai1985asymptotically}, it was proven that any \emph{consistent}\footnote{A regret minimization algorithm is said to be consistent, if for all bandit instances $\bmu \in \mathcal{M}$ and  all $\alpha>0$, ${\E_{\bmu}[ R(T)]}=o(T^\alpha)$.} regret minimization algorithm must satisfy that for all bandit instances $\bmu \in \mathcal{M}$,
\begin{equation}
\label{equation_asymptotic_lowerbound_regretmin}
\liminf_{T\to \infty} \frac{\E_{\bmu}[ R(T)]} {\log T} \ge \mathrm I^{*}(\bmu),
\end{equation}
where  $\mathrm I^{*}(\bmu)$ is the same as the   quantity  defined in Equation~\eqref{equation_Istar}.

Although BAI with minimal regret and cumulative regret minimization represent distinctly different tasks, the coincidence regarding $\mathrm I^{*}(\bmu)$ suggests profound connections between them. Specifically, the asymptotic lower bound presented in \eqref{equation_asymptotic_lowerbound} for BAI with minimal regret follows from the $\delta$-PAC nature of the algorithm, while the asymptotic lower bound in \eqref{equation_asymptotic_lowerbound_regretmin} relies on the assumption of consistency. This coincidence might be attributed to the shared consideration of cumulative regret in both problems, and represents an optimal equilibrium of regret among suboptimal arms. We defer a more in-depth investigation into this intriguing phenomenon as future work. In the current study, this insightful finding is leveraged in the design of our algorithm; see Section~\ref{section_theory_discussion}.


\paragraph{Best arm identification with minimal samples. } 
In contrast to our problem setup, this problem shares the common goal of identifying the best arm with a prescribed confidence level $\delta\in(0,1)$, but focuses on a distinct performance metric---the expected sample complexity $\mathbb E_{\bmu}[\tau_\delta]$. Using our  notation, the problem can be formally expressed as follows:
\begin{equation}
\label{equation_setup_minimalsample}
\begin{aligned}
    \min \quad  &\mathbb E_{\bmu}[\tau_\delta] \\
    \text{s.t.}  \quad &\mathbb P _{\bmu}(\tau_\delta<\infty) = 1 \text{ and } \mathbb P _{\bmu} ( i_{\mathrm{out}} \neq 1 ) \le \delta.
\end{aligned}
\end{equation}
It is worth highlighting that the fundamental distinction between \eqref{equation_setup} and \eqref{equation_setup_minimalsample} lies solely in the choice of the objective function.

The instance-dependent lower bound for the problem of BAI with minimal samples can be characterized through a max-min optimization problem \citep{garivier2016optimal}. Specifically, let $\mathrm{Alt}(\bmu)$ denote the set of alternative bandit instances where arm $1$ is not the best arm. Then for any $\delta$-PAC BAI algorithm, it holds that
\begin{equation*}
\label{equation_asymptotic_lowerbound_minimalsample}
\liminf_{\delta\to 0} \frac{\E_{\bmu}[ \tau_\delta]} {\log (1/\delta)} \ge \Gamma^{*}(\bmu)
\end{equation*}
where
\begin{equation}
\label{equation_asymptotic_lowerbound_minimalsample_maxmin}
\Gamma^{*}(\bmu)^{-1} := \sup _{w \in \mathcal P _{K}} \inf _{\blambda \in \mathrm{Alt}(\bmu)}\left(\sum_{i=1}^K{w_i \mathrm{kl} \left(\mu_{i}, \lambda_{i}\right)}\right).
\end{equation}
Note that this lower bound can be attained asymptotically by approximating the optimal proportion of arm pulls, as indicated by the outer supremum over ${w \in \mathcal P _{K}}$ in Equation~\eqref{equation_asymptotic_lowerbound_minimalsample_maxmin} \citep{garivier2016optimal, degenne2019non,mukherjee2023best}. 

A natural question emerges when we consider these two problems in further depth: Does an optimal policy for BAI with minimal samples inherently lead to a commensurately low cumulative regret? The following example provides a counterexample to this assertion, underscoring the significance of carefully investigating the problem of BAI with minimal regret.

\begin{example} \label{eg:two_armed}
Consider a two-armed Bernoulli bandit instance $\bmu = (1-\mu, \mu)$ with $\mu \in (0, 1/2)$. Here, as with the rest of the paper, we focus on the asymptotic regime that $\delta$ tends to zero. For BAI with minimal samples, a simple mathematical derivation reveals that the optimal proportion of arm pulls is uniform (see Appendix~\ref{app:two_armed}), and hence the expected cumulative regret under this scenario is approximately $\frac {(1-2\mu) \log (1/\delta)} {2 \mathrm{kl}_{\mathcal B}(\mu, 1/2)}$. On the other hand, for the optimal policy of BAI with minimal regret, its expected cumulative regret is given by $\frac {(1-2\mu) \log (1/\delta)} { \mathrm{kl}_{\mathcal B}(\mu, 1-\mu)}$. As the parameter $\mu$ approaches zero, the ratio between these two regret values, $\frac  { \mathrm{kl}_{\mathcal B}(\mu, 1-\mu)} {2 \mathrm{kl}_{\mathcal B}(\mu, 1/2)} \simeq \frac {\log (1/\mu)} {2\log 2}$, can become arbitrarily large.
\end{example}

\section{Numerical Experiments}
\label{section_experiment}
\begin{figure}[t]
 \centering
	\begin{minipage}{0.9\linewidth}
		\subfigure[Cumulative regret for instance $\bmu^{\dagger}$.]{
			\begin{minipage}[b]{0.5\textwidth}
				\includegraphics[width=1.000\textwidth]{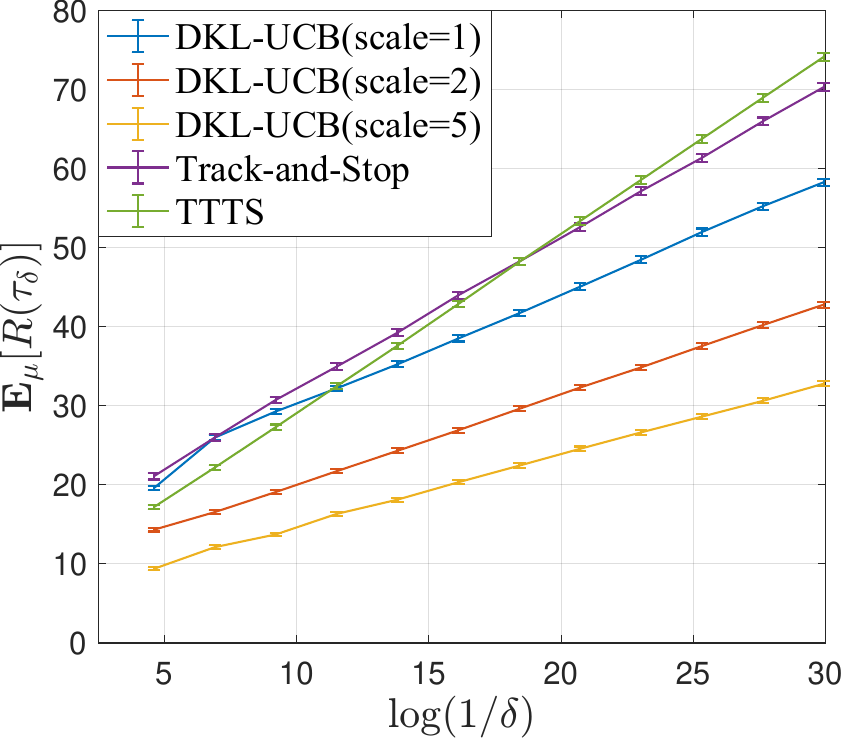}
			\end{minipage}
		}
		\hspace{0.00pt}
		\subfigure[Sample complexity for instance $\bmu^{\dagger}$.]{
			\begin{minipage}[b]{0.5\textwidth}  
				\includegraphics[width=1.000\textwidth]{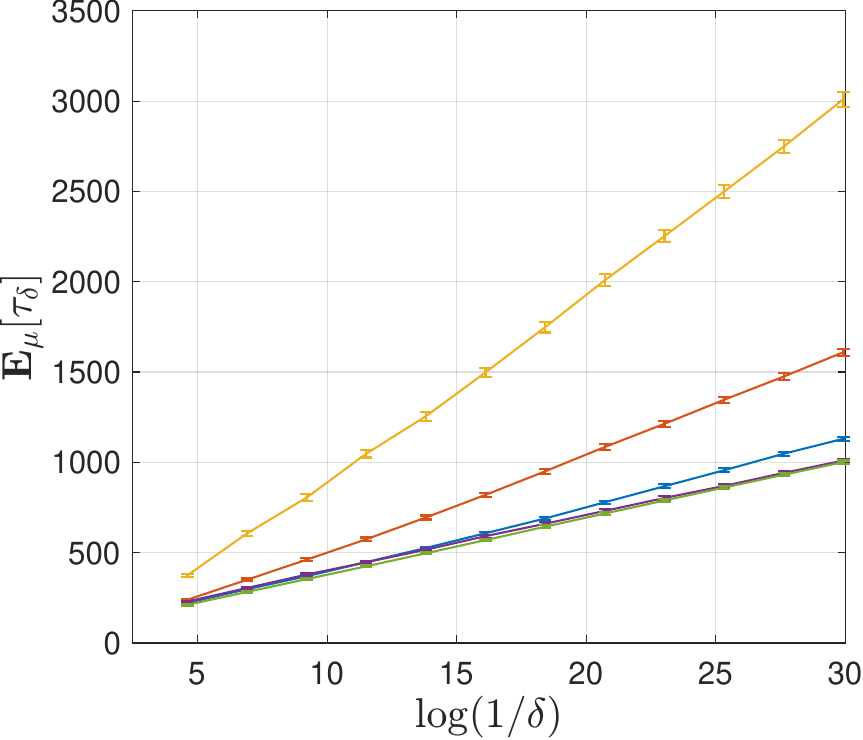}
			\end{minipage}
		}
            
    		\subfigure[Cumulative regret for instance $\bmu^{\ddagger}$.]{
    			\begin{minipage}[b]{0.5\textwidth}
    				\includegraphics[width=1.000\textwidth]{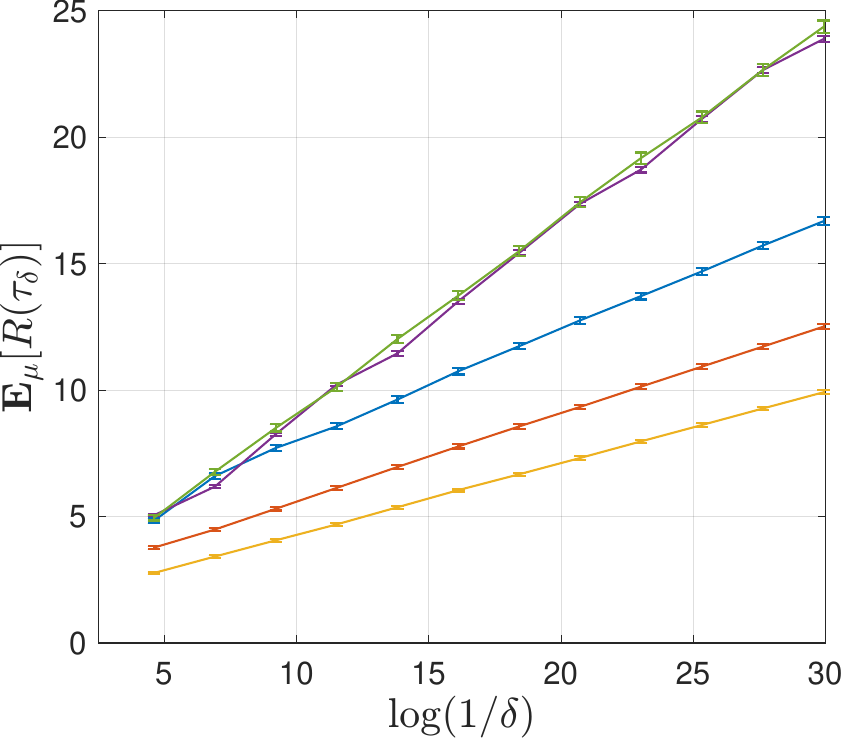}
    			\end{minipage}
    		}
    		\hspace{0.00pt}
    		\subfigure[Sample complexity for instance $\bmu^{\ddagger}$.]{
    			\begin{minipage}[b]{0.5\textwidth}  
    				\includegraphics[width=1.000\textwidth]{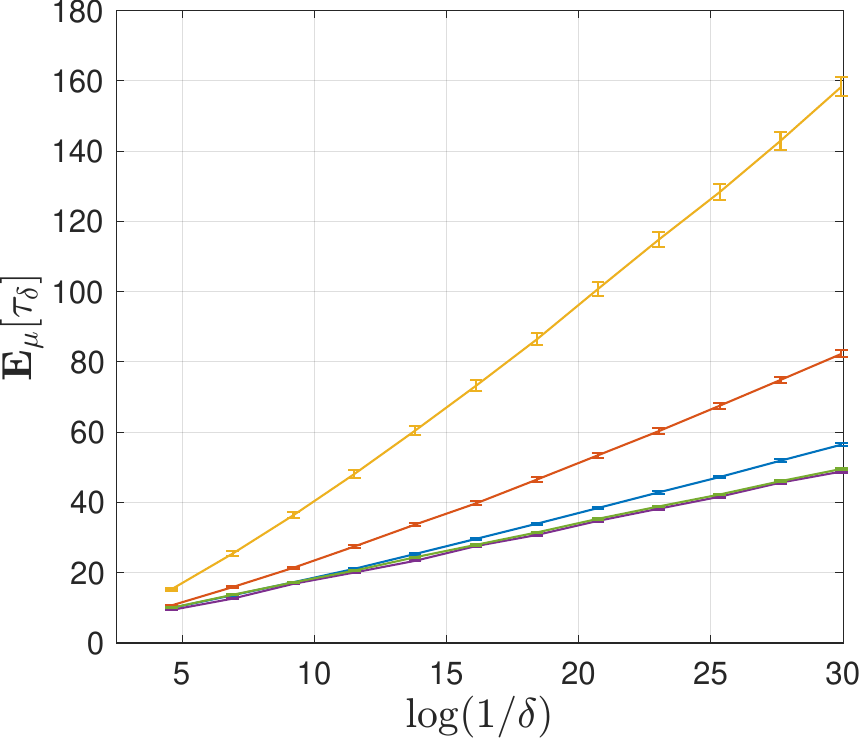}
    			\end{minipage}
    		}
	\end{minipage}
	\caption{Empirical cumulative regrets  (up to stopping times) and sample complexities for different confidence levels $\delta$.}
	\label{figure_plots}
 \vspace{0pt}
\end{figure}

To validate our theoretical results, we compare our {\sc DKL-UCB} algorithm with two established approaches: Track-and-Stop \citep{garivier2016optimal} and Top-Two Thompson Sampling ({\sc TTTS}) \citep{russo2020simple}. To guarantee fair comparisons, all algorithms employ the  generalized likelihood ratio test (GLRT) stopping rule introduced in \citet{garivier2016optimal}. We note that standard \textsc{KL-UCB} \citep{garivier2011kl, cappe2013kullback} never terminates under this stopping rule and is therefore excluded from the comparison.\footnote{{Specifically, the threshold function used in the GLRT stopping rule grows too rapidly for standard \textsc{KL-UCB}  to satisfy the stopping criterion.}}

In our experiments, we consider a family of {\sc DKL-UCB} algorithms parameterized by a scaling factor $c>0$ within the definition of $\beta(\delta)$ in~\eqref{equation_beta}. Specifically, we modify $\beta(\delta)$ to:
\begin{align}
   \beta_c(\delta) = 1 - \min\left\{ \frac{1}{c\ \log \log(1/\delta)}, \frac{1}{2} \right\}. \label{eqn:modified_beta} 
\end{align}
When \( c=1 \), \( \beta_c(\delta) \) simplifies to the original form of \( \beta(\delta) \) given in \eqref{equation_beta}. As \( c \to +\infty \), $\beta_c(\delta)\to 1^-$ and the arm \( A_t^f \), defined in \eqref{eqn:defAs}, is pulled more frequently, making the algorithm increasingly resemble {\sc KL-UCB} as mentioned in Remark~\ref{rmk:dkl_ucb}. We note, however, that regardless of the exact value of $c >0$, our proposed {\sc DKL-UCB} algorithm possesses the same theoretical asymptotic guarantees as described in Theorem~\ref{theorem_upper}. 

We consider two problem instances: a five-armed and a two-armed Bernoulli bandit with mean vectors
$$\bmu^{\dagger} = [1, 0.9, 0.8, 0.7, 0.6]\quad  \text{and} \quad \bmu^{\ddagger} = [0.99, 0.01].$$
The latter instance (i.e., $\bmu^{\ddagger}$) corresponds to the setting described in Example~\ref{eg:two_armed} where $\mu=0.01$ is small. This configuration is expected to highlight a divergence in the performances of {\sc DKL-UCB} and Track-and-Stop. For each setting, the results are averaged over $1000$ independent trials with standard errors displayed as error bars.
Our results are depicted in Figure~\ref{figure_plots}. The top and bottom rows display results for instances $ {\bmu}^{\dagger}$   and ${\bmu}^{\ddagger}$, respectively, while the left and right columns present empirical regrets and sample complexities.

From Figures~\ref{figure_plots}(a) and (c), we observe that the regrets of {\sc DKL-UCB} are often smaller compared to those of Track-and-Stop and {\sc TTTS}. This outcome aligns with expectations, as the latter algorithms are specifically designed to achieve asymptotically optimal sample complexity for fixed-confidence BAI and not to minimize cumulative regret. When considering larger values of $\delta$, {\sc DKL-UCB} with $c=1$  does not consistently outperform its competitors. However, this limitation can be addressed by selecting a larger scaling factor (such as $c=5$), indicating that for moderate values of $\delta$, a more aggressive sampling approach of $A_t^f$ relative to $A_t^g$ is beneficial for reducing regret.  In our main theorem, we show that as $\delta \to 0$, choosing $\beta(\delta) \to 1$ ensures asymptotic optimality of our algorithm. For finite $\delta$, our experimental results suggest that this choice may also lead to good finite-time regret performance. We leave a more thorough investigation of this for future work.
Furthermore, as seen in Figure~\ref{figure_plots}(c), the slope of {\sc DKL-UCB} (across various scaling factors) is considerably smaller than that of Track-and-Stop and {\sc TTTS}. This observation supports our theoretical finding in Example \ref{eg:two_armed}, highlighting the significant regret difference between {\sc DKL-UCB} and algorithms designed for BAI with minimal samples.

Conversely, Figures~\ref{figure_plots}(b) and (d) show that Track-and-Stop and {\sc TTTS} outperform {\sc DKL-UCB} in terms of sample complexity. This is also expected, as these algorithms are specifically tailored for efficient BAI with minimal samples.

Overall, the experimental results corroborates our theoretical findings: {\sc DKL-UCB} excels at BAI with minimal regret. It outperforms baseline algorithms not optimized for this balancing act, highlighting the inherent trade-off between minimizing regret and sample complexity in the study of multi-armed bandits.  

\section{Conclusions and Future Work}
\label{section_conclusion}
In this paper, we investigate the problem of fixed-confidence best arm identification while minimizing the expected cumulative regret. By analyzing the instance-dependent lower bound, we clearly demonstrate the differences between this problem and other problems in the bandit literature. Concurrently, we design and analyze Double KL-UCB (or \textsc{DKL-UCB}), which stands out for its utilization of dual upper confidence bounds. Our rigorous theoretical examination firmly confirms its asymptotic optimality in terms of cumulative regret as well as its near-optimality in sample complexity.

While our algorithm \textsc{DKL-UCB} is applicable for any fixed confidence level $\delta\in (0,1)$, its theoretical guarantees are asymptotic. In our analysis, we leverage asymptotic techniques to streamline the proof process and presentation. Although non-asymptotic results are possible with our current proof strategies, they would be considerably more complex.  We leave a more concise non-asymptotic performance analysis for future work.

Moreover, a fruitful direction for future research lies in establishing bounds for the  worst case regret for our problem. Note that for the problem of cumulative regret minimization, the two predominant performance metrics are asymptotic and worst case regrets. In particular, the optimal bound for the latter is $O(\sqrt{KT})$. While our work studies the asymptotic regret performance for fixed-confidence BAI, the exploration of worst case regret remains open. 

{An intriguing open question arises regarding the fundamental limits of sample complexity, denoted as \(\EE[\tau_\delta]\), as \(\delta\) approaches zero. From Definition~\ref{definition_optimal}, we understand the concept of asymptotic optimality and, as per Theorem~\ref{theorem_impossible}, any asymptotic optimal algorithm inherently incurs a sample complexity \(\EE[\tau_\delta]=\omega\big(\log(1/\delta)\big)\) as \(\delta\to0\). This concept can be extended to \(\zeta\)-asymptotic optimality. An algorithm is said to be {\em \(\zeta\)-asymptotically optimal} for \(\zeta > 1\) if:
\begin{align}
\limsup_{\delta \to 0} \frac{\EE_{\mu} [R(\tau_{\delta}) ]}{\log (1/\delta)} \leq \zeta \sum_{i\in [K] \setminus \{1\}}\frac{\Delta_i}{\kl(\mu_i,\mu_1)}.    \label{eqn:beta_opt}
\end{align}
The question is to explore the fundamental limits of \(\EE[\tau_\delta]\)  as a function of $\zeta$ as \(\delta\) vanishes under the constraint in \eqref{eqn:beta_opt}. Specifically, does \(\EE[\tau_\delta]\) scale as \(\Theta\big(\log(1/\delta)\big)\), and if it does, what is the exact constant involved? This remains an open question for future research.}

\section*{Acknowledgments}
We thank the anonymous reviewers for their helpful comments. T.~Jin  and V.~Y.~F.~Tan are supported by a Singapore Ministry of Education (MOE) AcRF Tier 2 grant under grant number A-8004062-00-00.



\vspace{20pt}
\appendix
\section{Auxiliary Results}
\label{appendix_auxiliary}
\subsection{Exponential Family Distributions}

For exponential families of distribution, the Kullback--Leibler (KL) divergence can be equivalently expressed as the Bregman divergence between the parameters or the means. Specifically, for two distributions $\nu_{\theta}$ and $\nu_{\theta'}$ with means $\mu$ and $\mu'$, it holds that 
$$
\begin{aligned}
\mathrm{kl}\left(\mu, \mu' \right)&=b\left(\theta'\right)-b(\theta)-b'(\theta)\left(\theta'-\theta\right) \\
&= b^*(\mu) -b^*(\mu') - \left(b'\right)^{-1} (\mu')(\mu-\mu')
\end{aligned}
$$
where $b^*$ is the convex conjugate of $b$.

\begin{lemma}[\citet{menard2017minimax}]
\label{lemma_klmean}
For any $\mu, \mu' \in I$, it holds that
$$
\mathrm{kl}\left(\mu, \mu' \right)\ge \frac {(\mu-\mu')^2} {2V}.
$$
\end{lemma}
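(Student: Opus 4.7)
The plan is to exploit the Bregman-divergence representation of the KL divergence that the excerpt records immediately before the lemma, namely
\[
\mathrm{kl}(\mu,\mu') \;=\; b^*(\mu) - b^*(\mu') - (b')^{-1}(\mu')(\mu - \mu'),
\]
and then bound this second-order remainder from below by controlling the second derivative of $b^*$ via the variance. Concretely, since $b$ is smooth and strictly convex with $b''>0$ on $\varTheta$, the map $\theta\mapsto b'(\theta)$ is a diffeomorphism onto $I$, and its inverse is exactly $(b^*)'$. Differentiating $(b^*)'(\mu) = (b')^{-1}(\mu)$ once more yields
\[
(b^*)''(\mu) \;=\; \frac{1}{b''\bigl((b')^{-1}(\mu)\bigr)} \;=\; \frac{1}{\mathrm{Var}(\nu_\theta)} \;\ge\; \frac{1}{V},
\]
where the last inequality uses the standing assumption that variances in the family are bounded above by $V$.

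Next I would apply Taylor's theorem with the Lagrange form of the remainder to $b^*$, expanding around $\mu'$. For some $\xi$ lying between $\mu$ and $\mu'$,
\[
b^*(\mu) \;=\; b^*(\mu') + (b^*)'(\mu')(\mu - \mu') + \tfrac{1}{2}(b^*)''(\xi)(\mu - \mu')^2.
\]
Substituting into the Bregman expression and using $(b^*)'(\mu') = (b')^{-1}(\mu')$ causes the linear terms to cancel, leaving
\[
\mathrm{kl}(\mu,\mu') \;=\; \tfrac{1}{2}(b^*)''(\xi)(\mu-\mu')^2 \;\ge\; \frac{(\mu-\mu')^2}{2V},
\]
which is the claimed inequality.

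The calculation is largely mechanical; the only step requiring mild care is justifying the identity $(b^*)'(\mu) = (b')^{-1}(\mu)$ and the resulting reciprocal relationship between $(b^*)''$ and $b''$. This follows from the standard convex-duality facts that $b$ is strictly convex and twice differentiable on $\varTheta$, so $b'$ is a bijection from $\varTheta$ onto the interior of $I$, and the convex conjugate $b^*$ inherits $C^2$ regularity with derivatives given by the inverse-function theorem. Once these structural facts are in place, the Taylor remainder bound combined with the uniform variance bound $V$ finishes the argument in one line.
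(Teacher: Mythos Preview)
Your argument is correct and is essentially the standard way one proves this inequality for canonical exponential families: write the KL as the Bregman divergence of the conjugate $b^*$, observe via convex duality that $(b^*)''=1/(b''\circ (b')^{-1})\ge 1/V$, and apply the second-order Taylor remainder. There is nothing to fault in the logic.

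As for comparison with the paper: the paper does \emph{not} supply its own proof of Lemma~\ref{lemma_klmean}. It is simply quoted from \citet{menard2017minimax} as an auxiliary fact in Appendix~\ref{appendix_auxiliary}, with no accompanying derivation. So there is no ``paper's proof'' to compare against; your write-up is a self-contained justification of a result the authors take for granted.
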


\begin{lemma}[Maximal Inequality~\citep{menard2017minimax}]\label{lem:maximal-inequality}
Let $N_1$ and $N_2$ be two real numbers, and $\hat \mu_{n}$ be the empirical mean of $n$ i.i.d.\ random variables drawn according to some exponential family distribution with mean $\mu$. Let $V$ be the maximum variance of the distribution. Then, for every $x\leq \mu$,
\begin{align}\label{eq:kl-1}
\begin{split}
    \PP(\exists N_1\leq n \leq N_2, \hat \mu_n\leq x)&\leq e^{-N_1\cdot \kl(x,\mu)},  \\
    \PP(\exists N_1\leq n \leq N_2, \hat \mu_n\leq x)&\leq e^{-N_1(x-\mu)^2/(2V)}.
\end{split}
\end{align}
Moreover, for every $x\geq \mu$, 
\begin{align}\label{eq:kl-2}
     \PP(\exists N_1\leq n \leq N_2, \hat \mu_n\geq x)&\leq e^{-N_1\cdot \kl(x,\mu)}.
\end{align}
\end{lemma}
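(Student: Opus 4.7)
My plan is to prove the maximal inequality via the classical exponential martingale (likelihood-ratio) method combined with Doob's maximal inequality. I will derive the two lower-tail bounds in \eqref{eq:kl-1} first, and then obtain the upper-tail bound \eqref{eq:kl-2} by a symmetric argument.

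Fix $x \leq \mu$, and let $\theta$ and $\theta'$ denote the natural parameters corresponding to the means $\mu$ and $x$, respectively. Because $b'$ is strictly increasing (as $b'' > 0$), the hypothesis $x \leq \mu$ forces $\theta' \leq \theta$. I would introduce the non-negative process
$$M_n := \exp\bigl(n(\theta' - \theta)\hat\mu_n - n(b(\theta') - b(\theta))\bigr),$$
which coincides with the likelihood ratio $\prod_{s=1}^n (\mathrm d\nu_{\theta'}/\mathrm d\nu_\theta)(X_s)$ and is therefore a non-negative martingale under $\nu_\theta$ with $\EE_{\nu_\theta}[M_n] = 1$ for every $n$.

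The crucial deterministic step is to show that on $\{\hat\mu_n \leq x\}$ one has $M_n \geq \exp(n\,\kl(x,\mu))$. Indeed, $\theta' - \theta \leq 0$ together with $\hat\mu_n \leq x$ implies $(\theta' - \theta)\hat\mu_n \geq (\theta' - \theta)x$; substituting this lower bound into the exponent and applying the identity $\kl(x,\mu) = b(\theta) - b(\theta') - x(\theta - \theta')$ (valid because $b'(\theta') = x$) yields the claim after cancellation. Since $\kl(x,\mu) \geq 0$, I may replace $n$ by $N_1$ in the exponent for every $n \geq N_1$, so that
$$\{\exists\, N_1 \leq n \leq N_2 : \hat\mu_n \leq x\} \;\subseteq\; \Bigl\{\max_{1 \leq n \leq N_2} M_n \geq \exp(N_1\,\kl(x,\mu))\Bigr\}.$$
Doob's maximal inequality applied to the non-negative martingale $(M_n)_{n=1}^{N_2}$ gives $\PP(\max_n M_n \geq c) \leq \EE[M_{N_2}]/c = 1/c$, which establishes the first bound in \eqref{eq:kl-1}. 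The Gaussian-type version is then immediate from Lemma \ref{lemma_klmean}, which yields $\kl(x,\mu) \geq (x-\mu)^2/(2V)$; exponentiating this inside the first bound produces the second display in \eqref{eq:kl-1}.

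The upper-tail inequality \eqref{eq:kl-2} follows by a fully symmetric argument: when $x \geq \mu$ one has $\theta' \geq \theta$, and on $\{\hat\mu_n \geq x\}$ the inequality $(\theta'-\theta)\hat\mu_n \geq (\theta'-\theta)x$ still holds because both factors now share a non-negative sign; exactly the same computation shows $M_n \geq \exp(n\,\kl(x,\mu))$, and Doob closes the argument identically. The main obstacle here is not substantive difficulty but the careful bookkeeping of signs: one must verify the particular Bregman form of $\kl$ used in the paper, ensure that the sign of $\theta' - \theta$ aligns with the direction of the deviation event in each case, and confirm that the two resulting minus signs cancel so that the exponent of $M_n$ is genuinely $+n\,\kl(x,\mu)$ rather than its negative. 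Once this is verified, the proof is a one-line application of Doob combined with the Chernoff-type estimate already encoded in the martingale $M_n$.
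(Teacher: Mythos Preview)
Your proof is correct and follows the standard exponential-martingale route: identify the likelihood-ratio process $M_n$, use the sign alignment between $\theta'-\theta$ and the deviation event to bound $M_n$ below by $\exp(n\,\kl(x,\mu))$, then apply Doob's maximal inequality. The bookkeeping of the Bregman form of $\kl$ and the signs is handled correctly.

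Note, however, that the paper does \emph{not} actually prove this lemma: it is stated in Appendix~\ref{appendix_auxiliary} as an auxiliary result and attributed to \citet{menard2017minimax} without proof. So there is no ``paper's own proof'' to compare against. Your argument is precisely the classical one that underlies the cited result, and nothing more is needed here.
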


\subsection{Other Technical Lemmas}
\begin{lemma}
\label{lemma_convergent}
Consider any function $f: [0,\infty)^2 \to \mathbb R$. If $f$ is continuous  on $[0,\infty)^2$, then for any sequence $(x_n)\subset\mathbb{R}$ and convergent sequence $(y_n) \subset\mathbb{R}$, it holds that
$$
\liminf_{n\to \infty} f\left(x_n, y_n\right) \le f\left(\liminf_{n\to \infty} x_n, \lim_{n\to \infty}  y_n\right).
$$
\end{lemma}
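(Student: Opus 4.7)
The plan is to prove the inequality by selecting a subsequence along which the first coordinate converges to $\liminf_n x_n$, and then invoking joint continuity of $f$ together with the fact that the $\liminf$ of a sequence is bounded above by every subsequential limit.

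First, I would observe that since we are working on the domain $[0,\infty)^2$ and $(y_n)$ converges in $[0,\infty)$, the interesting case is when $\liminf_n x_n$ is finite (otherwise $x_n\to\infty$ leaves the domain and the statement would need a separate convention for the right-hand side). Assuming $\liminf_n x_n = \ell \in [0,\infty)$, by the definition of $\liminf$ I extract a subsequence $(x_{n_k})$ with $x_{n_k} \to \ell$. This is the key step and is completely standard.

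Next, since $(y_n)$ is convergent with limit $y^\ast := \lim_n y_n$, every subsequence also converges to $y^\ast$; in particular $y_{n_k} \to y^\ast$. Joint continuity of $f$ at $(\ell, y^\ast)\in [0,\infty)^2$ then yields
\begin{equation*}
\lim_{k\to\infty} f(x_{n_k}, y_{n_k}) \;=\; f(\ell, y^\ast) \;=\; f\!\left(\liminf_{n\to\infty} x_n,\; \lim_{n\to\infty} y_n\right).
\end{equation*}
Finally, I combine this with the elementary fact that $\liminf_{n\to\infty} a_n \le \lim_{k\to\infty} a_{n_k}$ for any convergent subsequence of a real sequence $(a_n)$, applied to $a_n = f(x_n, y_n)$. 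This immediately delivers the desired inequality.

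The argument has essentially no technical obstacle; the main thing to be careful about is the selection step, namely justifying that a subsequence realising $\liminf_n x_n$ exists (which is just the definition of $\liminf$) and that this $\liminf$ lies in the domain of $f$ so that continuity can be applied. If the authors implicitly require $(x_n)\subset[0,\infty)$ (matching the domain of $f$), then $\liminf_n x_n \in [0,\infty]$ and the only degenerate case is $\liminf_n x_n = +\infty$, which forces $x_n\to +\infty$; in that case the lemma is vacuous or requires an extended-value convention, and I would briefly note this rather than belabor it.
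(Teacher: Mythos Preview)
Your proposal is correct and follows essentially the same approach as the paper: extract a subsequence realizing $\liminf_n x_n$, use convergence of $(y_n)$ along this subsequence, apply continuity of $f$, and conclude via the fact that the $\liminf$ of a sequence is bounded above by any subsequential limit. Your additional remark on the degenerate case $\liminf_n x_n=+\infty$ is a reasonable caveat that the paper's proof simply omits.
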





\begin{proof}
For the sake of brevity, we write $x_0:=\liminf_{n\to\infty}x_n$ and $y_0:=\lim_{n\to\infty}y_n$ (since $(y_n)$ is convergent). By a property of the   $\liminf$, there exists a convergent subsequence $(x_{n_k})$ of $(x_n) $ such that $\lim_{k\to\infty}x_{n_k}=x_0$. Along this subsequence, we also have $\lim_{k\to\infty}y_{n_k}=y_0$, since every subseqence of a convergent sequence converges to the same limit. By the definition of $\liminf$,
\begin{equation}
\liminf_{n\to\infty}f(x_n,y_n)=\inf E    \label{eqn:inf_def}
\end{equation}
where $E$ is the set of subsequential limits of the sequence $(f(x_n,y_n))$. Now, note that $\left(f(x_{n_k},y_{n_k})\right)$ is convergent. Indeed, its limit is 
$$
\lim_{k\to\infty} f(x_{n_k}, y_{n_k}) = f\left( \lim_{k\to\infty}x_{n_k}, \lim_{k\to\infty}y_{n_k} \right)=f(x_0,y_0)
$$
where the first equality follows from the continuity of $f$. As a result, we may upper bound in the infimum in \eqref{eqn:inf_def} by choosing the convergent subsequence $(f(x_{n_k},y_{n_k}))$, yielding
$$
\liminf_{n\to\infty}f(x_n,y_n)\le \lim_{k\to\infty}f(x_{n_k}, y_{n_k})=f(x_0,y_0) 
$$
as previously established. This completes the proof of Lemma~\ref{lemma_convergent}. 
\end{proof}

\section{Proof of Theorem~\ref{theorem_lowerbound}}
\label{appendix_proof_theorem_lowerbound}

\begin{proof}
For a fixed confidence level $\delta\in (0,1)$ and instance $\bmu \in \mathcal{M}$, consider any $\delta$-PAC best arm identification algorithm. Since the hardness parameter $\mathrm I^{*}(\bmu)$ is finite, the situation that ${\E_{\bmu}[ R(\tau_\delta)]}$ is infinite is trivial. Henceforth, we assume that ${\E_{\bmu}[ R(\tau_\delta)]}$ is finite. 

Recall that $\mathrm{Alt}(\bmu)$ represents the set of alternative instances such that arm $1$ is not the best arm. Consider an arbitrary instance $\blambda$ in $\mathrm{Alt}(\mu)$. By applying the \emph{transportation} inequality \citep[Lemma 1]{kaufmann2016complexity} and the KL divergence for the underlying exponential family, we have 
$$
\sum_{i=1}^{K} \mathbb{E}_{\bmu}[N_{i}(\tau_{\delta})]\cdot \mathrm{kl}\left(\mu_i, \lambda_i \right) \ge \mathrm{kl}_{\mathcal B}(\delta, 1-\delta).
$$

Since the above inequality holds for all instances in $\mathrm{Alt}(\bmu)$, we have 
\begin{align}
\mathrm{kl}_{\mathcal B}(\delta, 1-\delta) &\le \inf _{\blambda \in \mathrm{Alt}(\bmu)} \sum_{i=1}^{K} \mathbb{E}_{\bmu}[N_{i}(\tau_{\delta})]\cdot \mathrm{kl}\left(\mu_i, \lambda_i \right)  \label{equation_lowerbound1}
\\
&\le \inf _{\blambda \in \mathrm{Alt}(\bmu), \lambda_1 = \mu_1} \sum_{i=1}^{K} \mathbb{E}_{\bmu}[N_{i}(\tau_{\delta})] \cdot \mathrm{kl}\left(\mu_i, \lambda_i \right) \notag \\
&= \inf _{\blambda \in \mathrm{Alt}(\bmu), \lambda_1 = \mu_1} \sum_{i=2}^{K} \mathbb{E}_{\bmu}[N_{i}(\tau_{\delta})] \cdot \mathrm{kl}\left(\mu_i, \lambda_i \right)  \notag \\
& = \inf _{\blambda \in \mathrm{Alt}(\bmu), \lambda_1 = \mu_1}  {\E_{\bmu}[ R(\tau_\delta)]} \left(\sum_{i=2}^{K} \frac{\mathbb{E}_{\bmu}[N_{i}(\tau_{\delta})]}{\E_{\bmu}[ R(\tau_\delta)]} \cdot \mathrm{kl} \left(\mu_{i}, \lambda_{i}\right)\right) \notag \\*
&= \E_{\bmu}[ R(\tau_\delta)] \inf _{\blambda \in \mathrm{Alt}(\bmu), \lambda_1 = \mu_1} \left(\sum_{i=2}^{K} \frac{\Delta_i \mathbb{E}_{\bmu}[N_{i}(\tau_{\delta})]}{\E_{\bmu}[ R(\tau_\delta)]} \cdot \frac{\mathrm{kl} \left(\mu_{i}, \lambda_{i}\right)} {\Delta_i}\right) . \notag 
\end{align}

Due to the law of total expectation,  the regret ${\E_{\bmu}[ R(\tau_\delta)]}$ can be decomposed as follows:
$$
\E_{\bmu}[ R(\tau_\delta)]  =\E_{\bmu}\left[\sum_{t=1}^{\tau_\delta} \Delta_{A_t}\right] = \sum_{i=2}^K \Delta_i \mathbb{E}_{\bmu}[N_{i}(\tau_{\delta})].
$$

As a result, the vector $$\left(0, \frac{\Delta_2 \mathbb{E}_{\bmu}[N_{2}(\tau_{\delta})]}{\E_{\bmu}[ R(\tau_\delta)]}, \frac{\Delta_3 \mathbb{E}_{\bmu}[N_{3}(\tau_{\delta})]}{\E_{\bmu}[ R(\tau_\delta)]},\ldots,\frac{\Delta_K \mathbb{E}_{\bmu}[N_{K}(\tau_{\delta})]}{\E_{\bmu}[ R(\tau_\delta)]}\right) $$ constitutes a probability distribution in $\mathcal P _{K}$. 

Hence, we can derive the following:
\begin{equation}
\label{equation_lowerbound2}
\begin{aligned}
\mathrm{kl}_{\mathcal B}(\delta, 1-\delta) &\le \E_{\bmu}[ R(\tau_\delta)] \inf _{\blambda \in \mathrm{Alt}(\bmu), \lambda_1 = \mu_1} \left(\sum_{i=2}^{K} \frac{\Delta_i \mathbb{E}_{\bmu}[N_{i}(\tau_{\delta})]}{\E_{\bmu}[ R(\tau_\delta)]} \cdot \frac{\mathrm{kl} \left(\mu_{i}, \lambda_{i}\right)} {\Delta_i}\right) \\
&\le \E_{\bmu}[ R(\tau_\delta)] \sup _{w \in \mathcal P _{K}, w_1 = 0} \inf _{\blambda \in \mathrm{Alt}(\bmu), \lambda_1 = \mu_1} \left(\sum_{i=2}^{K}  \frac{w_i \mathrm{kl} \left(\mu_{i}, \lambda_{i}\right)} {\Delta_i}\right) .
\end{aligned}
\end{equation}

To establish the desired inequality $\E_{\bmu}[ R(\tau_\delta)] \geq \mathrm I^{*}(\bmu) \mathrm{kl}_{\mathcal B}(\delta, 1-\delta)$, it suffices to show 
$$
\sup _{w \in \mathcal P _{K}, w_1 = 0} \inf _{\blambda \in \mathrm{Alt}(\bmu), \lambda_1 = \mu_1} \left(\sum_{i=2}^{K}  \frac{w_i \mathrm{kl} \left(\mu_{i}, \lambda_{i}\right)} {\Delta_i}\right)  = \mathrm I^{*}(\bmu)^{-1}.
$$

By decomposing the feasible set (i.e., $\blambda \in \mathrm{Alt}(\bmu)$ with $ \lambda_1 = \mu_1$), we can get
\begin{align*}
    &\phantom{\ =\ } \inf _{\blambda \in \mathrm{Alt}(\bmu), \lambda_1 = \mu_1} \left(\sum_{i=2}^{K}  \frac{w_i \mathrm{kl} \left(\mu_{i}, \lambda_{i}\right)} {\Delta_i}\right)  \\
    &=  \min_{j > 1} \inf_{\blambda: \lambda_1 = \mu_1, \lambda_j > \lambda_1} \left(\sum_{i=2}^{K}  \frac{w_i \mathrm{kl} \left(\mu_{i}, \lambda_{i}\right)} {\Delta_i}\right)  \\
    &=  \min_{j > 1} \inf_{\blambda: \lambda_1 = \mu_1, \lambda_j > \lambda_1}\frac{w_j \mathrm{kl} \left(\mu_{j}, \lambda_{j}\right)} {\Delta_j} \\
    &=  \min_{j > 1}  \frac{w_j \mathrm{kl} \left(\mu_{j}, \mu_1 \right)} {\Delta_j}.
\end{align*}

Now consider the outer optimization problem:
$$
\sup _{w \in \mathcal P _{K}, w_1 = 0} \inf _{\blambda \in \mathrm{Alt}(\bmu), \lambda_1 = \mu_1} \left(\sum_{i=2}^{K}  \frac{w_i \mathrm{kl} \left(\mu_{i}, \lambda_{i}\right)} {\Delta_i}\right) = \sup _{w \in \mathcal P _{K}, w_1 = 0} \min_{j > 1}  \frac{w_j \mathrm{kl} \left(\mu_{j}, \mu_1 \right)} {\Delta_j}.
$$
Obviously, the supremum (maximum) is attained if and only if the values of $\frac{w_j \mathrm{kl} \left(\mu_{j}, \mu_1 \right)}{\Delta_j}$ are the same across all $j > 1$. Since $\{w_j\}_{j=1}^{K}$ forms a probability distribution,   this occurs when
$$
w_j = \frac{{\Delta_j} / {\mathrm{kl} \left(\mu_{j}, \mu_1 \right)}}{\sum_{i=2}^{K} {\Delta_i} / {\mathrm{kl} \left(\mu_{i}, \mu_1 \right)}}
$$
for all $j > 1$.

Therefore, we arrive at
\begin{align*}
     \sup _{w \in \mathcal P _{K}, w_1 = 0} \inf _{\blambda \in \mathrm{Alt}(\bmu), \lambda_1 = \mu_1} \left(\sum_{i=2}^{K}  \frac{w_i \mathrm{kl} \left(\mu_{i}, \lambda_{i}\right)} {\Delta_i}\right) 
    = \left(\sum_{i=2}^{K} \frac {\Delta_i} {\mathrm{kl} \left(\mu_{i}, \mu_1 \right)} \right)^{-1} 
    =  \mathrm I^{*}(\bmu)^{-1}.
\end{align*}
Finally, since $\lim_{\delta \rightarrow 0 }  \frac {\mathrm{kl}_{\mathcal B}(\delta, 1-\delta) }  {\log(1/\delta)} =1 $, letting $\delta \to 0 $ yields 
$$
\liminf_{\delta\to 0} \frac{\E_{\bmu}[ R(\tau_\delta)]} {\log (1/\delta)} \ge \mathrm I^{*}(\bmu).
$$
This completes the proof of Theorem~\ref{theorem_lowerbound}.
\end{proof}

\section{Proof of Theorem~\ref{theorem_impossible}}
\label{appendix_proof_theorem_impossible}

\begin{proof}
Consider any asymptotically optimal $\delta$-PAC algorithm and bandit instance $\bmu \in \mathcal{M}$. By combining the asymptotic lower bound in Theorem~\ref{theorem_lowerbound} with the definition of asymptotic optimality in Definition~\ref{definition_optimal}, we   obtain 
\begin{equation}
\lim_{\delta\to 0} \frac{\E_{\bmu}[ R(\tau_\delta)]} {\log (1/\delta)} = \mathrm I^{*}(\bmu).
    \label{eqn:lim_delta_ratio1}
\end{equation}

First, consider Inequality~\eqref{equation_lowerbound2} in the proof of Theorem~\ref{theorem_lowerbound}. By dividing this inequality by $\log(1/\delta)$ and allowing $\delta$ to approach zero,  we have
$$
1 \le \lim_{\delta\to 0}  \mathrm I^{*}(\bmu) \inf _{\blambda \in \mathrm{Alt}(\bmu), \lambda_1 = \mu_1} \left(\sum_{i=2}^{K} \frac{\Delta_i \mathbb{E}_{\bmu}[N_{i}(\tau_{\delta})]}{\E_{\bmu}[ R(\tau_\delta)]} \cdot \frac{\mathrm{kl} \left(\mu_{i}, \lambda_{i}\right)} {\Delta_i}\right)  \le 1,
$$
which leads to 
\begin{align*}
\lim_{\delta\to 0} \inf _{\blambda \in \mathrm{Alt}(\bmu), \lambda_1 = \mu_1} \left(\sum_{i=2}^{K} \frac{\Delta_i \mathbb{E}_{\bmu}[N_{i}(\tau_{\delta})]}{\E_{\bmu}[ R(\tau_\delta)]} \cdot \frac{\mathrm{kl} \left(\mu_{i}, \lambda_{i}\right)} {\Delta_i}\right)  = \mathrm I^{*}(\bmu) ^{-1} .
\end{align*}

Based on the analysis in the proof of Theorem~\ref{theorem_lowerbound}, the above equation is equivalent to 
\begin{align*}
\lim_{\delta\to 0} \min_{i > 1}  \frac{\Delta_i \mathbb{E}_{\bmu}[N_{i}(\tau_{\delta})]}{\E_{\bmu}[ R(\tau_\delta)]} \cdot  \frac{ \mathrm{kl} \left(\mu_{i}, \mu_1 \right)} {\Delta_i}= \mathrm I^{*}(\bmu) ^{-1},
\end{align*}
which holds if and only if 
\begin{equation}
\lim_{\delta\to 0} \frac{\mathbb{E}_{\bmu}[N_{i}(\tau_{\delta})]}{\E_{\bmu}[ R(\tau_\delta)]} = \frac 1 { \mathrm I^{*}(\bmu) \mathrm{kl} \left(\mu_{i}, \mu_1 \right)} \label{eqn:lim_delta_ratio2}
\end{equation}
for all $i>1$.

Consequently, by combining~\eqref{eqn:lim_delta_ratio1} and~\eqref{eqn:lim_delta_ratio2},  we establish that for all $i>1$,
\begin{equation}
\lim_{\delta\to 0} \frac{\mathbb{E}_{\bmu}[N_{i}(\tau_{\delta})]}{\log(1/\delta)} = \frac 1 {  \mathrm{kl} \left(\mu_{i}, \mu_1 \right)}.\label{eqn:N_i_conv}    
\end{equation}

Next, consider Inequality~\eqref{equation_lowerbound1}. Similarly,  dividing this inequality by $\log(1/\delta)$ and allowing $\delta$ to approach zero yields
\begin{equation}
\label{equation_contradiction}
   \lim_{\delta\to 0} \inf _{\blambda \in \mathrm{Alt}(\bmu)} \sum_{i=1}^{K} \frac{\mathbb{E}_{\bmu}[N_{i}(\tau_{\delta})]}{\log(1/\delta)} \cdot \mathrm{kl}\left(\mu_i, \lambda_i \right)  \ge 1.
\end{equation}

Note that
\begin{align*}
     &\phantom{\ = \ }\inf _{\blambda \in \mathrm{Alt}(\bmu)} \sum_{i=1}^{K} \frac{\mathbb{E}_{\bmu}[N_{i}(\tau_{\delta})]}{\log(1/\delta)} \cdot \mathrm{kl}\left(\mu_i, \lambda_i \right)  \\
     &= \min_{j > 1} \inf_{\blambda:  \lambda_j > \lambda_1} \sum_{i=1}^{K} \frac{\mathbb{E}_{\bmu}[N_{i}(\tau_{\delta})]}{\log(1/\delta)} \cdot \mathrm{kl}\left(\mu_i, \lambda_i \right) \\
     &= \min_{j > 1} \inf_{\blambda:  \lambda_j > \lambda_1}   \left ( \frac{\mathbb{E}_{\bmu}[N_{1}(\tau_{\delta})]}{\log(1/\delta)} \cdot \mathrm{kl}\left(\mu_1, \lambda_1 \right)  +  \frac{\mathbb{E}_{\bmu}[N_{j}(\tau_{\delta})]}{\log(1/\delta)} \cdot \mathrm{kl}\left(\mu_j, \lambda_j \right)  \right) \\
     &= \min_{j > 1} \ \inf_{\lambda\in I}   \left ( \frac{\mathbb{E}_{\bmu}[N_{1}(\tau_{\delta})]}{\log(1/\delta)} \cdot \mathrm{kl}\left(\mu_1, \lambda \right)  +  \frac{\mathbb{E}_{\bmu}[N_{j}(\tau_{\delta})]}{\log(1/\delta)} \cdot \mathrm{kl}\left(\mu_j, \lambda \right)  \right).
\end{align*}

Define the function $\Phi(x,y)$ for $x, y \ge 0$ as follows:
\begin{equation}
\label{equation_phi}
    \Phi(x,y) := \inf_{\lambda \in I } \left( x \cdot \mathrm{kl}\left(\mu_1, \lambda \right) + y \cdot \mathrm{kl}\left(\mu_j, \lambda \right) \right),
\end{equation}
which frequently appears in the analysis of best arm identification. The properties of the function $\Phi$ have been thoroughly explored; see \citet[Lemma 2]{russo2020simple} for an example. Specifically, the function $\Phi : [0,\infty)^2\to\mathbb{R}$ has the following properties:
\begin{itemize}
    \item $\Phi$ is continuous  on $[0,\infty)^2$.
    \item For $\mu_1 > \mu_j$, $\Phi$ is strictly increasing in the first (resp.\ second) argument when the second (resp.\ first) is fixed. 
    \item The infimum in $\Phi$ can be replaced with a minimum, and this minimum is attained when $\lambda = \frac{x}{x+y} \mu_1+\frac{y}{x+y} \mu_j$.
\end{itemize}

A direct consequence of the last property is that for any finite $y\ge 0 $,
$$
\lim_{x\to + \infty} \Phi(x, y) = y \cdot \mathrm{kl}\left(\mu_j, \mu_1 \right).
$$

In the following, we will show $\mathbb{E}_{\bmu}[N_{1}(\tau_{\delta})] = \omega({\log (1/\delta)})$ via contradiction. 
Assume, to the contrary, that there exists a finite constant $c > 0$ such that
$$
\liminf_{\delta\to 0} \frac{\mathbb{E}_{\bmu}[N_{1}(\tau_{\delta})]}{\log(1/\delta)} \le c .
$$

Then we have
\begin{align}
&\phantom{\ = \ }\liminf_{\delta\to 0} \inf _{\blambda \in \mathrm{Alt}(\bmu)} \sum_{i=1}^{K} \frac{\mathbb{E}_{\bmu}[N_{i}(\tau_{\delta})]}{\log(1/\delta)} \cdot \mathrm{kl}\left(\mu_i, \lambda_i \right)  \notag \\
&= \liminf_{\delta\to 0}  \ \min_{j > 1} \ \Phi\left(\frac{\mathbb{E}_{\bmu}[N_{1}(\tau_{\delta})]}{\log(1/\delta)}, \frac{\mathbb{E}_{\bmu}[N_{j}(\tau_{\delta})]}{\log(1/\delta)} \right) \notag \\
&= \min_{j > 1} \ \liminf_{\delta\to 0}  \  \Phi\left(\frac{\mathbb{E}_{\bmu}[N_{1}(\tau_{\delta})]}{\log(1/\delta)}, \frac{\mathbb{E}_{\bmu}[N_{j}(\tau_{\delta})]}{\log(1/\delta)} \right) \notag \\
&\le \min_{j > 1} \   \Phi\left(\liminf_{\delta\to 0} \frac{\mathbb{E}_{\bmu}[N_{1}(\tau_{\delta})]}{\log(1/\delta)}, \lim_{\delta\to 0} \frac{\mathbb{E}_{\bmu}[N_{j}(\tau_{\delta})]}{\log(1/\delta)} \right) \label{equation_impossible1} \\
&\le  \min_{j > 1} \   \Phi\left(c, \frac 1 {  \mathrm{kl} \left(\mu_{j}, \mu_1 \right)} \right) \notag \\
&< \min_{j > 1} \lim_{x\to +\infty}   \Phi\left(x, \frac 1 {  \mathrm{kl} \left(\mu_{j}, \mu_1 \right)} \right)  \notag\\
&=  \min_{j> 1} \   \frac {  \mathrm{kl} \left(\mu_{j}, \mu_1 \right)} {  \mathrm{kl} \left(\mu_{j}, \mu_1 \right)} \notag \\*
&= 1\notag
\end{align}
where Line~\eqref{equation_impossible1} is due to Lemma~\ref{lemma_convergent} and the above-stated properties of  the function $\Phi$. Note from Equation~\eqref{eqn:N_i_conv} that for each $j>1$, $\frac{\mathbb{E}_{\bmu}[N_{j}(\tau_{\delta})]}{\log(1/\delta)}$ converges as $\delta\to0$.

Therefore, we can conclude that
$$
\liminf_{\delta\to 0} \inf _{\blambda \in \mathrm{Alt}(\bmu)} \sum_{i=1}^{K} \frac{\mathbb{E}_{\bmu}[N_{i}(\tau_{\delta})]}{\log(1/\delta)} \cdot \mathrm{kl}\left(\mu_i, \lambda_i \right)  < 1,
$$
which contradicts  Inequality~\eqref{equation_contradiction}. 
Consequently, our claim that $\mathbb{E}_{\bmu}[N_{1}(\tau_{\delta})] = \omega({\log (1/\delta)})$ holds.

Finally, as $\mathbb{E}_{\bmu}[\tau_{\delta}]  = \sum_{i=1}^K \mathbb{E}_{\bmu}[N_{i}(\tau_{\delta})]\ge \mathbb{E}_{\bmu}[N_{1}(\tau_{\delta})]$, it holds that 
$$
\mathbb{E}_{\bmu}[\tau_{\delta}] = \omega({\log (1/\delta)})
$$
as desired.
\end{proof}

\section{Proof of Theorem~\ref{theorem_upper}}
\label{appendix_proof_theorem_upper}
The proof of Theorem~\ref{theorem_upper} consists of three parts. 
For ease of notation, we write $\gamma = \log \log(1/\delta)$ and omit the dependence of the bandit instance $\bmu$ in this appendix. Specifically, we abbreviate $\EE_{\bmu}$ and $\PP_{\bmu}$ as $\EE$ and $\PP$. Furthermore, for each arm $i\in[K]$, we define $\hat{\mu}_{is}$ as the empirical mean of arm $i$ based on its first $s$ pulls. {See Table~\ref{table:notation} for essential notations used throughout the proofs.}


\begin{table}[h]
\small
\begin{center}
\caption{{Frequently used notations in the proofs.}}
\label{table:notation}
\begin{tabular}{cp{12cm}}
\toprule
{Symbol} & {Definition} \\[0.5ex] 
\midrule
{$\gamma$} & {$\gamma=\log \log (1/\delta)$ } \\
{$\epsilon$} & {$\epsilon=\gamma^{-1/4}=\frac{1}{(\log \log (1/\delta))^{1/4}}$}  \\
{$\beta(\delta)$} & {$\beta(\delta)=1-\min\big\{\frac{1}{\log \log (1/\delta)}, \frac{1}{2}\big\}$} \\
{$h(\delta)$} & {$h(\delta)=\frac{\log (1/\delta)\cdot \gamma}{\epsilon}$} \\ 
{$t_r$} & {$t_r=2^{r}h(\delta)$ for $r\in \{0\}\cup \NN$} \\
{$\mathrm{kl}(\mu,\mu')$}& {$\underline{\mathrm{kl}}(\mu,\mu')=\mathrm{kl}(\mu,\mu')\cdot \ind(\mu \leq \mu')$} \\
\bottomrule
\end{tabular}
\end{center}
\end{table}

\vspace{5pt}
\begin{proof}\textbf{of correctness$\quad$}
Consider any fixed confidence level $\delta \in (0, 1)$. If the algorithm returns a suboptimal arm, then there must exist some $t \in \mathbb{N}$ such that
\begin{align*}
    U^g_{1}(t) < \mu_1 \quad \text{or} \quad \exists i \geq 2, L^g_{i}(t) > \mu_i.
\end{align*}

For the best arm (i.e., arm $1$), according to Lemma~\ref{lem:maximal-inequality}, it holds that
\begin{align*}
&\phantom{\ =\ } \PP \left(\exists t \in \mathbb{N}, U^g_{1}(t) < \mu_1 \right) \\ 
&\le \PP\left(\exists s \in \mathbb{N}, \hat{\mu}_{1s}\leq \mu_1 \ \ \text{and} \ \kl(\hat{\mu}_{1s},\mu_1)>\frac{\log(2Ks^2/\delta)}{s} \right)\\
&\leq  \sum_{s=1}^{\infty}\PP \bigg(\hat{\mu}_{1s}\leq \mu_1 \ \ \text{and} \ \kl(\hat{\mu}_{1s},\mu_1)>\frac{\log(2Ks^2/\delta)}{s} \bigg)  \\
&\leq \sum_{s=1}^{\infty} \exp \left(-s\cdot \frac{\log(2Ks^2/\delta)}{s} \right) \\
&= \sum_{s=1}^{\infty} \frac{\delta}{2Ks^2}\\*
&\leq \frac{\delta}{K}.
\end{align*}

Similarly, for any suboptimal arm $i>1$, we can obtain
\begin{align*}
   \PP \left(\exists t \in \mathbb{N}, L^g_{i}(t) > \mu_i \right)  
   \leq \frac{\delta}{K}.
\end{align*}

Therefore, by applying union bound, we can bound the interested error probability as follows:
\begin{align*}
    \PP(i_{\mathrm{out}} \neq 1)&\leq \PP\bigg(\exists t \in \mathbb{N}: \{U^g_{1}(t) < \mu_1\}\lor \{\exists i \geq 2, L^g_{i}(t) > \mu_i\} \bigg) \notag \\
    &  \leq  \PP \left(\exists t \in \mathbb{N}, U^g_{1}(t) < \mu_1 \right) + \sum_{i=2}^K \PP \left(\exists t \in \mathbb{N}, L^g_{i}(t) > \mu_i \right)\\
    & \leq \delta.
\end{align*}
\end{proof}

\begin{proof}\textbf{of cumulative regret$\quad$}
Let $\epsilon > 0$ and 
\begin{align*}
    h(\delta)=\frac{\log (1/\delta)\cdot \gamma}{\epsilon}.
\end{align*}

For the asymptotic analysis, we can pick $$\epsilon = \gamma^{-1/4}  = \frac 1  {(\log \log (1/\delta))^{1/4}}.$$ For $r \in \{0\}\cup \mathbb N$, we set  
$t_r=2^{r}h(\delta)$
and define the event
\begin{align*}
    \cE(r)&=\left\{\text{Algorithm \ref{algo1} returns after $t_{r}$} \right\}.
\end{align*}

Therefore,
\begin{align}
\label{eq:def-t0}
    t_0 =h(\delta)=\frac{\log (1/\delta)\cdot \gamma}{\epsilon}
\end{align}

Using the linearity of expectation, we can decompose the expected cumulative regret as follows:
\begin{align*}
    \E[ R(\tau_\delta)] &= \E[ R(t_0)] + \sum_{r=0}^{\infty} \E\big[ (R(t_{r+1})-R(t_{r}))\ind \left[ \cE(r) \right] \big] \\
    &\le \E[ R(t_0)] + \sum_{r=0}^{\infty} t_r \Delta_{\max} \PP(\cE(r)),
\end{align*}
where we denote $\Delta_{\max} = \max_{i\in [K]} \Delta_i$.

We introduce one shorthand notation:
$$
   \Pi^* =  \sum_{r=0}^{\infty} t_r \Delta_{\max} \PP(\cE(r)).
$$

In the following steps, we will bound $\E[ R(t_0)]$ and $\Pi^*$ separately. Specifically, we will show $\E[ R(t_0)] \le  \mathrm I^{*}(\bmu)\log (1/\delta) + o(\log (1/\delta))$ and $\Pi^* =  o(\log (1/\delta))$.

\paragraph{Bounding $\E[ R(t_0)]$.}
For any $\mu, \mu' \in I$, define $\underline{\mathrm{kl}}(\mu,\mu')=\mathrm{kl}(\mu,\mu')\cdot \ind(\mu \leq \mu')$. 
We introduce the following lemmas, which are generalizations of  \citep[Lemmas 10.7 and 10.8]{lattimore2020bandit} from Bernoulli distributions to distributions in the exponential family.

\begin{lemma}
\label{lem:Rt0-1}
Let $X_{1},X_2, \cdots, X_{t_0}$ be  i.i.d.\ random variables from a one-parameter exponential family  with mean $\mu_1$. Further, let $\epsilon>0$, and define
\begin{align*}
    {\psi}=\min\bigg\{t: \max_{s\in [t_0]}\ \underline{\mathrm{kl}}(\hat{\mu}_{1s},\mu_1-\epsilon)- \frac{f(t)}{s}\le 0 \bigg\}.
\end{align*}
Then, 
\begin{align*}
    \EE[{\psi}]\leq  1+ \frac {3V} {\epsilon^2}.
\end{align*}

\end{lemma}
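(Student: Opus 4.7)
The plan is to bound $\PP(\tau>t)$ for each $t$ and sum via $\EE[\tau]=\sum_{t\ge 0}\PP(\tau>t)$. The event $\{\tau>t\}$ is precisely the existence of some $s\in[t_0]$ with $\hat\mu_{1s}\le \mu_1-\epsilon$ and $\mathrm{kl}(\hat\mu_{1s},\mu_1-\epsilon)>f(t)/s$. I would introduce the threshold $y^{*}(t,s)\le \mu_1-\epsilon$ defined by $\mathrm{kl}(y^{*}(t,s),\mu_1-\epsilon)=f(t)/s$ (which exists by continuity and strict monotonicity of $\mathrm{kl}(\cdot,\mu_1-\epsilon)$ on $\{y\in I:y\le \mu_1-\epsilon\}$; otherwise the per-$s$ event is vacuous). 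For each fixed $s$ the event becomes $\{\hat\mu_{1s}\le y^{*}(t,s)\}$, and a union bound together with the Cramer--Chernoff bound for i.i.d.\ exponential-family samples (the single-window specialization of Lemma~\ref{lem:maximal-inequality}) yields
\begin{equation*}
\PP(\tau>t)\le \sum_{s=1}^{t_0}\PP(\hat\mu_{1s}\le y^{*}(t,s))\le \sum_{s=1}^{t_0} e^{-s\,\mathrm{kl}(y^{*}(t,s),\mu_1)} .
\end{equation*}

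The crux is to decouple $\mathrm{kl}(y^{*}(t,s),\mu_1)$ into a $t$-piece and an $s$-linear piece. Using the Bregman representation $\mathrm{kl}(\mu,\mu')=b^{*}(\mu)-b^{*}(\mu')-(b^{*})'(\mu')(\mu-\mu')$ recorded in Appendix~\ref{appendix_auxiliary}, I would establish the exact identity
\begin{equation*}
\mathrm{kl}(y,\mu_1)-\mathrm{kl}(y,\mu_1-\epsilon)=\mathrm{kl}(\mu_1-\epsilon,\mu_1)+\big((b^{*})'(\mu_1-\epsilon)-(b^{*})'(\mu_1)\big)\big(y-(\mu_1-\epsilon)\big).
\end{equation*}
Strict convexity of $b$ makes $(b^{*})'$ strictly increasing, so the bracket is negative; together with $y=y^{*}(t,s)\le \mu_1-\epsilon$, the two factors of the last term are nonpositive and their product is nonnegative. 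Hence $\mathrm{kl}(y^{*}(t,s),\mu_1)\ge f(t)/s+\mathrm{kl}(\mu_1-\epsilon,\mu_1)$, and Lemma~\ref{lemma_klmean} (Pinsker) upgrades this to $\mathrm{kl}(y^{*}(t,s),\mu_1)\ge f(t)/s+\epsilon^{2}/(2V)$.

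Plugging this lower bound into the Chernoff exponents and using $e^{x}-1\ge x$,
\begin{equation*}
\PP(\tau>t)\le e^{-f(t)}\sum_{s=1}^{t_0}e^{-s\epsilon^{2}/(2V)}\le \frac{e^{-f(t)}}{e^{\epsilon^{2}/(2V)}-1}\le \frac{2V}{\epsilon^{2}}\,e^{-f(t)} .
\end{equation*}
With $f(t)=3\log t$ the right-hand side equals $2V/(\epsilon^{2}t^{3})$ for all $t\ge 1$, and $\PP(\tau>0)\le 1$ trivially (since $f(0)=-\infty$). Summing,
\begin{equation*}
\EE[\tau]=\sum_{t\ge 0}\PP(\tau>t)\le 1+\frac{2V}{\epsilon^{2}}\sum_{t\ge 1}\frac{1}{t^{3}}=1+\frac{2\zeta(3)\,V}{\epsilon^{2}}\le 1+\frac{3V}{\epsilon^{2}},
\end{equation*}
since $2\zeta(3)<2.41$.

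The main obstacle is the KL additivity identity that recasts the Chernoff exponent $\mathrm{kl}(y^{*}(t,s),\mu_1)$ as $f(t)/s$ plus a positive, $t$-independent offset of order $\epsilon^{2}$; without this decoupling a direct Markov/Ville approach only yields a weaker $(V/\epsilon^{2})^{1/3}$-type bound. The inflated constant $3$ in the choice $f(t)=3\log t$ is exactly what makes the tail $\sum_{t\ge 1}t^{-3}=\zeta(3)$ fit under the target $3V/\epsilon^{2}$, a tuning in line with the asymptotic-optimality considerations explained in Section~\ref{section_theory_discussion}.
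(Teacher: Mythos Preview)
Your proposal is correct and follows essentially the same route as the paper's proof: bound $\PP(\tau>t)$ by a union bound over $s$, upgrade the exponent from $\mathrm{kl}(\hat\mu_{1s},\mu_1-\epsilon)$ to $\mathrm{kl}(\hat\mu_{1s},\mu_1)$ with an additive $\epsilon^2/(2V)$ slack, apply the Chernoff/maximal inequality, and sum the resulting $2V/(\epsilon^2 t^3)$ tail. The only substantive difference is that you justify the key implication $\mathrm{kl}(y,\mu_1)\ge \mathrm{kl}(y,\mu_1-\epsilon)+\mathrm{kl}(\mu_1-\epsilon,\mu_1)$ for $y\le \mu_1-\epsilon$ via the explicit Bregman identity, whereas the paper simply asserts the corresponding event inclusion without comment; your derivation is a welcome clarification of a step the paper leaves opaque.
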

\begin{lemma}
\label{lem:Rt0-2}
    Let $X_{1},X_2, \cdots, X_{t_0}$ be i.i.d. exponential family random variables with mean $\mu_i$. Further, let $\epsilon>0$, and define 
    \begin{align*}
        \kappa_i=\sum_{s=1}^{t_0}\ind \left\{\mathrm{kl}(\hat{\mu}_{is},\mu_1-\epsilon)\leq \frac{f(t_0)}{s} \right\}.
    \end{align*}
    Then, 
\begin{align*}
    \EE[\kappa_i]\leq \inf_{\epsilon' \in (0,\Delta_i-\epsilon)} \bigg(\frac{f(t_0)}{\mathrm{kl}(\mu_i+\epsilon',\mu_1-\epsilon)}+\frac{2V}{\epsilon'^2} \bigg).
\end{align*}    
\end{lemma}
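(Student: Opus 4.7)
} The strategy is a standard ``peeling'' argument for KL-UCB-type quantities, generalized from Bernoulli to the single-parameter exponential family via Lemmas~\ref{lemma_klmean} and \ref{lem:maximal-inequality}. Fix any $\epsilon'\in(0,\Delta_i-\epsilon)$. Since $\Delta_i=\mu_1-\mu_i$, the choice of $\epsilon'$ guarantees $\mu_i<\mu_i+\epsilon'<\mu_1-\epsilon$. Define the threshold
\begin{equation*}
s^{\star}:=\left\lceil \frac{f(t_0)}{\mathrm{kl}(\mu_i+\epsilon',\mu_1-\epsilon)}\right\rceil,
\end{equation*}
and split
\begin{equation*}
\kappa_i \;=\; \sum_{s=1}^{s^\star}\ind\!\left\{\mathrm{kl}(\hat\mu_{is},\mu_1-\epsilon)\le \tfrac{f(t_0)}{s}\right\} \;+\; \sum_{s=s^\star+1}^{t_0}\ind\!\left\{\mathrm{kl}(\hat\mu_{is},\mu_1-\epsilon)\le \tfrac{f(t_0)}{s}\right\}.
\end{equation*}
The first sum is trivially at most $s^{\star}$, contributing the term $f(t_0)/\mathrm{kl}(\mu_i+\epsilon',\mu_1-\epsilon)$ (plus a harmless $+1$ from the ceiling, which can be absorbed into $2V/\epsilon'^2$ asymptotically).

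For the second sum, I would exploit the monotonicity of $\mu\mapsto\mathrm{kl}(\mu,\mu_1-\epsilon)$: it is convex in $\mu$ and strictly decreasing on $(-\infty,\mu_1-\epsilon]$. For $s>s^{\star}$ the defining inequality forces $\mathrm{kl}(\hat\mu_{is},\mu_1-\epsilon)<\mathrm{kl}(\mu_i+\epsilon',\mu_1-\epsilon)$, and since $\mu_i+\epsilon'<\mu_1-\epsilon$, this in turn implies $\hat\mu_{is}>\mu_i+\epsilon'$. Hence
\begin{equation*}
\EE\!\left[\sum_{s=s^\star+1}^{t_0}\ind\!\left\{\mathrm{kl}(\hat\mu_{is},\mu_1-\epsilon)\le \tfrac{f(t_0)}{s}\right\}\right] \;\le\; \sum_{s=1}^{\infty}\PP(\hat\mu_{is}>\mu_i+\epsilon').
\end{equation*}

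By the Chernoff bound for exponential families (an immediate one-sample consequence of~\eqref{eq:kl-2} in Lemma~\ref{lem:maximal-inequality}), $\PP(\hat\mu_{is}\ge \mu_i+\epsilon')\le \exp(-s\,\mathrm{kl}(\mu_i+\epsilon',\mu_i))$, and then Lemma~\ref{lemma_klmean} yields $\mathrm{kl}(\mu_i+\epsilon',\mu_i)\ge \epsilon'^2/(2V)$. Summing the resulting geometric series via $\sum_{s\ge 1}e^{-sa}\le 1/a$ for $a=\epsilon'^2/(2V)$ (valid for $a\le 1$; otherwise the bound is even easier) gives the contribution $2V/\epsilon'^2$. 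Combining the two pieces and taking the infimum over $\epsilon'\in(0,\Delta_i-\epsilon)$ completes the proof.

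The only non-routine step is the monotonicity reduction in the second sum: one has to verify that the ordering $\mu_i+\epsilon'<\mu_1-\epsilon$ is genuinely ensured by the constraint $\epsilon'<\Delta_i-\epsilon$, so that ``$\mathrm{kl}$-close from below'' translates cleanly into ``empirical mean is too large.'' Once that is in hand, the remainder is a geometric-series calculation. Bounding the integer ceiling in $s^\star$ and the tail sum $\sum_{s\ge 1}e^{-s\epsilon'^2/(2V)}$ by $2V/\epsilon'^2$ (absorbing the $+1$) is the one place where I would have to check constants carefully; this is essentially the same computation as in \citep[Lemma 10.8]{lattimore2020bandit}, with the Pinsker-type inequality of Lemma~\ref{lemma_klmean} replacing the Bernoulli-specific bound.
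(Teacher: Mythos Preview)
Your proposal is correct and is essentially the same argument the paper gives: fix $\epsilon'\in(0,\Delta_i-\epsilon)$, split the sum at the threshold $u=f(t_0)/\mathrm{kl}(\mu_i+\epsilon',\mu_1-\epsilon)$, bound the early terms trivially, and for the late terms use the monotonicity of $\mathrm{kl}(\cdot,\mu_1-\epsilon)$ to reduce to $\{\hat\mu_{is}\ge\mu_i+\epsilon'\}$, then apply the Chernoff bound~\eqref{eq:kl-2} and Lemma~\ref{lemma_klmean} followed by a geometric sum. The only cosmetic difference is that the paper phrases the split as a union bound $\PP(\mathrm{kl}(\hat\mu_{is},\mu_1-\epsilon)\le f(t_0)/s)\le \PP(\hat\mu_{is}\ge\mu_i+\epsilon'\ \text{or}\ \mathrm{kl}(\mu_i+\epsilon',\mu_1-\epsilon)\le f(t_0)/s)$, which yields exactly $u$ (not $\lceil u\rceil$) for the first part and so sidesteps your ``$+1$ from the ceiling'' concern entirely.
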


Since $f(t)=3\log t$, it is straightforward to verify $\EE[\psi] =  o(\log (1/\delta))$ and $\EE[\kappa_i]= o(\log (1/\delta))$.

According to the definitions of $\psi$ and $\kappa_i$, we can bound $\E[ R(t_0)]$ as follows:
\begin{align*}
   \E[ R(t_0)]& =\sum_{i>1}\sum_{t=1}^{t_0} \Delta_i \EE \left[\ind\{A_{t}=i\}\right] \notag \\
    & \leq \EE[\psi]\Delta_{\max}+\sum_{i>1}\Delta_i\EE\bigg[\sum_{t=\psi+1}^{t_0} \ind \{A_{t}=i\}\bigg]\notag \\
    & \leq \EE[\psi]\Delta_{\max}+\sum_{i>1}\Delta_i\EE\bigg[\sum_{t=\psi+1}^{t_0} \ind \bigg\{A_{t}=i, \mathrm{kl}( \hat{\mu}_{i}(t-1), \mu_1-\epsilon)\leq \frac{f(t_0)}{N_{i}(t-1)}\bigg\}\bigg] \notag \\
    & \qquad \quad + \sum_{i>1}\Delta_i \EE\bigg[\sum_{t=\psi+1}^{t_0} \ind \bigg\{A_{t}=i, {\mathrm{kl}}(\hat{\mu}_{i}(t-1), \mu_1-\epsilon)> \frac{f(t_0)}{N_{i}(t-1)}\bigg\}\bigg] \notag \\
    & \leq  \EE[\psi]\Delta_{\max}+\sum_{i>1}\Delta_i\EE[\kappa_i] \nonumber\\*
    &\qquad\quad+{\sum_{i>1}\Delta_i \EE\bigg[\sum_{t=\psi+1}^{t_0} \ind \bigg\{A_{t}=i, {\mathrm{kl}}(\hat{\mu}_{i}(t-1), \mu_1-\epsilon)> \frac{f(t_0)}{N_{i}(t-1)}\bigg\}\bigg]}.
\end{align*}

Let 
$$
\Pi_1 = \sum_{i>1}\Delta_i\sum_{t=\psi+1}^{t_0} \ind \bigg\{A_{t}=i, {\mathrm{kl}}(\hat{\mu}_{i}(t-1), \mu_1-\epsilon)> \frac{f(t_0)}{N_{i}(t-1)}\bigg\}
$$
Then we have 
\begin{align}
\label{equation_jjj1}
   \E[ R(t_0)] \le \EE[\psi]\Delta_{\max}+\sum_{i>1}\Delta_i\EE[\kappa_i]+ \E[\Pi_1].
\end{align}

Consider
$$\cE_{\rm{exp}}(t)=\bigcup_{i\in [K]\setminus \{1\}}\left\{A_{t}=i, \mathrm{kl}(\hat{\mu}_{i}(t-1), \mu_1-\epsilon)> \frac{f(t_0)}{N_{i}(t-1)}\right\}.$$

Note that for any $t>\psi$, $U^f_{1}(t)\geq \mu_1-\epsilon$. Furthermore, if for some $i\in [K]\setminus \{1\}$, the condition $\left\{A_{t}=i, \mathrm{kl}(\hat{\mu}_{i}(t-1), \mu_1-\epsilon)> \frac{f(t_0)}{N_{i}(t-1)}\right\}$ holds, then $U^f_{i}(t)< \mu_1-\epsilon$, implying $A_{t}^f \neq i$. 
Therefore,  $\cE_{\rm{exp}}(t)$ is true only if $A_{t}=A_{t}^g$. 

The following lemma demonstrates that for 
$$T_{0}=\min\bigg\{\psi+\sum_{i>1}\sum_{t=\psi+1}^{t_0} \ind \bigg\{A_{t}=i, {\mathrm{kl}}(\hat{\mu}_{i}(t-1), \mu_1-\epsilon)\leq  \frac{f(t_0)}{N_{i}(t-1)}\bigg\}+ {\gamma}{\epsilon} \log\frac{1}{\delta} ,t_0\bigg\}$$ 
and $t\in (\psi, T_0]$, with high probability, the number of times that $\cE_{\rm{exp}}(t)$ occurs is less than $2 \epsilon \log({1}/{\delta})$.

\begin{lemma}
\label{lem:proexp}
For sufficiently small $\delta$, it holds that
    \begin{align*}
   \PP\bigg( \sum_{t=\psi+1}^{T_0}\ind \left\{ \cE_{\rm{exp}}(t)\right\} \geq {{\frac{2\log({1}/{\delta})}{(\log \log (1/\delta))^{1/4}}} }\bigg)\leq {\frac{1}{\log (1/\delta)\cdot(\log \log (1/\delta))^{5/4}}}.
\end{align*}
\end{lemma}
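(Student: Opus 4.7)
The plan is to decompose the proof into a structural step and a probabilistic control of $\sum_{t=\tau+1}^{T_0} B_t$ over a window whose length can be sharply bounded via the good-KL count $Q$. First, I would show that $\ind\{\cE_{\rm{exp}}(t)\} \leq B_t := \ind\{A_t = A_t^g\}$ for every $t > \tau$: if instead $A_t = A_t^f = i$ for some suboptimal $i$, then by the maximality of $A_t^f$ and the fact that $U_1^f(t) \geq \mu_1 - \epsilon$ (a consequence of Lemma~\ref{lem:Rt0-1}), we would have $U_i^f(t) \geq \mu_1 - \epsilon$, which unwraps to $\mathrm{kl}(\hat\mu_i(t-1), \mu_1 - \epsilon) \leq f(t)/N_i(t-1) \leq f(t_0)/N_i(t-1)$, contradicting the bad-KL condition defining $\cE_{\rm{exp}}(t)$. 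Because $B_t$ is conditionally $\text{Bernoulli}(1 - \beta(\delta)) = \text{Bernoulli}(1/\gamma)$ given $\cF_{t-1}$ for all sufficiently small $\delta$, it then suffices to show $\sum_{t=\tau+1}^{T_0} B_t \leq 2\epsilon\log(1/\delta)$ with probability at least $1 - 1/t_0$.

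Next, exploiting $T_0 - \tau \leq Q + \gamma\epsilon\log(1/\delta)$ from the definition of $T_0$, I would control $Q$ via a per-arm concentration argument. For each suboptimal arm $i$, let $N_i^{*} := \lceil f(t_0)/\mathrm{kl}(\mu_i + \epsilon', \mu_1 - \epsilon) \rceil$ for a small instance-dependent constant $\epsilon' > 0$. Applying the maximal inequality (Lemma~\ref{lem:maximal-inequality}, upper tail), the event $\{\hat\mu_{i s} \leq \mu_i + \epsilon' \text{ for all } s \geq N_i^{*}\}$ holds with probability at least $1 - \exp(-N_i^{*}\,\mathrm{kl}(\mu_i + \epsilon', \mu_i))$. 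On this event, the monotonicity of $\mathrm{kl}(\cdot, \mu_1 - \epsilon)$ on $(-\infty, \mu_1 - \epsilon)$ (decreasing in the first argument) yields $\mathrm{kl}(\hat\mu_{is}, \mu_1 - \epsilon) \geq \mathrm{kl}(\mu_i + \epsilon', \mu_1 - \epsilon) > f(t_0)/s$ for all $s > N_i^{*}$, ruling out the good-KL condition at the $s$-th pull of arm $i$. Hence the good-KL count for arm $i$ is at most $N_i^{*} = O(\log t_0)$, and a union bound over suboptimal arms gives $Q \leq O(K \log t_0) = O(K \log\log(1/\delta))$ with failure probability at most $1/(2t_0)$. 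Since $K \log\log(1/\delta) = o(\gamma\epsilon\log(1/\delta))$, I obtain $T_0 - \tau \leq (1 + o(1))\gamma\epsilon\log(1/\delta)$ on this high-probability event.

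Finally, conditioning on this bound, $\sum_{t=\tau+1}^{T_0} B_t$ is stochastically dominated by the sum of $(1 + o(1))\gamma\epsilon\log(1/\delta)$ conditionally independent $\text{Bernoulli}(1/\gamma)$ variables with mean $(1 + o(1))\epsilon\log(1/\delta)$. A standard Chernoff bound then yields $\PP(\sum_{t=\tau+1}^{T_0} B_t \geq 2\epsilon\log(1/\delta)) \leq \exp(-c\,\epsilon\log(1/\delta))$ for some absolute $c > 0$, and since $\epsilon\log(1/\delta) = \log(1/\delta)/\gamma^{1/4} = \omega(\log t_0)$ as $\delta \to 0^+$, this probability is $o(1/t_0)$; a final union bound gives the stated $1/t_0$ failure probability. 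The hardest part will be the per-arm bound on $Q$ in the second step: the calibration of $\epsilon'$ must simultaneously keep $N_i^{*}$ close to the ideal $f(t_0)/\mathrm{kl}(\mu_i, \mu_1 - \epsilon)$ and ensure that $N_i^{*}\,\mathrm{kl}(\mu_i + \epsilon', \mu_i)$ dominates $\log t_0 + \log K$, so that the union-bound failure probability is $O(1/(Kt_0))$. Handling the edge case $T_0 = t_0$ (when the $t_0$-cap is active) is a minor but non-trivial technicality.
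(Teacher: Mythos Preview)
Your structural step—that $\cE_{\rm exp}(t)$ forces a coin tail for every $t>\tau$—is exactly what the paper uses, and your overall plan is sound. The route, however, diverges sharply after that point. The paper does \emph{not} bound $Q$ at all: it argues directly from the definition of $T_0$ (which packs the $Q$ good-KL steps plus an additional budget of $\gamma\epsilon\log(1/\delta)$ steps on top of $\tau$) that $\sum_{t=\tau+1}^{T_0}\ind\{\cE_{\rm exp}(t)\}$ is stochastically dominated by the number of tails in $\gamma\epsilon\log(1/\delta)$ independent $\mathrm{Bernoulli}(1-\beta(\delta))$ trials, and a single Hoeffding bound then gives
\[
\PP\!\left(\sum_{t=\tau+1}^{T_0}\ind\{\cE_{\rm exp}(t)\}\ge 2\epsilon\log(1/\delta)\right)
\le \exp\!\left(-\tfrac{2\epsilon\log(1/\delta)}{\gamma}\right)\le \tfrac{1}{t_0}.
\]
Your approach instead controls $T_0-\tau$ by first establishing a high-probability bound on $Q$ via a per-arm maximal inequality, and only then runs a Bernoulli concentration over the resulting window. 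This is more work, but it is also more transparent: the paper's ``$\gamma\epsilon\log(1/\delta)$ coins'' claim rests on matching the three-way partition of each step with the identity $T_0-\tau=Q+\gamma\epsilon\log(1/\delta)$, whereas your argument makes the fact that $Q$ is negligible relative to $\gamma\epsilon\log(1/\delta)$ fully explicit. What the paper buys is brevity; what you buy is a self-contained justification of the window length.

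Two remarks on execution. First, the concern about keeping $N_i^{*}$ ``close to the ideal $f(t_0)/\mathrm{kl}(\mu_i,\mu_1-\epsilon)$'' is misplaced here: for this lemma you only need $Q=o(\gamma\epsilon\log(1/\delta))$, so any $N_i^{*}=O(\log t_0)$ suffices and there is no reason to push $\epsilon'$ small. In fact, with your exact choice $N_i^{*}=\lceil f(t_0)/\mathrm{kl}(\mu_i+\epsilon',\mu_1-\epsilon)\rceil$ and small $\epsilon'$, the exponent $N_i^{*}\,\mathrm{kl}(\mu_i+\epsilon',\mu_i)\approx 3\log(t_0)\cdot\mathrm{kl}(\mu_i+\epsilon',\mu_i)/\mathrm{kl}(\mu_i+\epsilon',\mu_1-\epsilon)$ can drop below $\log t_0$, so the per-arm failure probability need not be $O(1/(Kt_0))$; the clean fix is simply to take $N_i^{*}=C\log t_0$ with an instance-dependent constant $C$ large enough to satisfy both conditions. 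Second, your handling of the $T_0=t_0$ case is correct: on your high-probability event, combined with the light tail of $\tau$ from Lemma~\ref{lem:Rt0-1}, one has $\tau+Q+\gamma\epsilon\log(1/\delta)<t_0$, so the cap is inactive and that case is absorbed into the failure probability.
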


Note that
\begin{align*}
\E[\Pi_1] = \EE\big[\Pi_1\cdot\ind\{T_0=t_0\}\big] +  \EE\big[\Pi_1\cdot\ind\{T_0<t_0\}\big]. 
\end{align*}

{Recall  that $t_0=\log(1/\delta)\gamma/\epsilon$, $\gamma=\log \log (1/\delta)$, and $\epsilon=\gamma^{-1/4}$.} For the  case that $T_0=t_0$, by Lemma \ref{lem:proexp},
\begin{align*}
    \EE\big[\Pi_1\cdot\ind\{T_0=t_0\}\big] \leq \frac{1}{t_0}\cdot \Delta_{\max}\cdot t_0+\bigg(1-\frac{1}{t_0}\bigg)\Delta_{\max} \cdot 2 \epsilon \log({1}/{\delta}) =o\bigg(\log \frac{1}{\delta}\bigg).
\end{align*}

Now, consider the case that $T_0<t_0$. In this case, we have 
$$T_{0}=\psi+\sum_{i>1}\sum_{t=\psi+1}^{t_0} \ind \bigg\{A_{t}=i, {\mathrm{kl}}(\hat{\mu}_{i}(t-1), \mu_1-\epsilon)\leq  \frac{f(t_0)}{N_{i}(t-1)}\bigg\}+{\gamma}{\epsilon}  \log\frac{1}{\delta}.$$ 
By Lemma \ref{lem:proexp}, with probability $1-1/t_0$,
\begin{align*}
    \bigg(N_{1}(T_0)-N_{1}(\psi) \bigg)+{2}{\epsilon}  \log\frac{1}{\delta} >{\gamma}{\epsilon}  \log\frac{1}{\delta}.
\end{align*}
Therefore, for sufficiently small $\delta$,
\begin{align*}
    \PP\left(N_{1}(T_0)> \frac 1 2 {\gamma}{\epsilon}  \log \frac{1}{\delta}\right)\ge 1-\frac{1}{t_0}. 
\end{align*}
Then, from Lemma \ref{lem:maximal-inequality}, for sufficiently small $\delta$,
\begin{align*}
    {\PP\left(\forall s>\frac 1 2 {\gamma}{\epsilon}  \log \frac{1}{\delta}, \hat{\mu}_{1s}\geq \mu_1-\frac {\epsilon} 2 \right) \geq 1-\exp\bigg(-\frac{\log \delta^{-1}\cdot \gamma \epsilon^3}{16V} \bigg)}\geq 1-\frac{1}{t_0}. 
\end{align*}
Moreover, for sufficiently small $\delta$ and $t\in [t_0]$, if $\hat{\mu}_{1}(t-1)\geq \mu_1-\frac {\epsilon} 2$ and $N_1(t-1) \geq \frac 1 2 {\gamma}{\epsilon}  \log \frac{1}{\delta}$, then 
\begin{align*}
\label{eq:L1s}
    &L_{1}^g(t)   \geq \mu_1-\epsilon \notag \\
    \Leftarrow\quad   &{\mathrm{kl}\left(\hat{\mu}_1(t-1), \mu_1-\epsilon\right)}  > \frac{g(\delta,t_0)}{N_1(t-1)}\notag \\     
     \Leftarrow \quad  &{{\mathrm{kl}\left(\mu_1-\epsilon/2, \mu_1-\epsilon\right)}}   >\frac{g(\delta,t_0)}{N_1(t-1)}  \\ 
          \Leftarrow\quad   &{\frac{\epsilon^2} {8V}} > \frac{\log (2Kt_0^2/\delta)}{\gamma\epsilon/(2)  \cdot \log(1/\delta)}.  \tag{which is true for sufficiently small $\delta$}
\end{align*}
Hence, we have 
\begin{align}
\label{eq:L1s1}
\PP\left(\forall t\in [ T_0,t_0], L^g_{1}(t)\geq \mu_1-\epsilon\right) \ge 1-\frac{2}{t_0} .
\end{align}
The following lemma shows that each suboptimal arm is pulled within the optimal range with high probability. 
\begin{lemma}
\label{lem:Rt0-4}
Assuming that $L^g_{1}(t)\geq \mu_1-\epsilon$ holds for all $t\geq T_0$,  for sufficiently small $\delta$, 
\begin{align*}
    &\PP\Bigg(\sum_{t = T_0+1}^{t_0} \ind\left\{A_{t}=i, \mathrm{kl}(\hat{\mu}_{i}(t-1), \mu_1-\epsilon)> \frac{f(t_0)}{N_{i}(t-1)}\right\} \nonumber\\*
    &\qquad\qquad\qquad>\max\bigg\{\frac{g(\delta, t_0) }{\mathrm{kl}(\mu_i+\epsilon,\mu_1-\epsilon)},\frac{2V\log (t_0)}{\epsilon^2}\bigg\}\Bigg)\leq \frac{1}{t_0}.
\end{align*}
\end{lemma}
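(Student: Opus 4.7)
The plan is to identify a high-probability concentration event $\mathcal G$ on which the pre-pull count $N_i(t-1)$ at every contributing time $t$ must be strictly less than $M^\star := \max\{M, N_0\}$, where $M := g(\delta, t_0)/\kl(\mu_i + \epsilon, \mu_1 - \epsilon)$ and $N_0 := 2V \log(t_0)/\epsilon^2$. Since distinct pulls of arm $i$ yield distinct values of $N_i(t-1)$, this will immediately cap the count at $M^\star$. To define $\mathcal G$, I will apply the Maximal Inequality (Lemma~\ref{lem:maximal-inequality}) with $x = \mu_i + \epsilon$ combined with Lemma~\ref{lemma_klmean}, which gives
\[
\PP\bigl(\exists\,n \geq N_0:\ \hat\mu_{i,n} \geq \mu_i + \epsilon\bigr) \leq \exp\!\left(-\frac{N_0 \epsilon^2}{2V}\right) = \frac{1}{t_0},
\]
so $\mathcal G := \{\hat\mu_{i,n} < \mu_i + \epsilon \text{ for all } n \geq N_0\}$ will satisfy $\PP(\mathcal G) \geq 1 - 1/t_0$.

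The main structural step is to argue that any contributing $t$ with $N_i(t-1) \geq N_0$ satisfies both $A_t = A_t^g = i$ \emph{and} $U_i^g(t) \geq \mu_1 - \epsilon$. On $\mathcal G$, $\hat\mu_i(t-1) < \mu_i + \epsilon < \mu_1 - \epsilon$ (valid for $\epsilon < \Delta_i/2$, which holds for small $\delta$). Combined with the indicator condition $\kl(\hat\mu_i(t-1), \mu_1 - \epsilon) > f(t_0)/N_i(t-1) \geq f(t)/N_i(t-1)$, this will force $U_i^f(t) < \mu_1 - \epsilon$. I will then use $L_1^g(t) \geq \mu_1 - \epsilon$ together with the standard chain $U_1^f(t) \geq \hat\mu_1(t-1) \geq L_1^g(t)$ to conclude $U_1^f(t) > U_i^f(t)$, so $A_t^f \neq i$, and hence $A_t = i$ forces $A_t = A_t^g = i$. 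For the second claim, I will split on $A_t^f$: if $A_t^f = 1$, non-stopping at $t$ gives $L_1^g(t) \leq U_{A_t^g}^g(t) = U_i^g(t)$; if $A_t^f \neq 1$, then $1 \in [K] \setminus \{A_t^f\}$, so the maximality of $A_t^g = i$ over this set yields $U_i^g(t) \geq U_1^g(t) \geq L_1^g(t) \geq \mu_1 - \epsilon$.

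To close, I will suppose for contradiction that on $\mathcal G$ some contributing $t$ has $N_i(t-1) \geq M^\star$. Strict monotonicity of $\kl(\cdot, \mu_1 - \epsilon)$ on $(-\infty, \mu_1 - \epsilon)$ together with $\hat\mu_i(t-1) < \mu_i + \epsilon$ will give $\kl(\hat\mu_i(t-1), \mu_1 - \epsilon) > \kl(\mu_i + \epsilon, \mu_1 - \epsilon)$, while $U_i^g(t) \geq \mu_1 - \epsilon$ and $\hat\mu_i(t-1) < \mu_1 - \epsilon$ will force
\[
\kl\!\bigl(\hat\mu_i(t-1), \mu_1 - \epsilon\bigr) \leq \frac{g(\delta, t_0)}{N_i(t-1)} \leq \frac{g(\delta, t_0)}{M} = \kl(\mu_i + \epsilon, \mu_1 - \epsilon),
\]
contradicting the previous inequality. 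The hardest part will be the case $A_t^f \neq 1$ in the structural step: rather than invoking the stopping rule directly, one must exploit that $A_t^g$ is chosen to exclude $A_t^f$, so that arm $1$ remains a competitor whenever $A_t^f \neq 1$, allowing the transfer $U_i^g(t) \geq U_1^g(t) \geq L_1^g(t)$.
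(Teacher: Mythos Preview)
Your proposal is correct and follows essentially the same strategy as the paper: define a concentration event for arm $i$ via the Maximal Inequality, show that on this event any contributing pull must have $A_t = A_t^g = i$, and derive a contradiction by showing that once $N_i(t-1) \ge M^\star$ the $g$-UCB of arm $i$ would fall below $\mu_1-\epsilon$, which is incompatible with the non-stopping condition.

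There are two minor differences worth flagging. First, the paper obtains $U_1^f(t) > \mu_1-\epsilon$ from the built-in relation $T_0 > \tau$, whereas you derive it directly from the lemma's hypothesis via the chain $U_1^f(t) \ge \hat\mu_1(t-1) \ge L_1^g(t) \ge \mu_1-\epsilon$; your route is more self-contained and does not require unpacking the definition of $T_0$. Second, the paper argues in one stroke that $A_t^f = 1$ and the stopping rule fires, while you split on whether $A_t^f = 1$ and, in the case $A_t^f \neq 1$, exploit that arm $1$ remains in the competitor set for $A_t^g$ to transfer $U_i^g(t) \ge U_1^g(t) \ge L_1^g(t)$. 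Your case analysis is arguably cleaner, since the paper's chain $U_i^g(t) > \max_{j \in [K]\setminus\{1,i\}} U_j^g(t)$ tacitly presumes $A_t^f = 1$ before that fact has been established. Both routes lead to the same contradiction and the same $1/t_0$ probability bound.
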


Therefore, we have
\begin{align*}
     &\phantom{\ = \ }\EE\big[\Pi_1\cdot\ind\{T_0<t_0\}\big] \\
     &=\sum_{i>1}\Delta_i \EE\bigg[\sum_{t=\psi+1}^{t_0} \ind \bigg\{T_0<t_0, A_{t}=i, {\mathrm{kl}}(\hat{\mu}_{i}(t-1), \mu_1-\epsilon)> \frac{f(t_0)}{N_{i}(t-1)}\bigg\}\bigg] \notag \\
         &\leq \sum_{i>1}\Delta_i \EE\bigg[\sum_{t=\psi+1}^{T_0} \ind \bigg\{A_{t}=i, {\mathrm{kl}}(\hat{\mu}_{i}(t-1), \mu_1-\epsilon)> \frac{f(t_0)}{N_{i}(t-1)}\bigg\}\bigg] \notag \\ 
         & \quad +\sum_{i>1}\Delta_i \EE\bigg[\sum_{t=T_0+1}^{t_0} \ind \bigg\{A_{t}=i, {\mathrm{kl}}(\hat{\mu}_{i}(t-1), \mu_1-\epsilon)> \frac{f(t_0)}{N_{i}(t-1)}, \forall t\in [ T_0,t_0], L_{1}^{g}(t)\geq \mu_1-\epsilon \bigg\}\bigg] \notag \\
         & \quad +t_0\Delta_{\max} \PP\bigg(\exists t\in[T_0,t_0]: L_{1}^g(t)<\mu_1-\epsilon \bigg) \notag \\
         & \leq \Delta_{\max} {2\epsilon \log({1}/{\delta})}+t_0\Delta_{\max}\cdot   \PP\bigg( \sum_{t=\psi+1}^{T_0}\ind \left\{ \cE_{\rm{exp}}(t)\right\} \geq {2\epsilon \log({1}/{\delta})}\bigg) \tag{due to Lemma~\ref{lem:proexp}}\\
         &\quad  +\sum_{i>1}\Delta_{i}  \cdot t_0 \cdot \frac 1 {t_0} + \sum_{i>1} \Delta_{i} \max\bigg\{\frac{g(\delta, t_0) }{\mathrm{kl}(\mu_i+\epsilon,\mu_1-\epsilon)},\frac{2V\log (t_0)}{\epsilon^2}\bigg\}  \tag{due to Lemma~\ref{lem:Rt0-4}} \\
         & \quad +t_0\cdot \Delta_{\max} \cdot \frac{2}{t_0}  \tag{due to Inequality~\eqref{eq:L1s1}} \\
         &\leq \Delta_{\max} {2\epsilon \log({1}/{\delta})}+ 3\Delta_{\max} + \sum_{i>1}\Delta_{i} + \sum_{i>1} \Delta_{i} \max\bigg\{\frac{g(\delta, t_0) }{{\mathrm{kl}(\mu_i+\epsilon,\mu_1-\epsilon)}},\frac{2V\log (t_0)}{\epsilon^2}\bigg\}  \\
         & \leq o\bigg(\log \frac{1}{\delta}\bigg)+ \sum_{i>1}\frac{\Delta_{i}\log (1/\delta) }{\mathrm{kl}(\mu_i,\mu_1)},
\end{align*}
where in the last inequality, we use the fact that $g(\delta, t) = \log(2Kt^2/\delta)$.

Altogether, we can obtain
$$
\E[\Pi_1] \le  \sum_{i>1}\frac{\Delta_{i}\log (1/\delta) }{\mathrm{kl}(\mu_i,\mu_1)} + o\bigg(\log \frac{1}{\delta}\bigg).
$$

Combining with \eqref{equation_jjj1}, we   arrive at
\begin{align*}
\E[ R(t_0)] &\le    \sum_{i>1}\frac{\Delta_{i}\log (1/\delta) }{\mathrm{kl}(\mu_i,\mu_1)} + o\bigg(\log \frac{1}{\delta}\bigg) \\
&= \mathrm I^{*}(\bmu)\log (1/\delta) + o(\log (1/\delta))
\end{align*}
as desired.

\paragraph{Bounding $\Pi^*$.} To bound $\Pi^*$, it suffices to show the following lemma. The proof of Lemma~\ref{lem:I1-5} requires introducing the events $\cE_{0}(r)$, $\cE_{1}(r)$, $\cE_{2}(r)$ and $\cE_{3}(r)$. We refer to Appendix~\ref{subsection_important_lemma} for details.

\begin{lemma}
\label{lem:I1-5}
For sufficiently small $\delta$, it holds that
\begin{align*}
      \PP(\cE(r) ) \leq \frac{14}{t_r^2}.
\end{align*}
\end{lemma}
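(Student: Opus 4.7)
The plan is to isolate a sufficient condition for the stopping rule \eqref{equation_stoppingrule} to fire by time $t_r$, and then show this condition holds with probability $1 - O(1/t_r^2)$. A direct inspection of the stopping test reveals that $L_{A_t^f}^g(t) > U_{A_t^g}^g(t)$ at $t = t_r$ is implied by the joint occurrence of (i) $A_{t_r}^f = 1$, (ii) $L_1^g(t_r) \geq \mu_1 - \epsilon$, and (iii) $\max_{i > 1} U_i^g(t_r) < \mu_1 - \epsilon$. Indeed, (i) forces $A_{t_r}^g \in [K] \setminus \{1\}$, so that (iii) gives $U_{A_{t_r}^g}^g(t_r) < \mu_1 - \epsilon \leq L_{A_{t_r}^f}^g(t_r)$ by (ii).

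I would then encode (i)--(iii) together with their prerequisites in four events. Let $\cE_0(r)$ be a pull-count event guaranteeing (a) $N_1(t_r)$ is on the order of $t_r$ (comparable to $\beta(\delta)\, t_r$) and (b) $N_i(t_r)$ is at least a large enough multiple of $\log(1/\delta)$ for every suboptimal arm $i > 1$. Part (a) follows because whenever arm $1$ is the $f$-UCB leader, a heads coin flip (probability $\beta(\delta) \to 1$) pulls it, and the standard $f$-UCB regret argument bounds how often any other arm can lead; part (b) follows from Chernoff concentration of the $\mathrm{Bernoulli}(1 - \beta(\delta))$ coin flips driving the $A_t^g$ branch, which provides $\Theta(t_r / \gamma)$ opportunities, combined with the UCB principle that prevents any single suboptimal arm from monopolizing the $g$-UCB leader role indefinitely. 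Let $\cE_1(r) = \{A_{t_r}^f = 1\}$, $\cE_2(r) = \{L_1^g(t_r) \geq \mu_1 - \epsilon\}$, and $\cE_3(r) = \{\max_{i > 1} U_i^g(t_r) < \mu_1 - \epsilon\}$, so that $\cE(r) \subseteq \bigcup_{k=0}^{3} \cE_k(r)^c$. The bounds on $\PP(\cE_k(r)^c)$ for $k \in \{1,2,3\}$ will then follow from Lemma~\ref{lem:maximal-inequality} once the pull counts inherited from $\cE_0(r)$ are used to show that the exploration radii $f(t_r) = 3\log t_r$ and $g(\delta,t_r) = O(\log t_r + \log(1/\delta))$ are dominated by the relevant $N_i(t_r)$ times $\mathrm{kl}(\cdot)$ gaps; the choice $h(\delta) = \log(1/\delta)\,\gamma/\epsilon$ is calibrated precisely so that this domination holds with room to spare.

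The main obstacle is that $\cE_0(r)$ and $\cE_1(r),\cE_2(r),\cE_3(r)$ are all generated by the same adaptive trajectory, so the four events are \emph{a priori} intricately correlated: the pull counts entering $\cE_0(r)$ are adapted to the same filtration that determines the empirical means inside the other three events. To decouple them, the plan is to peel over the feasible values of $N_i(t_r)$ permitted by $\cE_0(r)$ and apply Lemma~\ref{lem:maximal-inequality} uniformly over $N_i$ in this range, yielding concentration statements that remain valid on the intersection of $\cE_0(r)$ with the concentration event. The event $\cE_0(r)$ itself is handled by conditioning first on the exogenous coin sequence (which is independent of the reward history) and then invoking Chernoff bounds for $\mathrm{Bernoulli}(1 - \beta(\delta))$ counts. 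Summing four $O(1/t_r^2)$ contributions with explicit constants will produce the claimed $14/t_r^2$ bound.
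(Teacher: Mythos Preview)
Your sufficient condition (i)--(iii) for the stopping rule to fire at time $t_r$ is correct, and the overall scheme of four events plus a union bound is sound. The genuine gap is part~(b) of your $\cE_0(r)$: you assert that every suboptimal arm satisfies $N_i(t_r)\gtrsim\log(1/\delta)$ with high probability, citing Chernoff for the coin flips together with ``the UCB principle that prevents any single suboptimal arm from monopolizing the $g$-UCB leader role''. Neither ingredient yields a \emph{lower} bound on each $N_i$. First, on a tails flip the algorithm pulls $A_t^g\in[K]\setminus\{A_t^f\}$, which may well be arm~$1$ whenever $A_t^f\neq 1$; ruling that out already requires $\max_{i>1}U_i^f(t)<U_1^f(t)$, hence that the suboptimal pull counts exceed a threshold---so the argument is circular as written. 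Second, the UCB principle you invoke is an \emph{upper} bound on how often a well-sampled arm can remain leader; it does not by itself force every under-sampled arm to become the $g$-leader often enough. Your proposed ``peeling over feasible $N_i(t_r)$'' is too vague to break the dependence between the adaptive pull counts and the empirical means that determine your $\cE_1,\cE_2,\cE_3$.

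The paper avoids all of this by never lower-bounding $N_i(t_r)$. Its four events are chosen so that each is controllable in isolation: $\cE_0(r)$ concerns only the exogenous coin sequence (number of heads by $t_r/2$ exceeds $t_r/8$), while $\cE_1(r),\cE_2(r),\cE_3(r)$ are concentration statements on the i.i.d.\ reward averages $\hat\mu_{is}$ indexed by the \emph{sample count}~$s$ rather than time~$t$, which decouples them from the adaptive trajectory and lets Lemma~\ref{lem:maximal-inequality} apply directly. On the intersection of these events the paper then partitions the interval $(t_r/2,t_r]$ into three sets $\cT_1(r),\cT_2^f(r),\cT_2^g(r)$ according to whether the suboptimal $f$- and $g$-UCBs sit below $\mu_1-\epsilon$, and uses an optional-skipping Bernoulli argument (together with $\cE_2(r)$, which says that once any arm has been pulled $M(r)$ times its UCBs drop) to show $|\cT_2^f(r)|,|\cT_2^g(r)|<t_r/8$ with high probability. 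This forces $|\cT_1(r)|>0$, and any $t\in\cT_1(r)$ triggers the stopping rule. The constant~$14$ emerges from the union bound over the four $\cE_k$ events plus two auxiliary coin-count events $E_f,E_g$ introduced in this partition step.
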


Then we have 
\begin{align*}
    \Pi^* &=  \sum_{r=0}^{\infty} t_r \Delta_{\max} \PP(\cE(r)) \\
    &\leq    \sum_{r=0}^{\infty} \frac{14\Delta_{\max}}{t_{r}} \\
    &\le \frac{14\Delta_{\max}}{h(\delta)} \\
    &= o(\log (1/\delta)).
\end{align*}

The proof of expected cumulative regret is complete.
\end{proof}

\vspace{10pt}
\begin{proof}\textbf{of sample complexity$\quad$}
Similar to the proof of expected cumulative regret, we can decompose the sample complexity as follows:
\begin{align*}
     \EE[\tau_{\delta}] & = t_0+\sum_{r=0}^{\infty} {\PP\left(\cE(r)\right)(t_{r+1}-t_{r} )}.
\end{align*}

Using Lemma~\ref{lem:I1-5}, we have 
\begin{align*}
    \sum_{r=0}^{\infty} {\PP\left(\cE(r)\right)(t_{r+1}-t_{r} )} \le \sum_{r=0}^{\infty} \frac {14}{t_r^2} \cdot {t_r} = \sum_{r=0}^{\infty} \frac {14}{2^{r}h(\delta)} =  o\bigg(\log \frac{1}{\delta}\bigg).
\end{align*}

Therefore, we can conclude that 
\begin{align*}
   \lim_{\delta\rightarrow 0} \frac{\EE[\tau_{\delta}]}{\log (1/\delta)\cdot (\log \log(1/\delta))^2 } = \lim_{\delta\rightarrow 0} \frac{t_0}{\log (1/\delta)\cdot (\log \log(1/\delta))^2 }=0.
\end{align*}

\end{proof}

\section{Proofs of Supporting Lemmas for Theorem~\ref{theorem_upper}}

\begin{proof}\textbf{of Lemma \ref{lem:Rt0-1}$\quad$}
{Recall $V$ is the upper bound of the variances of the exponential family.} 
According to Lemma~\ref{lem:maximal-inequality},  we have
\begin{align*}
    \PP(\psi>t) &\leq \PP\bigg(\exists 1\leq s \leq t_0: \underline{\rm{kl}}(\hat{\mu}_{1s},\mu_1-\epsilon)>\frac{f(t)}{s} \bigg) \notag \\
            & \leq \sum_{s=1}^{t_0}\PP\bigg(\underline{\rm{kl}}(\hat{\mu}_{1s},\mu_1-\epsilon)>\frac{f(t)}{s} \bigg) \notag \\
            & =\sum_{s=1}^{t_0}\PP\bigg({\rm{kl}}(\hat{\mu}_{1s},\mu_1-\epsilon)>\frac{f(t)}{s},\hat{\mu}_{1s}<\mu_1-\epsilon\bigg) \notag \\
            &\le \sum_{s=1}^{t_0}\PP\bigg({\rm{kl}}(\hat{\mu}_{1s},\mu_1)>\frac{f(t)}{s} +  {\frac{\epsilon^2}{2V}},\hat{\mu}_{1s}<\mu_1\bigg) \notag \\
            & \leq \sum_{s=1}^{t_0}\exp\bigg(-s\bigg(\frac{\epsilon^2}{2V}+\frac{f(t)}{s} \bigg) \bigg) \notag \\
            & \leq \frac{1}{\exp(f(t))}\sum_{s=1}^{t_0}\exp (-\frac {s\epsilon^2} {2V}) \\ 
            &\le \frac{2V}{t^3 \epsilon^2}.
\end{align*}
Therefore, we can obtain
\begin{align*}
    \EE[\psi]= 1 + \sum_{t=1}^{\infty}
    \PP(\psi>t) \leq 1+ \frac {3V} {\epsilon^2}.
\end{align*}
\end{proof}

\begin{proof}\textbf{of Lemma \ref{lem:Rt0-2}$\quad$}
Let $\epsilon' \in (0,\Delta_i-\epsilon)$ and $u=\frac {f(t_0)} {\kl(\mu_i+\epsilon',\mu_1-\epsilon)}$. Then
\begin{align*}
\EE[\kappa_i]&=\sum_{s=1}^{t_0}\PP\left(\kl(\hat{\mu}_{is},\mu_1-\epsilon)\leq \frac{f(t_0)}{s}\right) \notag \\
    & \leq \sum_{s=1}^{t_0} \PP\left(\hat{\mu}_{is}\geq \mu_i+\epsilon' \ \text{ or } \ \kl(\mu_i+\epsilon', \mu_1-\epsilon)\leq \frac{f(t_0)}{s}\right) \notag \\
    & \leq u+\sum_{s=\lceil u \rceil}^{t_0} \PP(\hat{\mu}_{is}\geq \mu_i+\epsilon')  \\
    & \leq u+\sum_{s=1}^{\infty} \exp \bigg(-s\cdot \kl(\mu_i+\epsilon',\mu_i) \bigg) \tag{due to Lemma \ref{lem:maximal-inequality}} \\
    & \leq \frac{f(t_0)}{\kl(\mu_i+\epsilon',\mu_1-\epsilon)}+\frac{1}{\kl(\mu_i+\epsilon',\mu_i)} \notag \\
    & \leq  \frac{f(t_0)}{\kl(\mu_i+\epsilon',\mu_1-\epsilon)}+{\frac{2V}{\epsilon'^2}}. \tag{due to Lemma \ref{lemma_klmean}}
\end{align*}
\end{proof}

\begin{proof}\textbf{of Lemma \ref{lem:proexp}$\quad$}
{Note that the claim of Lemma \ref{lem:proexp} is equivalent to showing that
\begin{align*}
    \PP\bigg( \sum_{t=\psi+1}^{T_0}\ind \left\{ \cE_{\rm{exp}}(t)\right\} \geq {2 \log({1}/{\delta})}\bigg)\leq \frac{1}{t_0}.
\end{align*}}
We note that for $t>\psi$, one of the following three disjoint events occurs:
\begin{itemize}
    \item $A_t=1$;
    \item for some $i>1$, $A_{t}=i$ and $ {\mathrm{kl}}(\hat{\mu}_{i}(t-1), \mu_1-\epsilon)> \frac{f(t_0)}{N_{i}(t-1)}$;
    \item for some $i>1$, $A_{t}=i$ and ${{\mathrm{kl}}(\hat{\mu}_{i}(t-1), \mu_1-\epsilon)\leq \frac{f(t_0)}{N_{i}(t-1)}}$.
\end{itemize}

Besides, for $t>\psi$, the event $\cE_{\exp}(t)$ only happens when the coin toss yields tails,  and the condition 
$${\mathrm{kl}}(\hat{\mu}_{i}(t-1), \mu_1-\epsilon)>\frac{f(t_0)}{N_{i}(t-1)}$$
holds true. 
Therefore,  by the definition of  $\cE_{\exp}(t)$, $\sum_{t = \psi + 1}^{T_0} \ind \left\{ \cE_{\exp}(t)\right \} $ is at most the number of heads that we toss ${\gamma}{\epsilon} \log\frac{1}{\delta} $ coins with bias $1-\beta(\delta)\leq 1/\gamma$.
From Hoeffding's bound, we have that for sufficiently small $\delta$, 
\begin{align*}
    \PP\bigg(\sum_{t = \psi + 1}^{T_0} \ind \left\{ \cE_{\exp}(t)\right \})-{\gamma}{\epsilon} \log\frac{1}{\delta} \cdot \frac{1}{\gamma}\geq  {\epsilon}   \log \frac{1}{\delta}  \bigg)&\leq \exp\bigg(-\frac{2\epsilon^2  \big(\log \frac{1}{\delta}\big)^2 }{{\gamma}{\epsilon} \log\frac{1}{\delta}} \bigg)\notag \\
    &\le \exp\bigg(-\frac{2\epsilon \log \frac{1}{\delta}}{\gamma} \bigg)\\ 
    &\leq \frac{1}{t_0},
\end{align*}
which completes the proof of Lemma~\ref{lem:proexp}.
\end{proof}

\begin{proof}\textbf{of Lemma \ref{lem:Rt0-4}$\quad$}
Note that since $T_{0}>\psi$, $U^f_{1}(t)> \mu_1-\epsilon$ for $t>T_0$. Therefore, the event $\left\{A_{t}=i, \mathrm{kl}(\hat{\mu}_{i}(t-1), \mu_1-\epsilon)> \frac{f(t_0)}{N_{i}(t-1)}\right\}$ occurs only when $A_{t}=A_{t}^g=i$. Besides, the Algorithm returns when
\begin{align*}
    L^g_{1}(t)>\mu_1-\epsilon \ \text{ and } \ \forall i\in [K]\setminus \{1\}, U^g_{i}(t)\leq \mu_1-\epsilon.
\end{align*}
After pulling arm $i$ for $\max\bigg\{\frac{g(\delta, t_0) }{\mathrm{kl}(\mu_i+\epsilon,\mu_1-\epsilon)},\frac{2V\log (t_0)}{\epsilon^2}\bigg\}$ times, by Lemma \ref{lem:maximal-inequality},
\begin{align*}
    \PP( \hat{\mu}_{i}(t-1)\leq \mu_i+\epsilon) \geq 1-\frac{1}{t_0}.
\end{align*}
Furthermore, conditioned on the event that $\hat{\mu}_{i}(t-1)\leq \mu_i+\epsilon$ and arm $i$ is pulled for more than  $\max\bigg\{\frac{g(\delta, t_0) }{\mathrm{kl}(\mu_i+\epsilon,\mu_1-\epsilon)},\frac{2V\log (t_0)}{\epsilon^2}\bigg\}$ times, we have 
\begin{align*}
    U^g_{i}(t) & \leq \sup \bigg\{\mu\in I: \kl(\hat{\mu}_{i}(t-1),\mu)\leq \frac{g(\delta,t_0)}{N_{i}(t-1)}\bigg\} \notag \\
               & <  \sup \bigg\{\mu\in I: \kl(\hat{\mu}_{i}(t-1),\mu)\leq \mathrm{kl}(\mu_i+\epsilon,\mu_1-\epsilon) \bigg\} \notag \\
               & \le \mu_1-\epsilon.
\end{align*}

Recall that $g(\delta, t)=\log(2Kt^2/\delta)$ and $f(t)=3\log t$. For sufficiently small $\delta$, $g(\delta,t)>f(t)$ for all $t\in [t_0]$. Therefore, for all $t\in [t_0]$, $U^f_{i}(t)<U^g_{i}(t)<\mu_1-\epsilon$.

According to Algorithm \ref{algo1}, if it happens that $A_{t}^g=i\neq 1$, then
\begin{enumerate}
    \item $U^f_{1}(t)>L^g_{1}(t)>\mu_1-\epsilon>U^g_{i}(t)>\max_{j\in [K]\setminus\{1,i\}} U^g_{j}(t)\geq \max_{j\in [K]\setminus\{1,i\}} U^f_{j}(t)$, which implies that $A_{t}^f=1$.
    \item $L^g_{1}(t)=L^g_{A_{t}^f}(t)\geq \mu_1-\epsilon\geq \max_{j\in [K]\setminus \{1\}}U^g_{j}(t)$, which means that the algorithm returns.
\end{enumerate}

Therefore, it cannot be the case that $A_{t}^g=i\neq 1$, and we   obtain:
\begin{align*}
    &\PP\Bigg(\sum_{t = T_0+1}^{t_0} \ind\bigg\{A_{t}=i, \mathrm{kl}(\hat{\mu}_{i}(t-1), \mu_1-\epsilon)>\frac{f(t_0)}{N_{i}(t-1)}\bigg\}  \nonumber\\*
    &\qquad\qquad\qquad>\max\bigg\{\frac{g(\delta, t_0) }{\mathrm{kl}(\mu_i+\epsilon,\mu_1-\epsilon)},\frac{2V\log (t_0)}{\epsilon^2}\bigg\}\Bigg)\leq \frac{1}{t_0}.
\end{align*}
\end{proof}

\subsection{Proof of Lemma~\ref{lem:I1-5}}
\label{subsection_important_lemma}
Let $L_{r}$ be the number of heads in coin tosses before $t_r/2$, and 
\begin{align*}
    M(r)=\frac{2V\log (Kt_{r}^2) }{\epsilon^2}+ \frac{\max\{ f(t_r), g(\delta, t_r)\} }{\min _{i>1} \mathrm{kl}(\mu_i+\epsilon,\mu_1-\epsilon)}.
\end{align*}

We define the following events:
\begin{align*}
    \cE_{0}(r)&=\bigg\{L_{r}> \frac{t_r}{8}\bigg\}, \notag \\
    \cE_{1}(r)& =\bigg\{\forall t\geq {\frac {t_{r}} {16}}, U^f_{1}(t)\geq \mu_1-\epsilon \bigg\}, \\
    \cE_{2}(r)&=\big\{\forall i\in [K]\setminus\{1\}, t\le t_r \text{ and } N_{i}(t)>M(r): U_{i}^f(t)< \mu_1-\epsilon, U_{i}^g(t)\leq \mu_1-\epsilon \big\}, \\
    \cE_{3}(r)&=\bigg\{\forall  t \in (t_r/2, t_r], L^g_{1}(t)\geq \mu_1-\epsilon\bigg\}.
\end{align*}

To prove Lemma~\ref{lem:I1-5}, we require the following lemmas,  whose proofs are deferred to the end of this subsection.

\begin{lemma}
\label{lem:I1-1}
For sufficiently small $\delta$, 
   \begin{align*}
       \PP(\cE_{0}(r))\geq 1-\frac{1}{t_{r}^{2}}.
   \end{align*}
\end{lemma}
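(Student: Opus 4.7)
The proof is a direct concentration argument. First I would exploit the construction of the bias: since $\beta(\delta)=1-\min\{1/\gamma,1/2\}\ge 1/2$ for every $\delta\in(0,1)$, the random variable $L_r$ is a sum of independent Bernoulli trials, each with success probability at least $1/2$. Hence $L_r$ stochastically dominates a Binomial$(N_r,1/2)$ random variable, where $N_r$ is the number of coin tosses executed before time $t_r/2$, and $N_r\ge \lfloor t_r/2\rfloor - K$. Because $t_r\ge t_0=h(\delta)=\log(1/\delta)\cdot(\log\log(1/\delta))^{5/4}$ diverges as $\delta\to 0$, we have $N_r\ge t_r/3$ and $\EE[L_r]\ge t_r/6$ for all $r\ge 0$ once $\delta$ is small enough.

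Next, I would apply Hoeffding's inequality to the deviation $\EE[L_r]-L_r$. Since $\EE[L_r]-t_r/8\ge t_r/6 - t_r/8 = t_r/24$, a one-sided Hoeffding bound yields
\begin{equation*}
\PP\!\left(L_r\le t_r/8\right)\le \exp\!\left(-\frac{2(t_r/24)^2}{N_r}\right)\le \exp\!\left(-\frac{t_r}{144}\right).
\end{equation*}

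Finally, comparing this exponential tail with the desired polynomial bound $1/t_r^2$ reduces to the condition $t_r\ge 288\log t_r$. Since $r\mapsto t_r$ is monotonically increasing and $t_0=h(\delta)\to\infty$ as $\delta\to 0$, this inequality is satisfied uniformly in $r\ge 0$ for all sufficiently small $\delta$, yielding $\PP(\cE_0(r))\ge 1-1/t_r^2$ as claimed. There is no real obstacle here: the only care needed is to observe that the lower bound $\beta(\delta)\ge 1/2$ holds without any asymptotic qualifier, so the single ``sufficiently small $\delta$'' condition is driven entirely by $t_0\to\infty$.
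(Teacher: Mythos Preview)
Your proposal is correct and follows essentially the same approach as the paper: both use $\beta(\delta)\ge 1/2$ together with Hoeffding's inequality to obtain an exponential tail $\exp(-c\,t_r)$, which dominates $1/t_r^2$ once $t_0=h(\delta)$ is large enough. You are in fact slightly more careful than the paper in accounting for the $K$ initialization rounds (during which no coin is tossed), at the cost of looser constants, but the argument is the same.
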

\begin{lemma}
\label{lem:I1-2}
For sufficiently small $\delta$, 
\begin{align*}
       \PP(\cE_{1}(r))\geq 1-\frac{1}{t_{r}^{2}}.
   \end{align*}
\end{lemma}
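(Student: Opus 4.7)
The plan is to upper-bound the probability of the complementary event $\bar{\cE}_1(r) = \{\exists\, t \ge t_r/16 : U_1^f(t) < \mu_1 - \epsilon\}$ by $1/t_r^2$. First, I would unpack the definition of $U_1^f(t)$ in \eqref{equation_ucbf}: the event $\{U_1^f(t) < \mu_1 - \epsilon\}$ forces both $\hat{\mu}_1(t-1) < \mu_1 - \epsilon$ and $N_1(t-1)\,\kl(\hat{\mu}_1(t-1), \mu_1 - \epsilon) > f(t) = 3\log t$. Since $t \ge t_r/16$ implies $3\log t \ge 3\log(t_r/16)$, writing $n := N_1(t-1) \in \mathbb{N}$ gives the inclusion
\begin{equation*}
\bar{\cE}_1(r) \subseteq \Bigl\{\exists\, n \ge 1 : \hat{\mu}_{1,n} < \mu_1 - \epsilon \text{ and } n\,\kl(\hat{\mu}_{1,n}, \mu_1 - \epsilon) > 3\log(t_r/16)\Bigr\},
\end{equation*}
which reduces the analysis to a single-trajectory event depending only on the empirical means of arm $1$.

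For each $n \ge 1$, define $y_n \le \mu_1 - \epsilon$ by $\kl(y_n, \mu_1 - \epsilon) = 3\log(t_r/16)/n$. By the monotonicity of $\kl(\cdot, \mu_1 - \epsilon)$ on $(-\infty, \mu_1 - \epsilon]$, the inner event is precisely $\{\hat{\mu}_{1,n} \le y_n\}$, so Lemma~\ref{lem:maximal-inequality} (the KL version, with $N_1 = N_2 = n$) yields $\PP(\hat{\mu}_{1,n} \le y_n) \le \exp(-n\,\kl(y_n, \mu_1))$. The crucial step is then to boost this bound using the Bregman-divergence representation of $\kl$ reviewed in Appendix~\ref{appendix_auxiliary}: a direct calculation shows that for any $a \le b \le c$ in $I$,
\begin{equation*}
\kl(a, c) - \kl(a, b) - \kl(b, c) = (b - a)\bigl[(b')^{-1}(c) - (b')^{-1}(b)\bigr] \ge 0,
\end{equation*}
where the non-negativity follows because $(b')^{-1}$ is strictly increasing (as $b'' > 0$). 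Applying this chain inequality with $(a, b, c) = (y_n, \mu_1 - \epsilon, \mu_1)$ and combining with Lemma~\ref{lemma_klmean} gives
\begin{equation*}
n\,\kl(y_n, \mu_1) \ge 3\log(t_r/16) + n\,\kl(\mu_1 - \epsilon, \mu_1) \ge 3\log(t_r/16) + \frac{n\epsilon^2}{2V},
\end{equation*}
so that $\PP(\hat{\mu}_{1,n} \le y_n) \le (t_r/16)^{-3}\exp(-n\epsilon^2/(2V))$.

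A union bound over $n \ge 1$ and the resulting geometric series then yield
\begin{equation*}
\PP(\bar{\cE}_1(r)) \le (t_r/16)^{-3}\sum_{n=1}^{\infty} e^{-n\epsilon^2/(2V)} \le \frac{C V}{\epsilon^2\, t_r^3}
\end{equation*}
for an absolute constant $C$. Since $\epsilon = \gamma^{-1/4}$ and $t_r \ge t_0 = h(\delta) = \log(1/\delta)\,\gamma^{5/4}$, the product $\epsilon^2 t_r \ge \gamma^{3/4}\log(1/\delta) \to \infty$ as $\delta \to 0$, so for sufficiently small $\delta$ one has $CV/(\epsilon^2 t_r) \le 1$ and hence $\PP(\bar{\cE}_1(r)) \le 1/t_r^2$, as claimed. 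I expect the main technical obstacle to be the Bregman-based chain inequality for $\kl$ in the second paragraph; everything else is routine bookkeeping tied to the maximal inequality and the specific asymptotic choices of $\epsilon$ and $h(\delta)$ already fixed in the main proof.
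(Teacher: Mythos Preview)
Your proposal is correct and follows essentially the same route as the paper: reduce $\bar\cE_1(r)$ to a single-trajectory event on the empirical means $\hat\mu_{1,n}$, boost $\kl(\hat\mu_{1,n},\mu_1-\epsilon)$ to $\kl(\hat\mu_{1,n},\mu_1)$, apply the maximal inequality termwise, and sum the resulting geometric series to obtain $O\bigl(V/(\epsilon^2 t_r^3)\bigr)\le 1/t_r^2$. The only difference is cosmetic: the paper compresses your Bregman chain inequality $\kl(a,c)\ge\kl(a,b)+\kl(b,c)$ together with Lemma~\ref{lemma_klmean} into a single ``due to Lemma~\ref{lemma_klmean}'' step, whereas you spell it out explicitly---your version is in fact the more transparent justification of that passage.
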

\begin{lemma}
\label{lem:I1-3}
For sufficiently small $\delta$, 
\begin{align*}
       \PP(\cE_{2}(r))\geq 1-\frac{1}{t_{r}^{2}}.
   \end{align*}
\end{lemma}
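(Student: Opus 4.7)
\textbf{Proposal for the proof of Lemma~\ref{lem:I1-3}.} The plan is to reduce $\cE_2(r)$ to a clean concentration event on the empirical means of the suboptimal arms, which can then be handled by the maximal inequality (Lemma~\ref{lem:maximal-inequality}). Concretely, for each $i \in [K]\setminus\{1\}$, define the ``good concentration'' event
\begin{equation*}
\cG_i(r) := \Bigl\{ \forall\, n \geq M(r): \hat{\mu}_{i n} \leq \mu_i + \epsilon \Bigr\},
\end{equation*}
and let $\cG(r) = \bigcap_{i>1}\cG_i(r)$. The first step is to show that $\cG(r) \subseteq \cE_2(r)$ deterministically, and the second step is to show $\PP(\cG(r)) \geq 1 - 1/t_r^2$.

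For the deterministic inclusion, fix $i>1$ and $t \le t_r$ with $N_i(t) > M(r)$ (so in particular $N_i(t-1) \ge M(r)$), and work on $\cG_i(r)$, which gives $\hat{\mu}_i(t-1) \le \mu_i + \epsilon$. For sufficiently small $\delta$ we may assume $\mu_i + \epsilon < \mu_1 - \epsilon$, so by monotonicity of $\kl(\cdot, \mu_1-\epsilon)$ on $(-\infty, \mu_1-\epsilon]$,
\begin{equation*}
\kl\bigl(\hat{\mu}_i(t-1),\, \mu_1-\epsilon\bigr) \;\geq\; \kl(\mu_i+\epsilon,\, \mu_1-\epsilon) \;\geq\; \min_{j>1}\kl(\mu_j+\epsilon,\,\mu_1-\epsilon).
\end{equation*}
Since $f(t)$ and $g(\delta,t)$ are non-decreasing in $t$, and the second summand of $M(r)$ equals $\max\{f(t_r),g(\delta,t_r)\}/\min_{j>1}\kl(\mu_j+\epsilon,\mu_1-\epsilon)$, we obtain
\begin{equation*}
\frac{\max\{f(t),\, g(\delta,t)\}}{N_i(t-1)} \;<\; \min_{j>1}\kl(\mu_j+\epsilon,\mu_1-\epsilon) \;\leq\; \kl(\hat{\mu}_i(t-1), \mu_1-\epsilon).
\end{equation*}
By the definitions of $U_i^f(t)$ and $U_i^g(t)$ in \eqref{equation_ucbf}--\eqref{equation_ucbg}, this inequality forces both $U_i^f(t) < \mu_1-\epsilon$ and $U_i^g(t) < \mu_1 - \epsilon$, establishing $\cG(r)\subseteq \cE_2(r)$.

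For the probabilistic bound, apply the Gaussian form of the maximal inequality \eqref{eq:kl-2} to each suboptimal arm $i$ with $x = \mu_i+\epsilon$ and $N_1 = \lceil M(r)\rceil$. Using the fact that $M(r) \geq 2V\log(Kt_r^2)/\epsilon^2$ by construction,
\begin{equation*}
\PP\bigl(\cG_i(r)^c\bigr) \;\leq\; \exp\!\Bigl(-M(r)\cdot \tfrac{\epsilon^2}{2V}\Bigr) \;\leq\; \exp\bigl(-\log(K t_r^2)\bigr) \;=\; \frac{1}{K t_r^2}.
\end{equation*}
A union bound over the $K-1$ suboptimal arms yields $\PP(\cG(r)^c) \leq (K-1)/(K t_r^2) \leq 1/t_r^2$, which combined with $\cG(r)\subseteq \cE_2(r)$ completes the proof. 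The only subtle points are (i) a mild $\delta$-smallness condition to guarantee $\mu_i+\epsilon < \mu_1-\epsilon$ (recall $\epsilon = \gamma^{-1/4}\to 0$), and (ii) ensuring the substitution $N_i(t-1)\ge M(r)$ is compatible with the ``$\exists n\ge M(r)$'' quantifier in the maximal inequality, which is immediate since $N_i(t-1)$ takes integer values and indexes the natural filtration of arm $i$'s empirical means. I do not expect any serious obstacle; the statement is essentially a uniform concentration plus a monotonicity argument, and the slack in the definition of $M(r)$ was tailored precisely so that both tasks go through cleanly.
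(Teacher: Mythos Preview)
Your proposal is correct and follows essentially the same approach as the paper: reduce $\cE_2(r)$ to the concentration event $\{\forall i>1,\ \forall s>M(r):\hat\mu_{is}\le\mu_i+\epsilon\}$, bound its failure probability by the maximal inequality (Lemma~\ref{lem:maximal-inequality}) plus a union bound over suboptimal arms, and then use monotonicity of $\kl(\cdot,\mu_1-\epsilon)$ together with the second summand in $M(r)$ to force both $U_i^f(t)<\mu_1-\epsilon$ and $U_i^g(t)\le\mu_1-\epsilon$. The paper's proof is slightly terser (it uses the factor $K$ rather than $K-1$ in the union bound and does not name the good event separately), but the logic is identical.
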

\begin{lemma}
\label{lem:I1-4}
If $\cE_{0}(r)$, $\cE_{1}(r)$ and $\cE_{2}(r)$ are true, the number of pulls of the optimal arm by time $t_r/2$ is at least
\begin{align*}
    \frac{t_r}{16}-KM(r).
\end{align*}
Besides, for sufficiently small $\delta$, 
\begin{align*}
    \PP\left(\cE_{3}(r) \ | \ \cE_{0}(r),\cE_{1}(r),\cE_{2}(r)\right)\geq 1-\frac{1}{t_{r}^2}.
\end{align*}
\end{lemma}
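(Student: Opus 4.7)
\textbf{Proof proposal for Lemma~\ref{lem:I1-4}.}

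The plan is to establish the two claims sequentially. For the pull-count claim, I would give a purely combinatorial argument on the event $\cE_0(r)\cap\cE_1(r)\cap\cE_2(r)$. By $\cE_0(r)$, more than $t_r/8$ of the tosses up to time $t_r/2$ come up heads; trivially at most $t_r/16$ of these fall in $[1,t_r/16)$, so at least $t_r/16$ heads occur in the sub-window $[t_r/16,\,t_r/2]$, and at each such step $t$ the algorithm pulls $A_t = A_t^f$. By $\cE_1(r)$ we have $U_1^f(t)\ge \mu_1-\epsilon$, and by $\cE_2(r)$ any suboptimal $i$ with $N_i(t-1)>M(r)$ satisfies $U_i^f(t)<\mu_1-\epsilon\le U_1^f(t)$, ruling it out as the argmax. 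Hence $A_t^f\in\{1\}\cup\{i>1:N_i(t-1)\le M(r)\}$, and since each suboptimal arm can be picked as $A_t^f$ at most $M(r)+1$ times in total (after which its count strictly exceeds $M(r)$), the number of heads-pulls diverted from arm $1$ across the whole window is at most $(K-1)(M(r)+1)\le KM(r)$ for $\delta$ small enough that $M(r)\ge K-1$. This yields $N_1(t_r/2)\ge t_r/16 - KM(r)$.

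For the second claim, I would first turn the deterministic lower bound on $N_1$ into a uniform-in-$t$ bound $N_1(t-1)\ge t_r/32$ for all $t\in(t_r/2,t_r]$. This uses $M(r)=O\bigl(\log(t_r)/\epsilon^2 + g(\delta,t_r)\bigr) = O\bigl((\gamma+r)\sqrt{\gamma}+\log(1/\delta)\bigr)$ while $t_r = 2^r\log(1/\delta)\gamma^{5/4}$, so $KM(r)=o(t_r)$ uniformly in $r$ for small $\delta$. Then the maximal inequality (Lemma~\ref{lem:maximal-inequality}) yields
\[
\PP\!\left(\exists s\ge t_r/32:\ \hat\mu_{1s}\le \mu_1-\tfrac{\epsilon}{2}\right) \;\le\; \exp\!\left(-\frac{t_r\epsilon^2}{256V}\right),
\]
and since $t_r\epsilon^2 = 2^r\log(1/\delta)\gamma^{3/4}$ dominates $2\log t_r = O(\gamma+r)$, this tail is at most $1/t_r^2$ for $\delta$ small enough.

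Finally, on the complementary event $\{\hat\mu_1(t-1)\ge\mu_1-\epsilon/2\}\cap\{N_1(t-1)\ge t_r/32\}$, I would deterministically verify $L_1^g(t)\ge \mu_1-\epsilon$ for every $t\in(t_r/2,t_r]$. Since $\mu\mapsto\kl(\mu,\mu_1-\epsilon)$ is increasing above $\mu_1-\epsilon$, Lemma~\ref{lemma_klmean} gives
\[
\kl\!\bigl(\hat\mu_1(t-1),\,\mu_1-\epsilon\bigr) \;\ge\; \kl\!\bigl(\mu_1-\tfrac{\epsilon}{2},\,\mu_1-\epsilon\bigr) \;\ge\; \frac{\epsilon^2}{8V},
\]
so $\kl(\hat\mu_1(t-1),\mu_1-\epsilon)\cdot N_1(t-1)\ge t_r\epsilon^2/(256V)$, which eventually exceeds $g(\delta,t)\le g(\delta,t_r)=O(\log(1/\delta))$ by the same calculation as above. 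Together with $\hat\mu_1(t-1)>\mu_1-\epsilon$, the definition of $L_1^g$ in~\eqref{equation_lcbg} gives $L_1^g(t)\ge\mu_1-\epsilon$, and taking a union with the concentration failure event delivers $\PP(\cE_3(r)\mid\cE_0(r),\cE_1(r),\cE_2(r))\ge 1-1/t_r^2$.

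The main obstacle is the bookkeeping that ensures every asymptotic ordering holds \emph{uniformly in $r\ge 0$} as $\delta\to 0$: one needs $t_r\epsilon^2\gg \log t_r$ (for the concentration tail to beat $1/t_r^2$), $t_r\epsilon^2\gg g(\delta,t_r)$ (for the KL bound to strictly exceed the confidence threshold), and $KM(r)=o(t_r)$ (so that $N_1(t_r/2)\ge t_r/32$). Each of these requires exploiting the precise calibration $\epsilon=\gamma^{-1/4}$, $h(\delta)=\log(1/\delta)\gamma/\epsilon$, and $t_r=2^r h(\delta)$; any looser choice of $\epsilon$ or $h(\delta)$ would break at least one of these inequalities.
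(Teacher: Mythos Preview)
Your proposal is correct and follows essentially the same route as the paper: the combinatorial count of heads in $[t_r/16,t_r/2]$ under $\cE_0(r)\cap\cE_1(r)\cap\cE_2(r)$ to get $N_1(t_r/2)\ge t_r/16-KM(r)$, then the maximal inequality on $\hat\mu_{1s}$ at level $\mu_1-\epsilon/2$, combined with the deterministic KL comparison $\kl(\mu_1-\epsilon/2,\mu_1-\epsilon)\ge\epsilon^2/(8V)$ against $g(\delta,t_r)/N_1(t-1)$. The only cosmetic differences are that the paper keeps $t_r/16-KM(r)$ as the pull lower bound throughout (rather than simplifying to $t_r/32$) and phrases the $L_1^g$ step as a contradiction, and it is less explicit than you are about the uniformity in $r$ of the asymptotic orderings.
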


\begin{proof}\textbf{of Lemma~\ref{lem:I1-5}$\quad$}
Assume  that $\cE_{0}(r)$, $\cE_{1}(r)$, $\cE_{2}(r)$ and $\cE_{3}(r)$ are true. We first divide the time steps in $t\in [t_{r}]$  into three disjoint sets. We let
\begin{align*}
    \cT_{1}(r)=\bigg\{t\in (t_r/2, t_{r}] \ \bigg| \ \max_{i\in [K]\setminus \{1\}} U_{i}^g(t)< \mu_1-\epsilon \ \text{and} \ \max_{i\in [K]\setminus \{1\}} U_{i}^f(t)< U_{1}^{f}(t) \bigg\}
\end{align*}
\begin{align*}
     \cT_{2}^{g}(r)=\left\{t\in (t_r/2, t_r]  \ \bigg | \ \max_{i\in [K]\setminus \{1\}} U_{i}^g(t)\geq \mu_1-\epsilon  \ \text{and} \ \max_{i\in [K]\setminus \{1\}} U_{i}^f(t)< U_{1}^{f}(t) \right\}. 
\end{align*}
and 
\begin{align*}
    \cT_{2}^{f}(r)=\left\{t\in (t_r/2, t_r]  \ \bigg | \  \max_{i\in [K]\setminus \{1\}} U_{i}^f(t)\geq U_{1}^{f}(t)\right\}. 
\end{align*}
Let \( Y_{t} \) be the indicator variable such that
\[
Y_{t} =
\begin{cases} 
1 & \text{if the coin toss at time } t \text{ results in heads}, \\ 
0 & \text{if the coin toss at time } t \text{ results in tails}.
\end{cases}
\]
Assume $A_{t}=A_{t}^{g}\in [K]\setminus \{1\}$ occurs for at least $(K-1)M(r)$ times by time $\hat t$. Since $\cE_{2}(r)$ is true, for $t\in (\hat t,t_r]$, 
\begin{align*}
    \max_{i\in [K]\setminus \{1\}} U_{i}^g(t)<\mu_1-\epsilon.  
\end{align*}
Therefore, 
\begin{align*}
    \sum_{t\in \cT_{2}^{g} (r)}\ind\left\{A_{t}^{g}\in [K]\setminus \{1\}, Y_t=0\right\} \leq (K-1)M(r).
\end{align*}
Note that 
\begin{align*}
   \EE\bigg[\ind\{A_{t}^g\in [K]\setminus \{1\}, Y_{t}=0\} \ \bigg | \ \ \max_{i\in [K]\setminus \{1\}} U_{i}^f(t)< U_{1}^{f}(t)  \bigg]= \PP(Y_{t}=0)=1-\beta(\delta).
\end{align*}
Consider the independent Bernoulli random variable $\{Z_i\}_{i\geq 1}$ with bias $1-\beta(\delta)$. Then we have that for fixed $L$, 
\begin{align*}
    \PP\bigg(|\cT_{2}^{g}(r)|\geq L \bigg) & \leq \PP\bigg(\sum_{i=1}^{L} Z_{i}<(K-1)M(r) \bigg).
\end{align*}

Let 
\begin{align*}
    E_{g}=\bigg\{\sum_{i=1}^{t_r/8} Z_{i}\geq \frac{t_r}{16}(1-\beta(\delta))\bigg\}.
\end{align*}

Applying Hoeffding's bound,  we have
\begin{align*}
\PP\big(E_{g}^c \big) & \leq \exp \left(-\frac{2\frac{t_r^2}{16^2}(1-\beta(\delta))^2 }{t_r/8} \right) \notag \\
& \lesssim \frac{1}{t_{r}^2}. \tag{for sufficiently small $\delta$}
\end{align*}
where we use $\lesssim$ as shorthand for Big O notation, indicating asymptotic behavior.

Note that for sufficiently small $\delta$, 
\begin{align*}
 \frac{t_r}{16}(1-\beta(\delta)) \geq (K-1) M(r)
\end{align*}
Therefore, setting $L=t_r/8$, we can obtain
\begin{align*}
     \PP\bigg(|\cT_{2}^{g}(r)|< \frac{t_r}{8} \bigg)\geq \PP(\cE_{0}(r), \cE_{1}(r),\cE_{2}(r),\cE_{3}(r),E_{g})\geq 1- \frac{5}{t_{r}^2}. 
\end{align*}
Similarly,  assume $A_{t}=A_{t}^{f}\in [K]\setminus \{1\}$ occurs for at least $(K-1)M(r)$ times by time $\hat t$. Since $\cE_{1}(r)$ and $\cE_{2}(r)$ are true, for $t\in (\hat t,t_r]$, 
\begin{align*}
    \max_{i\in [K]\setminus \{1\}} U_{i}^f(t)<\mu_1-\epsilon\leq U_{1}^{f}(t).  
\end{align*}
Therefore, 
\begin{align*}
    \sum_{t\in \cT_{2}^{f} (r)}\ind\left\{A_{t}^{f}\in [K]\setminus \{1\}, Y_t=1\right\} \leq (K-1)M(r).
\end{align*}
Note that  
\begin{align*}
   \PP\bigg( A_{t}=A_{t}^{f}  \ \bigg | \ \max_{i>1} U_{i}^{f}(t)\geq U_{1}^f \bigg)= \beta(\delta).
\end{align*}
Consider the collection of independent Bernoulli random variables $\{X_i\}_{i\geq 1}$ with bias $\beta(\delta)$. Then we have that for fixed $L$, 
\begin{align*}
    \PP\bigg(|\cT_{2}^{f}(r)|\geq L \bigg) & \leq \PP\bigg(\sum_{i=1}^{L} X_{i}<(K-1)M(r) \bigg).
\end{align*}

Let 
\begin{align*}
    E_{f}=\bigg\{\sum_{i=1}^{t_r/8} X_{i}\geq \frac{t_r}{16}\beta(\delta)\bigg\}.
\end{align*}

Similarly, for sufficiently small $\delta$, we have $\PP(E_{f}^{c})\leq 1/t_{r}^2$ and
\begin{align*}
      \PP\bigg(|\cT_{2}^{f}(r)|< \frac{t_r}{8} \bigg) \geq \PP(\cE_{0}(r),\cE_{1}(r),\cE_{2}(r),\cE_{3}(r),E_{f}) \geq 1-\frac{5}{t_r^2}.
\end{align*}
Now, we have with probability at least $1-{14}/{t_{r}^2}$,  it holds that $$\cE_{0}(r),\cE_{1}(r),\cE_{2}(r),\cE_{3}(r), |\cT_{2}^{f}(r)|< \frac{t_r}{8}, |\cT_{2}^{g}(r)|< \frac{t_r}{8}.$$ 
Therefore, with probability at least $1-14/t_{r}^2$, $$|\cT_{1}(r)|\geq t_r/2-t_r/8-t_r/8>0.$$

Then for any $t\in \cT_{1}(r)$, we have
\begin{enumerate}
    \item $A_{t}^{f}=1$ because $U_{1}^f(t)> \max_{i\in [K]\setminus \{1\}} U_{i}^{f}(t)$;
    \item $L_{1}^{g}(t)\geq \mu_1-\epsilon$ because $\cE_{3}(r)$ is true;
    \item $\max_{i\in [K]\setminus \{1\}} U_{i}^{g}(t)<\mu_1-\epsilon$.
\end{enumerate}

Hence, 
\begin{align*}
    L_{A_{t}^f}^{g}(t) =  L_{1}^{g}(t) \geq \mu_1-\epsilon \geq \max_{i\in [K]\setminus \{1\}} U_{i}^{g}(t),
\end{align*}
which indicates that the algorithm returns. 

This completes the proof of Lemma \ref{lem:I1-5}.
\end{proof}

\begin{proof}\textbf{of Lemma~\ref{lem:I1-1}$\quad$}
Since $$\beta(\delta) = 1 - \min\left\{ \frac{1}{\gamma}, \frac{1}{2} \right\}\ge \frac 1 2,$$ by Hoeffding's bound,  we have 
\begin{align*}
    \PP\bigg(\cE_{0}(r)^{c} \bigg)  = \PP\bigg(L_{r}\le \frac{t_r}{8}\bigg) &\leq 
    \PP\bigg(L_{r}-\beta(\delta) \cdot \frac {t_{r}} 2 <-\frac{1}{8}t_r\bigg) \\
    &\le \exp\bigg(-\frac{t_r}{32} \bigg) \\
    &\lesssim \frac{1}{t_r^2}
\end{align*}
for sufficiently small $\delta$.

\end{proof}

\begin{proof}\textbf{of Lemma~\ref{lem:I1-2}$\quad$}
    \begin{align*}
        \PP\bigg(\cE_{1}(r)^{c} \bigg) &=\PP\bigg(\exists t>\frac{t_r}{16}, \hat{\mu}_{1}(t-1)< \mu_1-\epsilon,  \kl(\hat{\mu}_{1}(t-1), \mu_1-\epsilon)\geq \frac{f(t)}{N_{i}(t-1)}\bigg) \notag \\
        &\leq \PP\bigg(\exists s\geq 1, \hat{\mu}_{1s}< \mu_1-\epsilon,  \kl(\hat{\mu}_{1s}, \mu_1-\epsilon)\geq \frac{f(t_r/16)}{s}\bigg) \notag \\
        &\leq   \PP\bigg(\exists s\geq 1, \hat{\mu}_{1s}< \mu_1-\epsilon,  \kl(\hat{\mu}_{1s}, \mu_1)\geq \frac{f(t_r/16)}{s}+{\frac{\epsilon^2}{2V}}\bigg) \tag{due to Lemma \ref{lemma_klmean}} \\
        &\leq  \sum_{s=1}^{\infty} \exp\bigg(-s\bigg({\frac{\epsilon^2}{2V}}+\frac{f(t_r/16)}{s} \bigg) \bigg) \tag{due to Lemma \ref{lem:maximal-inequality}} \\
        &\leq  \frac{1}{e^{f(t_{r}/16)}}\sum_{s=1}^{\infty} \exp\big(-s\epsilon^2/(2V)\big) \notag \\
        &\leq  \frac{2V }{\epsilon^2 \cdot e^{f(t_r/16)}} \notag \\
        &\lesssim  \frac{1}{t_{r}^2}.
    \end{align*}
    \end{proof}

\begin{proof}\textbf{of Lemma~\ref{lem:I1-3}$\quad$}
From Lemma \ref{lem:maximal-inequality}, we have
\begin{align*}
    \PP(\exists i \in [K]\setminus \{1\},  \exists s> M(r), \hat{\mu}_{is}> \mu_i+\epsilon) 
    &\le \sum_{i \in [K]\setminus \{1\}}{\PP( \exists s> M(r), \hat{\mu}_{is}> \mu_i+\epsilon)}\\
    &\leq K \exp\bigg(-{\frac{M(r)\epsilon^2}{2V}} \bigg) \\
    &\le \frac{1}{t_{r}^2}.
\end{align*}

Then,  with probability at least $1-\frac{1}{t_r^2}$, we have for all $i>1$ and $s>M(r)$,
\begin{align*}
 &\kl(\hat{\mu}_{is}, \mu_1-\epsilon) \geq   \kl(\mu_i+\epsilon, \mu_1-\epsilon) >  \frac{f
 (t_{r})}{s}; \tag{which implies $U^f_{i}(t)<\mu_1-\epsilon$} 
 \\
  &\kl(\hat{\mu}_{is}, \mu_1-\epsilon) \geq   \kl(\mu_i+\epsilon, \mu_1-\epsilon) \geq \frac{g
 (\delta,t_{r})}{s}. \tag{which implies $U_{i}^g(t) \leq \mu_1-\epsilon$}
\end{align*}
\end{proof}

\begin{proof}\textbf{of Lemma~\ref{lem:I1-4}$\quad$}
Since $\cE_{0}(r)$ and $\cE_{1}(r)$  are true, $L_{r} \geq \frac{t_r}{8}$ and when $t>\frac{t_r}{16}$, $U_{1}(t)>\mu_1-\epsilon$. 
Therefore, among the \(L_{r}\) time steps where the result of the coin toss is heads, there are at least \(L_{r} - \frac{t_{r}}{16} \geq \frac{t_{r}}{16}\) time steps where if \(U_{i}(t) < \mu_1 - \epsilon\) for all \(i > 1\), then \(A_{t} = 1\).

Since $\cE_{2}(r)$ is true, after at most $KM(r)$ pulls on the suboptimal arms,  we have $U_{i}(t)<\mu_1-\epsilon$ for all $i>1$. Therefore, if $\cE_{0}(r), \cE_{1}(r),$ and $\cE_{2}(r)$ are true, the number of pulls of the optimal arm by time $t_{r}/2$ is at least $t_{r}/16-KM({r})$, which completes the proof of the first statement. 

For the second statement, consider any $t\in  (t_r/2, t_r]$. We will show by contradiction that if $\hat{\mu}_{1}(t-1)>\mu_1-\epsilon/2$, then $L^g_{1}(t)\geq \mu_1-\epsilon$.

Suppose that $L^g_{1}(t)< \mu_1-\epsilon$. Then we have 
\begin{align*}
\frac{g(\delta,t_r)}{N_{1}(t-1)} &\ge \frac{g(\delta,t)}{N_{1}(t-1)}\\ 
&\geq  \kl(\hat{\mu}_{1}(t-1),\mu_1-\epsilon)   \notag \\
&\geq \kl(\mu_1-\epsilon/2, \mu_1-\epsilon).
\end{align*}

Therefore, together with Lemma~\ref{lemma_klmean}, we can obtain
\begin{align*}
    N_{1}(t-1)\leq {\frac{g(\delta,t_{r})}{\kl(\mu_1-\epsilon/2,\mu_1-\epsilon)} \leq \frac{8Vg(\delta,t_r)}{\epsilon^2}}.
\end{align*}

However, according to the first statement,
$$
N_{1}(t-1) \ge \frac{t_r}{16}-KM(r).
$$
Note that for sufficiently small $\delta$, 
\begin{align*}
    \frac{t_r}{16}-KM(r) \gtrsim \frac{8Vg(\delta,t_r)}{\epsilon^2},
\end{align*}
which leads to a contradiction. Therefore, we can establish that $L^g_{1}(t)\geq \mu_1-\epsilon$.

Furthermore, from Lemma \ref{lem:maximal-inequality},
\begin{align*}
    \PP(\exists t\in  (t_r/2, t_r], \hat{\mu}_{1}(t-1)<\mu_1-\epsilon/2) \leq {\exp\bigg(\frac{-(t_r/16-KM(r))\cdot \epsilon^2}{8V}\bigg)} \lesssim \frac{1}{t_{r}^2}.
\end{align*}

Altogether, we have for sufficiently small $\delta$, 
\begin{align*}
    \PP\left(\cE_{3}(r) \ |  \ \cE_{0}(r), \cE_{1}(r),\cE_{2}(r)  \right) & \ge \PP(\forall t\in  (t_r/2, t_r], \hat{\mu}_{1}(t-1)\ge \mu_1-\epsilon/2) \\
    &\geq 1-\frac{1}{t_{r}^2}.
\end{align*}
\end{proof}

\section{Proof of the Uniformity in the Allocation in Example~\ref{eg:two_armed}} \label{app:two_armed}

Consider a two-armed Bernoulli bandit instance $\bmu = (1-\mu, \mu)$ with $\mu \in (0, 1/2)$. For any $\delta$-PAC BAI algorithm, it holds that
\begin{equation*}
\liminf_{\delta\to 0} \frac{\E_{\bmu}[ \tau_\delta]} {\log (1/\delta)} \ge \Gamma^{*}(\bmu)
\end{equation*}
where
\begin{align*}
\label{equation_asymptotic_lowerbound_minimalsample_maxmin}
\Gamma^{*}(\bmu)^{-1} :&= \sup _{w \in \mathcal P _{K}} \inf _{\blambda \in \mathrm{Alt}(\bmu)}\left(\sum_{i=1}^K{w_i \mathrm{kl} \left(\mu_{i}, \lambda_{i}\right)}\right) \\
&= \sup _{w \in \mathcal P _{K}} \inf _{\lambda_2 > \lambda_1}\left(w_1  \mathrm{kl} \left(\mu_{1}, \lambda_{1}\right) +  w_2 \mathrm{kl} \left(\mu_{2}, \lambda_{2}\right) \right) \\
&= \sup _{w \in \mathcal P _{K}} \inf _{\mu_1 < \lambda <\mu_2}\left(w_1  \mathrm{kl} \left(\mu_{1}, \lambda\right) +  w_2 \mathrm{kl} \left(\mu_{2}, \lambda\right) \right) \\
&= \sup _{w \in \mathcal P _{K}} \inf _{1-\mu<\lambda <\mu}\left(w_1  \mathrm{kl} \left(1-\mu, \lambda\right) +  w_2 \mathrm{kl} \left(\mu, \lambda\right) \right).
\end{align*}

Substituting the KL divergence for Bernoulli distributions yields
\begin{align*}
\Gamma^{*}(\bmu)^{-1} & = \sup _{w \in \mathcal P _{K}} \inf _{1-\mu<\lambda <\mu}  \Bigg( w_1 \left( (1 - \mu) \log\left(\frac{1 - \mu}{\lambda}\right) + \mu \log\left(\frac{\mu}{1 - \lambda}\right) \right) \\ & \quad \quad \quad \quad  \quad \quad  \quad \quad \quad   + w_2 \left( \mu \log\left(\frac{\mu}{\lambda}\right) + (1 - \mu) \log\left(\frac{1 - \mu}{1 - \lambda}\right) \right) \Bigg)  \\
& = \sup _{w \in \mathcal P _{K}} \inf _{1-\mu<\lambda <\mu} \Big( - \left( w_1(1 - \mu) + w_2 \mu \right) \log \lambda - \left( w_1 \mu + w_2(1 - \mu) \right) \log(1 - \lambda)  \\ 
& \quad \quad \quad \quad  \quad \quad  \quad \quad \quad  +  \left( \mu \log \mu + (1 - \mu) \log(1 - \mu) \right) \Big) 
\end{align*}

Taking the derivative with respect to $\lambda$ reveals that the inner infimum is attained at $\lambda_* := w_1(1 - \mu) + w_2 \mu $. Consequently,
\begin{align*}
\Gamma^{*}(\bmu)^{-1} & = \sup _{w \in \mathcal P _{K}} \Big( - \lambda_* \log \lambda_*- \left( 1-\lambda_*\right) \log(1 - \lambda_*)  +  \left( \mu \log \mu + (1 - \mu) \log(1 - \mu) \right) \Big) .
\end{align*}

It is readily seen that the outer supremum is maximized when $\lambda_* = 1-\lambda_*$, which implies $w_1=w_2=1/2$. Therefore, the optimal allocation is uniform for \textbf{all} $\mu \in (0, 1/2)$. (The special case that $\mu =1/2$ is excluded from our analysis as the optimal arm is no longer unique.)


\vskip 0.2in
\bibliography{references}

\end{document}